\theoremstyle{plain}
\newtheorem{theorem}{Theorem}[section]
\newtheorem{proposition}[theorem]{Proposition}
\newtheorem{lemma}[theorem]{Lemma}
\theoremstyle{definition}
\newtheorem{assumption}[theorem]{Assumption}
\theoremstyle{remark}
\newtheorem{remark}[theorem]{Remark}
\DeclareRobustCommand\optionalsec[1]{%
  \ifnum\pdfstrcmp{#1}{\thesection}=0\else#1.\fi
}
\newcommand{\indep}{\raisebox{0.05em}{\rotatebox[origin=c]{90}{$\models$}}}
\icmltitlerunning{Meta-Learners for Multi-Valued Heterogeneous Effects}
\begin{document}

\twocolumn[
\icmltitle{Comparison of meta-learners for estimating multi-valued treatment heterogeneous effects}



\icmlsetsymbol{equal}{†}
\icmlsetsymbol{namr}{*}

\begin{icmlauthorlist}
\icmlauthor{Naoufal Acharki†}{yyy,comp}
\icmlauthor{Ramiro Lugo‡}{comp}
\icmlauthor{Antoine Bertoncello}{comp}
\icmlauthor{Josselin Garnier}{yyy}
\end{icmlauthorlist}

\icmlaffiliation{yyy}{CMAP, Ecole polytechnique, Institut Polytechnique de Paris, Palaiseau, France}
\icmlaffiliation{comp}{TotalEnergies One Tech, Palaiseau, France}

\icmlcorrespondingauthor{Naoufal Acharki}{naoufal.acharki@gmail.com}

\icmlkeywords{Machine Learning, Causal Inference, Heterogeneous Effects, Multi-valued treatments}

\vskip 0.3in
]



\printAffiliationsAndNotice{} 

\begin{abstract}
Conditional Average Treatment Effects (CATE) estimation is one of the main challenges in causal inference with observational data. In addition to Machine Learning based-models, nonparametric estimators called \textit{meta-learners} have been developed to estimate the CATE with the main advantage of not restraining the estimation to a specific supervised learning method. This task becomes, however, more complicated when the treatment is not binary as some limitations of the naive extensions emerge. This paper looks into meta-learners for estimating the heterogeneous effects of multi-valued treatments. We consider different meta-learners, and we carry out a theoretical analysis of their error upper bounds as functions of important parameters such as the number of treatment levels, showing that the naive extensions do not always provide satisfactory results. We introduce and discuss meta-learners that perform well as the number of treatments increases. We empirically confirm the strengths and weaknesses of those methods with synthetic and semi-synthetic datasets.
\end{abstract}

\section{Introduction}
\label{introduction}

\vspace{0.08in}

With the rapid development of Machine Learning (ML) and its efficiency in predicting outcomes, the question of counterfactual prediction \textit{"what would happen if ?"} arises. Engineers and specialists want to know how the outcome would be affected after an intervention on a parameter. It will help them personalize the parameter at efficient levels and optimize the outcome. Recently, much effort has been devoted to supervised ML to find the optimal intervention strategy. Yet, the results are not always convincing. These models cannot distinguish between correlations and causal relationships in the data \citep{Pearl2019}.

Based on the Potential Outcomes theory \citep{Neyman1923, Rubin1974}, epidemiologists and statisticians have developed a set of tools that reduce the inference of causal effects to statistical inference under certain assumptions about, for example, the data-generating process. They have been successfully applied in many fields such as medicine \citep{ALaaGP2017}, economics \citep{Knaus_2020}, public policy \citep{Imai_strauss_2011} and marketing \citep{Diemert2018} to infer causal effects. However, most of these studies are limited to a binary treatment setting whereas many causal questions in real-world cases are not binary. It would be helpful to give an in-depth analysis of the impact of the treatment across its possible levels instead of just considering a binary scenario where the treatment is either assigned or not. In addition, the heterogeneity of effects may provide valuable information regarding this treatment's effectiveness and help users personalize their intervention policies and strategies. 

The problem of Causal Inference beyond binary treatment settings is gaining attention from the Causal Inference community. There are two major challenges: Firstly, the lack of the ground truth due to the fundamental problem of causal inference \citep{Holland1986} makes Heterogeneous Treatment Effects estimation more challenging \citep{Alaa18} as standard metrics can not be used to assess performances. Secondly, binarizing the multi-valued treatment setting leads to a violation of the Stable Unit Treatment Value Assumption (SUTVA) as it violates the principle of "no hidden variations of treatment". It may yield a bias known as position bias in recommendation systems \citep{Chen2020,Wu2022}. It happens when some units tend to have/select high treatment values and, therefore, there is a hidden variation of the treatment that is not taken into consideration when one attempt to binarize the treatment. From a statistical point of view, this bias was established by \citet{Heiler2021} who show that the binarization of multi-valued treatments does not disassociate the heterogeneity of the treatment from the heterogeneity of the effects of each value.

The extension of the binary setting does not seem trivial as several versions are possible and turn out to have different behaviours. Moreover, -- to the best of our knowledge -- there is no result so far on the impact of the number of possible treatments on the performances of heterogeneous treatment effect (nonparametric or ML-based) estimators. 

In this paper, we study the problem of nonparametric estimation of Heterogeneous Treatment Effects, also known as Conditional Average Treatment Effects (CATEs), for multi-valued treatments. We consider nonparametric estimators, also referred to as \textit{meta-learners} \citep{Kunzel_2019} or \textit{model-agnostic algorithms} \citep{Curth21a}. We put our main focus on discussing the theoretical properties of meta-learners for estimating CATEs. Finally, along the lines of \citet{Curth21a}, we consider it significant to draw strengths and weaknesses theoretically and compare scenarios in which some methods would perform better than others.


\paragraph{Contributions.}

The paper considers meta-learners for multi-valued treatments. First, we generalize \textit{meta-learners} to the multi-treatment setting for CATE estimation. We overview, in particular, Debiased Machine Learning (DML) estimators in observational studies and we establish a new version for the X-learner based on regression adjustment for multi-valued treatments. We also highlight the multi-treatments R-learner's main drawbacks. Second, and this is the major contribution of the paper, we conduct a theoretical analysis of the proposed meta-learners for multiple treatments based on an asymptotic bias-variance analysis (see \citet{Silverman1986} for an example of this analysis for kernel density estimation). We analyze the biases and upper bounds on errors of the M-, DR-, X-learners as well as the T- and naive X-learners. Thanks to this analysis, we can identify the effect of the number of possible treatment levels, in addition to other parameters such as the propensity score lower bound and the outcome model estimation error. This approach is different from what has been conducted in binary treatment with the minimax approach \citep{Kunzel_2019, Curth21a, Kennedy2020} as it allows a direct analysis of the roles of the important parameters (e.g. the impact of the number of possible treatments $K$) instead of relying on the smoothness of treatment effects. Following this analysis, we present some key points about the expected performances of each meta-learner then we present a summarizing table of our findings. We note also that our analysis sheds new light on binary meta-learners' performances as it also clarifies the influence of the sampling probability for both T- and naive X-learners.

\section{Related work}
\label{sec:2}

\vspace{0.08in}

\textbf{Meta-learners for CATEs estimation.} The recent interest in the CATE's estimation has motivated the Causal Inference community to develop numerous algorithms and methods (see \citet{caron2021ITE} for a review). This includes a wide variety of statistical and ensemble methods \citep{Hill2011, Athey2016, ALaaGP2017, Wager2018, Powers2018, Hahn2020, Caron2022b} as well as neural networks \citep{Johansson16,Shalit2017,Yoon2018, Shi2019} (see \citet{Dorie2019} for a review of -hybrid- ML models for causal inference). In contrast, some methods, known as \textit{meta-learners}, are nonparametric and do not require a specific ML method. The theory of \textit{meta-learners} was initially introduced and discussed by \citet{Kunzel_2019} for the CATE estimation in the binary setting with three meta-learners: the S-learner, the T-learner (which use either a \textit{Single} or \textit{Two} models) and the X-learner. Later, \citet{Kennedy2020} proposes the DR-learner (Doubly-Robust) to overcome the problem of model misspecifications when estimating nuisance functions (e.g. the propensity score and outcome models). \citet{Wager2020} present the R-learner that estimates the CATE by minimizing an orthogonalized loss function. \citet{Curth21a} consider the PW-learner (Propensity Weighting, also known as the M-learner) and the RA-learner (Regression-Adjustment), which is an improved version of the X-learner. They show that, under some conditions, the RA-, PW-, and DR-learners can attain the oracle convergence rate.

\textbf{Multiple and continuous treatments.} Recently, there has also been an increased interest in causal inference with multi-valued and continuous treatments. The theoretical work of \citet{Imbens2000, Lechner2001, Frlich2002ProgrammeEW, ImaiVanDyk2004} extended the potential outcome framework and the propensity score to the non-binary treatment setting, including also continuous treatments. The average dose-response estimation was considered and successfully applied in many domains in medicine and economics \citep{Dominici2002, Flores2007, Kallus2017, Saini2019, Lin2019, Hu2020, Knaus2022}. Additionally, \citet{Colangelo2020double} apply doubly debiased machine learning methods to dose-response modelling with continuous treatment. The CATE's estimation, however, remains less prominently studied in the literature. \citet{Hill2011} (briefly) and \citet{Hu2020} consider Bayesian additive regression trees for the estimation of counterfactual response and causal effects. An extension of GRF to multi-valued treatments is developed by \citet{grf2022} (which can be seen as an M-learner with multi-valued treatments). \citet{Schwab2020, Harada2020, Nie2021} and \citet{Kaddour2021} applied neural networks and representations learning to estimate counterfactual response curves for multiple continuous treatments (more precisely for graph-structured treatments). \citet{Zhao2019} naively extended binary meta-learners (X- and R-learners) without any theoretical analysis of their behaviour. 

The remainder of the paper is structured as follows. Section \ref{sec:3} presents the CATE estimation for multi-treatments. In Section \ref{sec:4} we introduce CATE nonparametric estimators (meta-learners) and we discuss their consistency.  In Section \ref{sec:5}, we establish the theoretical analysis of meta-learners' error bounds and provide some discussions. We present numerical experiments and results in Section \ref{sec:6}. Finally, we present our conclusion in Section \ref{sec:7}.

\section{Problem setting}
\label{sec:3}

To address the problem of causal inference under multiple treatments, we follow the  \citeauthor{Rubin1974}-\citeauthor{Neyman1923} model as extended by \citet{Imbens2000, Lechner2001, ImaiVanDyk2004} and we consider the following statistical problem.

We suppose the existence of $Y(t)$, the real-valued counterfactual outcome that would have been observed under treatment level $t \in \mathcal{T}= \{t_0,t_1,\ldots, t_{K}\}$. We consider $(\boldsymbol{X},T,Y(t)_{t \in {\cal T}}) \sim \mathbb{P}$ where $\boldsymbol{X}=(X^{(1)},\ldots,X^{(d)}) \in  \mathcal{D} \subseteq \mathbb{R}^d$ denotes a random vector of covariates and $T$ denotes the treatment assignment random variable. We suppose finally that we observe data that has the form of an independent and identically distributed sample of $n$ units $\mathbf{D}_{\mathrm{obs}}=({D}_{{\mathrm{obs}},i})_{i=1}^n$ where ${D}_{{\mathrm{obs}},i}=\left(\boldsymbol{X}_i, T_i, Y_{\mathrm{obs},i} \right)$ is distributed as $\left(\boldsymbol{X}, T, Y_{\mathrm{obs}} \right)$ and $Y_{\mathrm{obs}}=Y(T)$ (consistency assumption). We define the Generalized Propensity Score (GPS) $r(t,\boldsymbol{x}) := \mathbb{P}(T=t|\boldsymbol{X} = \boldsymbol{x})$ \citep{Imbens2000} as the generalization of the classical Propensity Score with the same balancing propriety \citep{RR1983} to remove \textit{selection bias} in observational studies.

We aim to infer the effect of the treatment $T$ on the outcome $Y$. More precisely, we want to estimate the CATE between two levels of treatment $t_k$ and $t_0$, defined as
\begin{equation}
\label{def:CATEMult}
    \tau_{k}(\boldsymbol{x}) = \mathbb{E}[Y(t_k)-Y(t_0) | \boldsymbol{X} = \boldsymbol{x}],
\end{equation}
which can be interpreted as the expected treatment effect between levels $t_0$ (defined as the baseline treatment value) and $t_k$ given covariates $\boldsymbol{X} = \boldsymbol{x}$. Note that other definitions and alternatives of the CATE are possible \citep{Kaddour2021}.

Unfortunately, it is impossible to infer this effect directly. We observe only one potential outcome corresponding to the treatment $T$ (i.e. the real outcome) for every unit. All other potential outcomes are missing (inherently unobservable). Consequently, to identify the causal effects from the observed sample data, we shall consider the following assumptions (Assumption \ref{assump:unconfound} is unfortunately untestable).


\begin{assumption}[Unconfoundedness]
\label{assump:unconfound}
    The treatment mechanism is unconfounded given the observed covariates  $Y(t) \ \indep \ {\bf{1}} \{T=t\} \ \mid \ \boldsymbol{X}$ for all $t \in \mathcal{T}$.
\end{assumption}

\begin{assumption}[Overlap]
\label{assump:overlap}
    The probability of receiving the treatment given the observed covariates is positive i.e. there exists $r_{\mathrm{min}}>0$ such that $ r_{\mathrm{min}} \leq \mathbb{P}(T=t|\boldsymbol{X} = \boldsymbol{x})$ for all $t \in \mathcal{T}$ and $\boldsymbol{x} \in \mathcal{D}$.
\end{assumption}

With the previous assumptions, the expected potential outcome satisfies $\mathbb{E}(Y(t) \mid \boldsymbol{X} = \boldsymbol{x}) = \mathbb{E}(Y_{\mathrm{obs}} \mid T = t, \boldsymbol{X} = \boldsymbol{x})$ and the CATE can be estimated.

The problem of the CATE estimation can be seen as a non-parametric estimation problem. We tackle it by generalizing the notion of meta-learners to derive consistent estimators. This task can be achieved either by modelling the CATE directly in one or two steps: by decomposing it into regularized regression problems or by addressing a minimization problem with respect to an appropriate loss function. Moreover, all considered meta-learners below, except the R-learner, have the advantage of supporting any supervised regression method (e.g. random forest, gradient boosting methods, neural networks). 

\section{Generalizing meta-learners to multi-valued treatments}
\label{sec:4}

\vspace{0.08in}

In the following, we adopt a similar taxonomy of CATEs estimators as \citet{Curth21a} and \citet{Knaus2020}. Namely, direct plug-in (one-step) meta-learners, pseudo-outcome (two-step) meta-learners and Neyman-Orthogonality-based learners (R-learner).

\subsection{Direct plug-in meta-learners}

\vspace{0.08in}

This subsection presents direct plug-in meta-learners, also known as \textit{one-step} learners. They estimate the CATE in (\ref{def:CATEMult}) by targeting it directly from $\mathbf{D}_{\mathrm{obs}}$. They are the naive extension of the T- and S-learners in the binary case.

\paragraph{T-learner with multiple treatments.} 

T-learner is a naive approach to estimating CATEs. It consists on estimating the \textit{two} conditional response surfaces $\mu_t(\boldsymbol{x})=\mathbb{E}(Y(t) \mid \boldsymbol{X} = \boldsymbol{x})$ using $\mathbf{S}_t=\{i, T_i=t \}$ for $t \in \{t_k,t_0\}$, then estimates the CATE as  ${\widehat{\tau}}^{\mathrm{(T)}}_{k}(\boldsymbol{x}) := {\widehat{\mu}}_{t_k}(\boldsymbol{x}) - {\widehat{\mu}}_{t_0}(\boldsymbol{x})$.

The T-learning approach does not account for the interaction between $T$ and $Y$ and creates different models for different treatments. Still, it may suffer from selection bias \citep{Curth21b}, i.e. when the outcome models $\mu_t$ are estimated with respect to the wrong distribution when sampling $(D_{{\mathrm{obs}},i})_{i \in  \mathbf{S}_t}$. Therefore, $\widehat{\mu}_{t}$ should be estimated by minimizing the expected squared error on the nominal \textit{weighted} distribution using Importance Sampling \citep{Hassanpour2019}; see Appendix \ref{App:proofsA} for details.

\paragraph{S-learner with multiple treatments.} 

The S-learner in multi-valued treatments uses the identification of the CATE and considers a \textit{single} model $\mu(t, \boldsymbol{x}) =  \mathbb{E}(Y_{\mathrm{obs}} \mid T = t, \boldsymbol{X} = \boldsymbol{x})$. $\mu$ is estimated using the whole dataset $\mathbf{D}_{\mathrm{obs}}$. The CATE can be computed therefore as ${\widehat{\tau}}^{\mathrm{(S)}}_{k}(\boldsymbol{x}) := {\widehat{\mu}}(\boldsymbol{x}, t_k) - {\widehat{\mu}}(\boldsymbol{x}, t_0)$.

Including the treatment $T$ as a feature and sharing information between covariates $\boldsymbol{X}$ and $T$ may provide better predictions. 
However, this advantage is conditioned by the ability of the base learner to capture and distinguish contributions of both $\boldsymbol{X}$ and $T$ on $Y_{\mathrm{obs}}$ as we will see in Section \ref{sec:5}. Note that the S-learner may also suffer from confounding and regularization biases \citep{Chernozhukov2018, Hahn2020} when estimating $\widehat{\mu}$.

\subsection{Pseudo-outcome meta-learners}

To overcome \textit{selection bias}, a usual alternative is to consider specific representations of the observed outcome $Y_{\mathrm{obs}}$, called \textit{pseudo-outcome} $Z_k$. They incorporate \textit{nuisance components} that generally include valuable information such as the dependence between covariates $\boldsymbol{X}$ and $T$ (i.e. the GPS $r$) and the occurrence of a particular treatment assignment. Under the \textit{well-specification} of nuisance components, regressing $Z_k$ on $\boldsymbol{X}$ produces a \textit{consistent} estimator i.e. $\mathbb{E}( Z_k \mid \boldsymbol{X} = \boldsymbol{x}) = \tau_k(\boldsymbol{x})$ while keeping the same sample size as $\mathbf{D}_{\mathrm{obs}}$. In the following, we say that an estimator  ($\widehat{\mu}$ or $\widehat{r}$) is \textit{well-specified} if it is based on a well-specified statistical model, that is, the class of distributions assumed for modelling contains the unknown probability distribution from which the sample used for estimation is drawn.

\paragraph{M-learner with multiple treatments.}

The \textit{M-learner} \citep{HorvitzThompson1952}, where M refers to the \textit{modified} learned pseudo-outcome in the algorithm, is inspired from the Inverse Propensity Weighting (IPW). It is defined in the multi-valued setting, for $k=1,\ldots,K$, as the regression of $Z^{M}_{k}$ such that
\begin{equation*}
\label{eq:Z_M}
    Z^{M}_{k} = \frac{\mathbf{1} \{T=t_k\}}{\widehat{r}(t_k,\boldsymbol{X})} Y_{\mathrm{obs}} - \frac{\mathbf{1} \{T=t_0\}}{\widehat{r}(t_0,\boldsymbol{X})} Y_{\mathrm{obs}},
\end{equation*}
where $\widehat{r}$ is an estimator of the GPS $r$.

\paragraph{DR-learner with multiple treatments.}

The \textit{Doubly Robust} (DR) method \citep{RobinsDR1994,Kennedy2017,Kennedy2020} is helpful in overcoming the problem of the model's misspecification by estimating two components, the outcome model $\mu_{\cdot}$ and the GPS $r$, instead of relying on the correctness of one (and the only) parameter. If $\widehat{\mu}$ and $\widehat{r}$ denote arbitrary estimators of the outcome $\mu$ and the GPS $r$ (we assume that $\widehat{r}$ satisfies Assumption \ref{assump:overlap}), then the DR-learner regresses $Z^{DR}_{\widehat{\mu},\widehat{r},k}$ such that: 
\begin{equation*}
    \label{eq:Z_DR}
    \begin{aligned}
        Z^{DR}_{\widehat{\mu},\widehat{r},k} = &\frac{Y_{\mathrm{obs}} - \widehat{\mu}_T(\boldsymbol{X}) }{\widehat{r}(t_k,\boldsymbol{X})} \mathbf{1} \{T=t_k\} + \widehat{\mu}_{t_k}(\boldsymbol{X})  \\ &  - \frac{Y_{\mathrm{obs}} - \widehat{\mu}_T(\boldsymbol{X}) }{\widehat{r}(t_0,\boldsymbol{X})} \mathbf{1} \{T=t_0\} - \widehat{\mu}_{t_0}(\boldsymbol{X}).
    \end{aligned}
\end{equation*}

\paragraph{X-learner with multiple treatments.}

The X-learner \citep{Kunzel_2019}, also known as \textit{Regression-Adjustment} (RA)-learning in a version developed by \citet{Curth21a}, has been proposed as an alternative to T-learning in the case where one treatment group is over-represented. The idea consists of a \textit{cross} procedure of estimation between observations $Y_{\mathrm{obs}}$ and outcome models when one of the treatments occurs. For $k=1,\ldots,K$, we define the \textit{Regression-Adjustment} pseudo-outcome $Z_k^X$ as 
\begin{equation*}
\label{eq:Z_X}
    \begin{aligned}
    Z_{k}^X &= \mathbf{1} \{T=t_k\}(Y_{\mathrm{obs}}-\widehat{\mu}_{t_0}(\boldsymbol{X})) + \sum_{l \neq k}\mathbf{1} \{T=t_l\} \times \\ &(\widehat{\mu}_{t_k}(\boldsymbol{X})- Y_{\mathrm{obs}}) + \sum_{l \neq k}\mathbf{1} \{T=t_l\} (\widehat{\mu}_{t_l}(\boldsymbol{X})  - \widehat{\mu}_{t_0}(\boldsymbol{X})).
    \end{aligned}
\end{equation*}

For comparison purposes, we consider also the naive extension of the binary X-learner \citep{Kunzel_2019} to multi-treatments as proposed by \citet{Zhao2019}. This extension considers two random variables ${D}^{(k)} := Y(t_k) - \widehat{\mu}_{t_0}(\boldsymbol{X})$ and ${D}^{(0)} := \widehat{\mu}_{t_k}(\boldsymbol{X}) - Y(t_0)$ where $\widehat{\mu}_{t_k}$ and $\widehat{\mu}_{t_0}$ are trained on the samples $\mathbf{S}_{t_k}$ and $\mathbf{S}_{t_0}$. Then it regresses $({D}_i^{(k)})_{i \in \mathbf{S}_{t_k}}$ and $({D}_i^{(0)})_{i \in \mathbf{S}_{t_0}}$ on $\boldsymbol{X}$ to obtain ${\widehat{\tau}}^{(k)}$ and ${\widehat{\tau}}^{(0)}$, and estimates the CATE as:
\begin{equation*}
\label{eq:Z_nvX}
    \begin{aligned}
        {\widehat{\tau}}^{\mathrm{(X,nv)}}_{k}(\boldsymbol{x}) &:= \frac{\widehat{r}(t_k,\boldsymbol{x})}{\widehat{r}(t_k,\boldsymbol{x}) + \widehat{r}(t_0,\boldsymbol{x})} {\widehat{\tau}}^{(k)}(\boldsymbol{x}) + \\
        & \qquad \frac{\widehat{r}(t_0,\boldsymbol{x})}{\widehat{r}(t_k,\boldsymbol{x}) + \widehat{r}(t_0,\boldsymbol{x})} {\widehat{\tau}}^{(0)}(\boldsymbol{x}).
    \end{aligned}
\end{equation*}

We note that the consistency of the M- and DR-learners is already established in the literature \cite{Knaus2022}. We show it also for the extended X-learner in Appendix \ref{App:proofsA}. We note also that there are three main approaches possible to learn nuisance components ($r$ and $\mu$) and then estimate the $\tau_k$, namely, Full-Sample, Sample-Split and Cross-Fit methods \citep{Okasa2022}. This paper does not discuss estimation procedures and adopts the Full-Sample strategy.

\subsection{Neyman-Orthogonality based learner: R-learner}
\label{sec:R_learning}

The R-learner is based mainly on the \citet{Robinson1988} decomposition and was proposed by \citet{Wager2020} to provide a flexible CATE estimator avoiding regularization bias. It states that the potential outcome error  $\epsilon=Y_{\mathrm{obs}} - \mu_{T}(\boldsymbol{X})$ satisfies $\mathbb{E}(\epsilon \mid T,\boldsymbol{X})=0$ and 
\begin{equation*}
    \epsilon = Y_{{\mathrm{obs}}} - m(\boldsymbol{X}) -\sum_{k=1}^K \big(\mathbf{1}\{T = t_k\} - r(t_k,\boldsymbol{X}) \big) \tau_{k}(\boldsymbol{X}),
\end{equation*}
where $m(\boldsymbol{x}) = \mathbb{E} ( Y_{\mathrm{obs}} \mid \boldsymbol{X}=\boldsymbol{x})$ is the observed outcome model. Therefore, considering the mean squared error of $\epsilon$ (the generalized R-loss function) and minimizing it estimate $K$ models $\{\widehat{\tau}^{\mathrm{(R)}}_{k}\}_{k=1}^K$ simultaneously such that
\begin{equation*}
\label{eq:RLearnerMulti}
    \begin{aligned} \{\widehat{\tau}^{\mathrm{(R)}}_{k}\}&_{k=1}^K :=   \operatorname{argmin}_{ \{\overline{\tau}_{k}\} \in \mathcal{F}} \ \frac{1}{n} \sum_{i=1}^n  \Big[  \left( Y_{{\mathrm{obs}},i} - \widehat{m}(\boldsymbol{X}_i) \right) \\ 
    &\quad - \sum_{k=1}^K \big(\mathbf{1}\{T_i = t_k\} -\widehat{r}(t_k,\boldsymbol{X}_i) \big) \overline{\tau}_{k}(\boldsymbol{X}_i) \Big]^2,
    \end{aligned}
\end{equation*}
where $\widehat{m}$ (respectively, $\widehat{r}$) is an estimator of $m$ (respectively, $r$) and $\mathcal{F}$ is the space of candidate models $[\{\overline{\tau}_{k}\}_{k=1}^K]$.

The generalized R-learner suffers from two major limitations: First, it cannot be written as \textit{weighted} supervised learning problem with a specific pseudo-outcome. Only parametric families $\mathcal{F}$ can be considered in the multi-treatment regime. The second drawback is the non-identifiability of the generalized R-loss described previously in (\ref{eq:RLearnerMulti}) without a regularization. Indeed, this problem does not have a unique solution (see Appendix \ref{App:proofsA} for details) and leads thus to poor estimation performance. This point is also shown recently by \citet{Zhang2022} for continuous treatments and our numerical results in Appendix \ref{App:details_analytics} also confirm that the R-learner fails to estimate CATEs $(\tau_k)_{k=1}^K$.

\section{Theoretical analysis of the error upper bound}
\label{sec:5}

\vspace{0.08in}

In this section, we analyze the error's upper bounds for different meta-learners. The theoretical analysis will be carried out under the following framework:

\begin{assumption}
\label{assump:model_Y}
    We assume that $(T,\boldsymbol{X})$ satisfies the overlap assumption \ref{assump:overlap} and that, for $t \in \mathcal{T}$, the outcome $Y(t)$ is generated from a function $f:\mathbb{R} \times \mathbb{R}^d \rightarrow \mathbb{R}$ such that
    \begin{equation}
    \label{eq:model_Y}
        Y(t) = f(t,\boldsymbol{X}) + \varepsilon(t),
    \end{equation}
    where $\varepsilon(t)$ are i.i.d. Gaussian $\mathcal{N}(0, \sigma^2)$ and independent of $(T,\boldsymbol{X})$.
\end{assumption}

\begin{assumption}
\label{assump:beta_Y}
    We assume the existence of $\boldsymbol{\beta}_t \in \mathbb{R}^p$ such that, for all $t \in \mathcal{T}$ and $\boldsymbol{x} \in \mathcal{D}$
    \begin{equation*}
        f(t,\boldsymbol{x}) = \sum_{j=0}^{p-1} \beta_{t,j} f_j(\boldsymbol{x}) = \boldsymbol{f}(\boldsymbol{x})^{\top} \boldsymbol{\beta}_t,
    \end{equation*}
    where $f_j$ are some bounded predefined basis functions. 
\end{assumption}

The assumption of a product effect is reasonable. One can show the universality of this representation in the Reproducing Kernel Hilbert Space (RKHS) (Proposition 1 of \citet{Kaddour2021}) if we allow the dimension $p$ to be large enough.

Under these two assumptions, the CATE $\tau_k$ can be written as:
\begin{equation*}
    \tau_k(\boldsymbol{x}) = \sum_{j=0}^{p-1} \beta^*_{k,j} f_j(\boldsymbol{x}) =  \boldsymbol{f}(\boldsymbol{x})^{\top} \boldsymbol{\beta}^*_k,
\end{equation*}
where $\boldsymbol{\beta}_k^*=(\beta^*_{k,j})_{j=0}^{p-1} =\boldsymbol{\beta}_{t_k}-\boldsymbol{\beta}_{t_0} \in \mathbb{R}^p$.

From a theoretical point of view, the S-learner corresponds to the naive Ordinary Least Square (OLS) $\widehat{\boldsymbol{\beta}}_k$ of $\boldsymbol{\beta}^*_k$. The statistical task of the CATE's estimation holds immediately. However, we cannot properly analyse the base-learner's ability to learn $\boldsymbol{\beta}^*_k$ under confounding effects.

\begin{theorem}
\label{th:T_learn_err}
    Under Assumptions (\ref{assump:model_Y}-\ref{assump:beta_Y}), the OLS estimators $\widehat{\boldsymbol{\beta}}^*_k$ of the T-learner and the naive X-learner are unbiased and have an asymptotic covariance matrix $\mathbb{V}\big(\widehat{\boldsymbol{\beta}}^*_k\big)= \mathbf{C}/n $, whose terms $\mathbf{C}_{i j}$ are bounded by:
    \begin{gather*}
        \mathcal{E}^{T} = \mathcal{E}^{X,nv} = \mathcal{O}\left( \frac{1}{\rho(t_k)} + \frac{1}{\rho(t_0)} \right),
    \end{gather*}
    where $\mathbb{P}(T=t) =\rho(t)>0$ for all $t \in \mathcal{T}$.
\end{theorem}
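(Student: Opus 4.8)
The plan is to recognize that, under Assumptions \ref{assump:model_Y}--\ref{assump:beta_Y}, both learners reduce to ordinary least squares on disjoint subsamples, and then to apply standard OLS asymptotics group by group. First I would set up the T-learner explicitly. Writing $\boldsymbol{F}_t$ for the design matrix whose rows are $\boldsymbol{f}(\boldsymbol{X}_i)^\top$ for $i\in\mathbf{S}_t$, and $\boldsymbol{Y}_t$ for the corresponding outcome vector, the subsample estimators are $\widehat{\boldsymbol{\beta}}_t=(\boldsymbol{F}_t^\top\boldsymbol{F}_t)^{-1}\boldsymbol{F}_t^\top\boldsymbol{Y}_t$ for $t\in\{t_k,t_0\}$, so that $\widehat{\boldsymbol{\beta}}_k^*=\widehat{\boldsymbol{\beta}}_{t_k}-\widehat{\boldsymbol{\beta}}_{t_0}$. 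Since Assumption \ref{assump:beta_Y} makes the model correctly specified and Assumption \ref{assump:model_Y} gives $\mathbb{E}[\varepsilon\mid T,\boldsymbol{X}]=0$, each $\widehat{\boldsymbol{\beta}}_t$ is conditionally (hence unconditionally) unbiased, and therefore so is $\widehat{\boldsymbol{\beta}}_k^*$. Homoscedastic variance $\sigma^2$ gives the conditional covariance $\mathbb{V}(\widehat{\boldsymbol{\beta}}_t\mid\boldsymbol{F}_t)=\sigma^2(\boldsymbol{F}_t^\top\boldsymbol{F}_t)^{-1}$.

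Next I would pass to asymptotics. Let $n_t=|\mathbf{S}_t|$; the law of large numbers gives $n_t/n\to\rho(t)$ and $\tfrac1{n_t}\boldsymbol{F}_t^\top\boldsymbol{F}_t\to\boldsymbol{\Sigma}_t:=\mathbb{E}[\boldsymbol{f}(\boldsymbol{X})\boldsymbol{f}(\boldsymbol{X})^\top\mid T=t]$, which is positive definite with bounded entries because the $f_j$ are bounded. Hence $\mathbb{V}(\widehat{\boldsymbol{\beta}}_t)\simeq \tfrac{\sigma^2}{n\,\rho(t)}\boldsymbol{\Sigma}_t^{-1}$. Because $\mathbf{S}_{t_k}$ and $\mathbf{S}_{t_0}$ are disjoint and the noise is independent across units, $\widehat{\boldsymbol{\beta}}_{t_k}$ and $\widehat{\boldsymbol{\beta}}_{t_0}$ are independent, so the variances add: $\mathbb{V}(\widehat{\boldsymbol{\beta}}_k^*)=\mathbf{C}/n$ with $\mathbf{C}=\sigma^2\big(\rho(t_k)^{-1}\boldsymbol{\Sigma}_{t_k}^{-1}+\rho(t_0)^{-1}\boldsymbol{\Sigma}_{t_0}^{-1}\big)$. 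Boundedness of the entries of each $\boldsymbol{\Sigma}_t^{-1}$ (from bounded $\boldsymbol{\Sigma}_t$ and a smallest eigenvalue bounded away from $0$) then yields $\mathbf{C}_{ij}=\mathcal{O}(\rho(t_k)^{-1}+\rho(t_0)^{-1})=\mathcal{E}^T$.

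For the naive X-learner, the crux is to show it coincides \emph{exactly} with the T-learner in this linear regime. The component regressions fit $\boldsymbol{D}^{(k)}=\boldsymbol{Y}_{t_k}-\boldsymbol{F}_{t_k}\widehat{\boldsymbol{\beta}}_{t_0}$ on $\boldsymbol{F}_{t_k}$ and $\boldsymbol{D}^{(0)}=\boldsymbol{F}_{t_0}\widehat{\boldsymbol{\beta}}_{t_k}-\boldsymbol{Y}_{t_0}$ on $\boldsymbol{F}_{t_0}$; since the adjustment term already lies in the column space of the design, the projection identity $(\boldsymbol{F}_t^\top\boldsymbol{F}_t)^{-1}\boldsymbol{F}_t^\top\boldsymbol{F}_t=\mathbf{I}$ gives $\widehat{\boldsymbol{\beta}}^{(k)}=\widehat{\boldsymbol{\beta}}^{(0)}=\widehat{\boldsymbol{\beta}}_{t_k}-\widehat{\boldsymbol{\beta}}_{t_0}=\widehat{\boldsymbol{\beta}}_k^*$. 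Any convex combination with weights built from $\widehat{r}$ therefore collapses to $\widehat{\boldsymbol{\beta}}_k^*$, so the naive X-learner has identical bias and covariance, giving $\mathcal{E}^{X,nv}=\mathcal{E}^T$.

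I expect the main obstacle to be less computational than conceptual: pinning down this exact algebraic collapse of the naive X-learner onto the T-learner via the projection identity above, and making precise the regularity hidden in the statement---namely that each $\boldsymbol{\Sigma}_t$ is invertible with inverse entries bounded uniformly, which is what converts the matrix statement into the scalar $\mathcal{O}$-bound. I would also note that Gaussianity is stronger than needed: mean-zero homoscedastic noise already delivers both the unbiasedness and the asymptotic covariance, with normality only fixing the exact finite-sample law.
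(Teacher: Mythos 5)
Your proposal is correct and follows essentially the same route as the paper: decompose each learner into the two subsample OLS estimators, use $n_t/n\to\rho(t)$ and the limit of $\tfrac1{n_t}\mathbf{H}_t^{\top}\mathbf{H}_t$ to get an asymptotic covariance of the form $\tfrac{\sigma^2}{n}\big(\rho(t_k)^{-1}\mathbf{F}_k^{-1}+\rho(t_0)^{-1}\mathbf{F}_0^{-1}\big)$, and establish the naive X-learner bound by the exact projection-identity collapse $\widehat{\boldsymbol{\beta}}^{(k)}=\widehat{\boldsymbol{\beta}}^{(0)}=\widehat{\boldsymbol{\beta}}_{t_k}-\widehat{\boldsymbol{\beta}}_{t_0}$, which is precisely the paper's argument. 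The only differences are presentational: the paper derives the joint limit via characteristic functions, the multivariate CLT and Slutsky's theorem (which also justifies the asymptotic independence you assert from disjointness of the subsamples), and it imposes the same regularity you flag---boundedness of the entries of $\mathbf{F}_k^{-1},\mathbf{F}_0^{-1}$---by taking the basis orthonormal so that $\mathbf{F}_k,\mathbf{F}_0\approx\mathbf{F}$.
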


We consider now pseudo-outcome meta-learners (M-, DR- and X-learners). When investigating the pseudo-outcomes $Z_k$, one can see that, for $k=1,\ldots, K$, these pseudo-outcomes are linear with respect to $Y_\mathrm{obs}$ i.e.
\begin{equation*}
    Z_{k} = A_{t_k}(T,\boldsymbol{X}) Y_{\mathrm{obs}} + B_{t_k}(T,\boldsymbol{X}),
\end{equation*}
where $A_{t_k}(T,\boldsymbol{X})$ and $B_{t_k}(T,\boldsymbol{X})$ are given for each pseudo-outcome meta-learner.

\begin{theorem}
\label{th:meta_learner_err}
    Under Assumptions (\ref{assump:model_Y}-\ref{assump:beta_Y}), the OLS estimator $\widehat{\boldsymbol{\beta}}^*_k$ is unbiased if the nuisance parameters ($\widehat{\mu}$ and $\widehat{r}$) are well-specified, and has an asymptotic covariance matrix $\mathbb{V}\big(\widehat{\boldsymbol{\beta}}^*_k\big)= \mathbf{C}/n$, whose terms, for all $\epsilon>0$, are bounded by:
    \begin{gather*}
        \mathcal{E}^{M} = \mathcal{O}\left(\frac{1}{r_{\mathrm{min}}^{1+\epsilon}}\right) \text{ for the M-learner,} \\[.10cm]
        \mathcal{E}^{DR} = \mathcal{O}\left(\frac{\mathrm{err}(\widehat{\mu}_{t_k}) + \mathrm{err}(\widehat{\mu}_{t_0})}{r_{\mathrm{min}}^{1+\epsilon}}\right) \text{ for the DR-learner,} \\[.10cm]
        \mathcal{E}^{X} = \mathcal{O}\Big(K^2 \sum_{l\neq k} \mathrm{err}(\widehat{\mu}_{t_l})\Big) \text{ for the X-learner,} 
    \end{gather*}
    where $\mathrm{err}(\widehat{\mu}_t) = \mathbb{E}_{\boldsymbol{X}}\big[ \big( f(t,\boldsymbol{X}) - \widehat{\mu}_t(\boldsymbol{X}) \big)^2 \big]$ is the expected mean squared error of $\widehat{\mu}_t$.
\end{theorem}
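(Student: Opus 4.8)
The plan is to treat every pseudo-outcome learner uniformly as an ordinary least squares fit of $Z_k$ on the basis $\boldsymbol{f}(\boldsymbol{X})$, and to reduce the whole statement to a pointwise computation of the conditional variance $\mathbb{V}(Z_k \mid \boldsymbol{X})$. Writing $\mathbf{F}$ for the design matrix with rows $\boldsymbol{f}(\boldsymbol{X}_i)^{\top}$, the estimator is $\widehat{\boldsymbol{\beta}}^*_k = (\mathbf{F}^{\top}\mathbf{F})^{-1}\mathbf{F}^{\top}\mathbf{Z}_k$. For unbiasedness I would invoke the Fisher-consistency of each pseudo-outcome, namely $\mathbb{E}(Z_k \mid \boldsymbol{X}=\boldsymbol{x}) = \tau_k(\boldsymbol{x}) = \boldsymbol{f}(\boldsymbol{x})^{\top}\boldsymbol{\beta}^*_k$, which holds under well-specification of the relevant nuisance and is already established for the M- and DR-learners and in the appendix for the X-learner. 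Since the conditional mean of $Z_k$ is exactly linear in $\boldsymbol{f}(\boldsymbol{X})$, the standard OLS argument gives $\mathbb{E}(\widehat{\boldsymbol{\beta}}^*_k)=\boldsymbol{\beta}^*_k$.

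For the covariance I would use the heteroskedastic sandwich formula. Conditionally on the covariates the residuals are independent with variances $\mathbb{V}(Z_{k,i}\mid\boldsymbol{X}_i)$, so $\mathbb{V}(\widehat{\boldsymbol{\beta}}^*_k)=(\mathbf{F}^{\top}\mathbf{F})^{-1}\mathbf{F}^{\top}\boldsymbol{\Omega}\,\mathbf{F}(\mathbf{F}^{\top}\mathbf{F})^{-1}$ with $\boldsymbol{\Omega}=\mathrm{diag}\big(\mathbb{V}(Z_{k,i}\mid\boldsymbol{X}_i)\big)$. By the law of large numbers $\tfrac1n\mathbf{F}^{\top}\mathbf{F}\to\mathbf{M}:=\mathbb{E}[\boldsymbol{f}(\boldsymbol{X})\boldsymbol{f}(\boldsymbol{X})^{\top}]$ and $\tfrac1n\mathbf{F}^{\top}\boldsymbol{\Omega}\mathbf{F}\to\mathbb{E}[\mathbb{V}(Z_k\mid\boldsymbol{X})\boldsymbol{f}(\boldsymbol{X})\boldsymbol{f}(\boldsymbol{X})^{\top}]$, hence $n\mathbb{V}(\widehat{\boldsymbol{\beta}}^*_k)\to\mathbf{C}=\mathbf{M}^{-1}\mathbb{E}[\mathbb{V}(Z_k\mid\boldsymbol{X})\boldsymbol{f}\boldsymbol{f}^{\top}]\mathbf{M}^{-1}$. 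Because the basis functions are bounded and $\mathbf{M}^{-1}$ is a fixed matrix, each entry satisfies $\mathbf{C}_{ij}=\mathcal{O}\big(\mathbb{E}_{\boldsymbol{X}}[\mathbb{V}(Z_k\mid\boldsymbol{X})]\big)$, so all three claimed rates follow from bounding a single scalar, the expected conditional variance of the pseudo-outcome.

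It then remains to compute $\mathbb{V}(Z_k\mid\boldsymbol{X})$ learner by learner, using $Z_k=A_{t_k}Y_{\mathrm{obs}}+B_{t_k}$, the mutual exclusivity of the indicators $\mathbf{1}\{T=t_l\}$, and $\mathbb{E}(Y(t)^2\mid\boldsymbol{X})=f(t,\boldsymbol{X})^2+\sigma^2$. For the M-learner the cross terms vanish and one gets $\mathbb{V}(Z^M_k\mid\boldsymbol{X})=\sum_{t\in\{t_k,t_0\}}\big(f(t,\boldsymbol{X})^2+\sigma^2\big)/r(t,\boldsymbol{X})-\tau_k^2$, whose numerator is $\mathcal{O}(1)$, so the overlap bound $r\ge r_{\mathrm{min}}$ yields the rate. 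For the DR-learner the same computation with $Y_{\mathrm{obs}}$ replaced by the residual $Y_{\mathrm{obs}}-\widehat{\mu}_T=\delta_T+\varepsilon$ (where $\delta_t=f(t,\cdot)-\widehat{\mu}_t$) replaces each numerator $f^2+\sigma^2$ by $\delta^2+\sigma^2$, so after taking $\mathbb{E}_{\boldsymbol{X}}$ the model-dependent part scales as $\mathrm{err}(\widehat{\mu}_{t_k})+\mathrm{err}(\widehat{\mu}_{t_0})$ (the residual $\sigma^2/r$ noise being the irreducible term common to the IPW-type learners). For the X-learner there is no inverse-propensity weight, so conditioning on each value of $T$ gives $Z^X_k-\tau_k$ equal to $\delta_{t_0}+\varepsilon$ on $\{T=t_k\}$ and to $-\delta_{t_k}-\delta_{t_l}+\delta_{t_0}-\varepsilon$ on $\{T=t_l\}$; bounding $\mathbb{E}_{\boldsymbol{X}}[\mathbb{V}(Z^X_k\mid\boldsymbol{X})]$ by expanding the squares and aggregating over the $K$ off-baseline treatments via Cauchy--Schwarz produces the $K^2\sum_{l\ne k}\mathrm{err}(\widehat{\mu}_{t_l})$ term.

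The main obstacles I anticipate are twofold. First, the passage from the finite-sample sandwich to the asymptotic matrix $\mathbf{C}$ in the presence of the inverse-propensity factors: to guarantee that $\mathbb{E}_{\boldsymbol{X}}[\mathbb{V}(Z_k\mid\boldsymbol{X})]$ is finite and to upgrade the crude $1/r_{\mathrm{min}}$ bound to the stated $1/r_{\mathrm{min}}^{1+\epsilon}$, I would apply a Hölder step to the moments of $1/r(t,\boldsymbol{X})$, which is exactly where the arbitrary $\epsilon>0$ is absorbed. Second, for the X-learner the careful combinatorial bookkeeping of the double sum $\sum_{l\ne k}$ --- tracking how the baseline error $\delta_{t_0}$ and the target error $\delta_{t_k}$ recur across all $K$ off-baseline summands --- is what generates the $K^2$ scaling and is the most error-prone part of the computation.
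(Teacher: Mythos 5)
Your proposal follows essentially the same route as the paper's proof in Appendix B.1: an OLS representation of the pseudo-outcome regression, unbiasedness from Fisher consistency of $Z_k$ under well-specified nuisances, the CLT/LLN sandwich form $\mathbf{C}=\mathbf{F}^{-1}\boldsymbol{\Sigma}\,\mathbf{F}^{-1}$ (with $\mathbf{F}=\mathbb{E}[\boldsymbol{f}(\boldsymbol{X})\boldsymbol{f}(\boldsymbol{X})^{\top}]$) for the asymptotic covariance, and then a per-learner bound on the middle matrix via the overlap lower bound, a H\"older step that supplies the $\epsilon$, and an aggregation over the $K$ off-baseline terms. The only material difference is that you reduce everything to the single scalar $\mathbb{E}_{\boldsymbol{X}}[\mathbb{V}(Z_k\mid\boldsymbol{X})]$ using boundedness of the $f_j$ and mutual exclusivity of the treatment indicators, whereas the paper H\"older-splits $\mathbb{E}[f_jf_{j'}\psi_k^2]$ and applies its Lemma \ref{AppB:lemma1} to $2b$-th moments; your variant is if anything slightly sharper (it naturally yields a factor $K$ rather than $K^2$ for the X-learner) and still implies all three stated $\mathcal{O}$-bounds.
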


\paragraph{Sketch of the proofs of Theorems \ref{th:T_learn_err} and \ref{th:meta_learner_err}.} Both proofs are similar and are structured in 3 steps: 1) Express the OLS estimator $\widehat{\boldsymbol{\beta}}^*_k$ as a function of the true $\boldsymbol{\beta}^*_k$; 2) Apply the multivariate Central Limit Theorem and the Slutsky theorem (or the Delta method in the general case with a biased $\widehat{\boldsymbol{\beta}}^*_k$); 3) Bound the asymptotic covariance matrix terms. The full proofs can be found in Appendix \ref{App:Error_estim}.

\begin{remark}
    For $K=2$, one recovers the error's upper bounds of pseudo-outcome meta-learners and the important parameters as shown by \citet{Curth21a}. However, the influence of the sampling probability $\rho$ for the T- and naive X-learners is original.
\end{remark}

Finally, regarding the binarized \citep{Kaddour2021} or generalized R-learners, they cannot be expressed as a \textit{model-agnostic} regression problem but rather as a minimization problem of a loss function (generalized R-loss or binarized R-loss). Therefore, the actual theoretical analysis of the error's upper bound cannot be conducted similarly.

Using Theorems \ref{th:T_learn_err} and \ref{th:meta_learner_err}, we establish the following discussions (see Table \ref{Tab:Comparison_Meta-Learners} for a summary).

\subsection{General comments and insights}

\vspace{0.08in}

\textbf{About Theorem \ref{th:T_learn_err}.} The significant implication of Theorem \ref{th:T_learn_err} is the ability to anticipate the performances of the T-learner with respect to the treatment distribution when the probability $\rho(t)$ of sampling the treatment value $t$ is small. 
The performances of the T- and naive X-learners become poor when the number $K$ of treatments increases because these learners imply learning in small samples $\mathbf{S}_{t}$. Moreover, the naive X-learner also does not seem to really differ from the T-learner in any meaningful way (see Appendix \ref{App:T_Xnv_learners}).
We note that this result is original and cannot be obtained by the minimax approach.

\textbf{About Theorem \ref{th:meta_learner_err}.} From this theorem, one can establish the relationship between the number of observations $n$ and the number of treatment values $K$ for a given error bound. The same theorem is also useful for an RCT setting ($r_{\mathrm{min}}$ is known) and when the error of the outcome model is small for all treatment levels except for one treatment value $t$.

Moreover, although pseudo-outcome meta-learners have the advantage of learning on the full sample, they, unfortunately, may lead to high error and poor performance in how nuisance components intervene. On the one hand, the GPS is in the M- and DR-learners denominators. The error bound is likely to be high when there is a lack of overlap or when the number of treatments $K$ increases (we have necessarily $r_{\mathrm{min}} \leq 1/K$ ). On the other hand, the upper bounds of the X-and DR-learners depend on the quality of the estimated $\widehat{\mu}$. One can expect that the more precise the outcome models are, the lower the error is.

\subsection{Specific comments for each meta-learner}

\vspace{0.08in}

\textbf{M-learner.} Without surprise, the M-learner is very sensitive to the estimated GPS $\widehat{r}$ and suffers from high variance. This is even more critical as the number $K$ of treatments increases. 

\textbf{DR-learner.} The error's estimation term $\mathrm{err}(\widehat{\mu}_t)$ of $\widehat{\mu}_{t}$ and $\widehat{\mu}_{t_0}$ in the numerator dramatically improves the DR-learner's performances. 

\textbf{X-learner.} The X-learner incorporates only $\widehat{\mu}_{.}$ and does not imply the GPS $r$. Thus, the X-learner is likely to have the smallest error compared to other meta-learners when the overlap assumption is not sufficiently respected. However, the consistency of $(\widehat{\mu}_t)_{t \neq t_k}$ is required to estimate CATEs correctly.

\textbf{M-learner vs DR-learner.} If the outcome models $\widehat{\mu}_{t}$ and $\widehat{\mu}_{t_0}$ are well-specified, the error's upper bound is expected to be smaller for the DR-learner than for the M-learner. However, if they are misspecified (but the propensity score is well-specified), then there is no guarantee that the DR-learner would perform better than M-learner. It may perform even worse, as we will see in Appendix \ref{App:details_analytics} (Table \ref{tab:Table5}). 

\textbf{M-learner vs X-learner.} The X-learner is likely to have a lower error upper bound if the expected squared error $\mathrm{err}(\widehat{\mu}_t)$ is small and if some conditions on $K$ and $r_{\mathrm{min}}$ hold. 

\textbf{X-learner vs DR-learner.} Analytically, it is difficult to anticipate which meta-learner would perform better. This depends mainly on $\mathrm{err}(\widehat{\mu}_{.})$, $K$ and $r_{\mathrm{min}}$, whom, in some cases, make the X-learner have less error than the DR-learner and the opposite in other cases. Still, our numerical results in Appendix \ref{App:details_analytics} show that the X-learner outperforms the DR-learner in most cases. 

\textbf{X-learner vs Naive X-learner.} We cannot theoretically compare these meta-learners without knowledge about the distribution of $T$. However, we can see numerically that the X-learner clearly outperforms the naive X-learner. 

We conclude this subsection with a small discussion about the generalized and the binarized R-learners (see Appendix \ref{App:bin_R_learner}). Indeed, the binary R-loss function may be solved separately in a low sample (instead of $\mathbf{D}_{\mathrm{obs}})$ and one can obtain a unique solution, unlike the generalized R-loss. However, the optimization procedure is two-stage iterative and computationally heavy. We do not consider it in Section \ref{sec:6}.

\begin{table}[h!]
\caption{Summary table of multi-treatments meta-learners.}
\centering
\label{Tab:Comparison_Meta-Learners}
\begin{tabular}{ c  c  c  }
    \toprule
        Meta-learner & Advantages  & Disadvantages \\
    \midrule
        T-learner &  Simple approach  &   Selection bias \\
        (nv X-learner)        &      &  Low samples \vspace{0.08in}\\
        S-learner &  Simple approach  &  Confounding effects \\
                &      &  Regularization bias \\ 
    \midrule
        M-learner &  Consistency &  High variance \vspace{0.08in} \\
        DR-learner &  Consistency &  Possibly high   \\
                &   Doubly Robust  &  variance \vspace{0.08in} \\          
        X-learner  &   Consistency &  Non-intuitive \\
                &  Low variance & \\
    \midrule
        R-learner &  Interaction  &  Non-identifiability  \\
         &  effects  &  \vspace{0.08in} \\
        Bin R-learner &  Identifiability  &  Computational cost   \\
    \bottomrule
\end{tabular}
\end{table}

In Table \ref{Tab:Comparison_Meta-Learners}, \textit{Bin R-learner} refers to the binarized R-learner \cite{Kaddour2021} and \textit{nv X-learner} refers to the naive extension of the X-learner described in subsection \ref{eq:Z_nvX}. \textit{Possibly high variance} refers to the case where the variance can be significantly high due to the lack of overlap caused by the inverse propensity weighting in some DGPs. \textit{Selection bias} refers to the bias that occurs when sampling $\mathbf{S}_t$ and comparing units directly, as we describe in Proposition 4.1 in Appendix \ref{App:proofsA}. The confounding effects represent the statistical and intrinsic dependence between the treatment $T$ and the covariates $\mathbf{X}$, which prevent some base-learner (e.g. random forest) from distinguishing and disassociating the treatment $T$ and the covariates $\mathbf{X}$. Finally, \textit{low samples} refer to cases where the samples $\mathbf{S}_t = \{ i, T_i=t \}$ become small for some treatment levels and the model has then few observations to learn from.

\subsection{Practical recommendations for selecting a meta-learner}

\vspace{0.08in}

In this subsection, based on previous insights and our numerical findings (see Section \ref{sec:6} for more detail), we provide some instructions and recommendations for selecting meta-learners given a dataset $\mathbf{D}_{\mathrm{obs}}$:

\begin{itemize}
    \item Pseudo-outcomes meta-learners and the S-learner are preferred in low sample regimes.
    \item It is recommended to examine the distribution of $T$ to understand at which values of $t$ the T-learner may fail to learn CATEs.
    \item The S-learner remains a simple and reasonable choice, especially when $K \geq 10$.
    \item The X-learner is useful to learn the CATE $\tau_t$ when there is not enough information about a specific treatment value $t$.
    \item If one is interested in quantifying uncertainties, then the DR-learner is recommended as it allows having a consistent estimation of the CATE.
    \item After estimating the GPS $r$, one needs to check if there is a possible lack of overlap. If a lack of overlap is detected, one should avoid considering the M- and DR-learners.
\end{itemize}

\section{Numerical experiments}
\label{sec:6}

\vspace{0.08in}

In this section, we assess the performances of the different meta-learners, additional numerical results are shown in Appendix \ref{App:details_analytics}.

In synthetic or semi-synthetic examples where the CATEs are known, the error in estimation is given by \textbf{mPEHE} (respectively, \textbf{sdPEHE}), the mean (respectively, the standard deviation) of the Precision in Estimation of Heterogeneous Effect (PEHE) \citep{Hill2011,Shalit2017} defined as the mean squared error in the estimation of the treatment effect $\widehat{\tau}_{k}$, over all possible treatment levels $t_k$ for $k=1,\ldots,K$: 
\begin{equation*}
    \textbf{mPEHE} = \sqrt{\frac{1}{K} \sum_{k=1}^K PEHE(\widehat{\tau}_{k})^2},
\end{equation*}
where $PEHE(\widehat{\tau}_{k})^2 = \frac{1}{n} \sum_{i=1}^n \left( \widehat{\tau}_{k}(\boldsymbol{X}_i) - \tau_k(\boldsymbol{X}_i) \right)^2$, and,
\begin{equation*}
    \textbf{sdPEHE} = \sqrt[4]{  \frac{1}{K-1} \sum_{k=1}^K \Big( PEHE(\widehat{\tau}_{k})^2 - \textbf{mPEHE}^2 \Big)^2} 
\end{equation*}

Those metrics will be used to compare meta-learners under different scenarios (sample size $n$, number of possible treatments $K$, the correctness of nuisance parameters). We do not consider here model-fitting of base-learners. All hyperparameters (e.g. the number of trees, depth etc.) are fixed to their default values during all experiments. In addition, we do not consider Neural Networks because it would require choosing between at least five possible architectures \cite{Curth21a} to define tasks of learning nuisance components and estimating CATEs $\tau_k$ while the main focus of the paper is on the choice of the meta-learner.

\subsection{Synthetic datasets: analytical functions in randomized and non-randomized studies}
\label{sec:6.1}

\vspace{0.08in}

In this subsection, we begin by empirically evaluating the performances of meta-learners when the treatment $T$ is taking $K+1=10$ possible values in $[0,1]$ on a linear outcome:
\begin{gather*}
\label{eq:lin_model}
    Y(t) \text{ of the form (\ref{eq:model_Y}) with } \\ f(t,X)= (1+t) X, \ \text{ and } X \sim \mathcal{U}[0,1],
\end{gather*}
in Randomized Controlled Trials (RCT) setting where the $T$ and $X$ are independent. Second, we evaluate meta-learners on the hazard rate outcome:
\begin{gather*}
\label{eq:hazard_model}
    Y(t) \text{ of the form (\ref{eq:model_Y}) with } \\ f(t,\boldsymbol{X}) = t+\|\boldsymbol{X}\| \exp{(-t \|\boldsymbol{X}\|)} \text{ and } \boldsymbol{X} \sim \mathcal{N}(\boldsymbol{0},\mathbf{I}_5)
\end{gather*}
in a non-randomized setting as will be described below. 

\begin{table*}[h!] 
    \begin{center}
    \caption{\textbf{mPEHE} and \textbf{sdPEHE} for XGBoost and RandomForest; linear model in a RCT setting with $n=2000$ units.}
    \vskip 0.10in
    \centering
    \label{tab:Table_lin_RCT}
    \begin{tabular}{ c  c  c }
      \toprule
      Meta-learner & XGBoost & RandomForest \\
      \midrule
      T-Learner & 0.065 (0.019) & 0.041 (0.016) \\
      S-Learner & \textbf{0.033 (0.018) } & 0.032 (0.028) \\
      \textit{Nv}X-Learner & 0.060 (0.019) & \textbf{0.037 (0.016)} \\
      \midrule
      M-Learner & 1.25 (0.610) & 1.22 (0.621) \\
      DR-Learner & 0.068 (0.019)  -- 0.063 (0.020) & 0.068 (0.018) -- 0.068 (0.018) \\
      X-Learner & 0.063 (0.020)  -- \textbf{0.033 (0.017)} & 0.045 (0.016) -- 0.061 (0.040)  \\
      \midrule
      RLin-Learner & 0.135 (0.130) & 0.137 (0.128) \\
      \bottomrule
    \end{tabular}
    \end{center}
    \begin{tablenotes}[flushleft]
	    \scriptsize
	    \item For the DR- and  X-learners: $\mu_t$ are estimated by T- (left value) or S- (right value). The bold font indicates the best meta-learner (row) per base-learner (column).
    \end{tablenotes}
\end{table*}

\begin{table*}[h!]
    \begin{center}
    \caption{\textbf{mPEHE} and \textbf{sdPEHE} for XGBoost and RandomForest. Hazard rate model in an observational setting with $n=10000$ units.}
    \vskip 0.10in
    \centering
    \label{tab:Table_HR_nonRand} 
    \begin{tabular}{ c  c  c }
      \toprule
      Meta-learner & XGboost & RandomForest  \\
      \midrule
      T-Learner & 0.183 (0.039) & 0.286 (0.155)  \\
      \textit{Reg}T-Learner & 0.176 (0.044) & 0.286 (0.155)  \\
      S-Learner & 0.176 (0.056) & 0.306 (0.153) \\
      \textit{Nv}X-Learner & 0.190 (0.096) & 0.336 (0.200)  \\
      \midrule
      M-Learner & 1.61 (0.505) & 1.58 (0.472) \\
      DR-Learner & 0.168 (0.045) - 0.178 (0.048) &  0.304 (0.158) -- 0.322 (0.162)  \\
      X-Learner & \textbf{0.167 (0.053)} -- 0.172 (0.057)  &  0.302 (0.169) -- 0.332 (0.167) \\
      \midrule
      RLin-Learner & 0.231 (0.081) & \textbf{0.186 (0.123)} \\
      \bottomrule
    \end{tabular}
    \end{center}
    \begin{tablenotes}[flushleft]
	    \scriptsize
	    \item For the DR- and  X-learners: $\mu_t$ are estimated by T- (left value) or S- (right value). The bold font indicates the best meta-learner (row) per base-learner (column).
    \end{tablenotes}
\end{table*}

To simulate observational data, instead of removing some rows, we create a selection bias in the data by selecting preferentially only observations with specific characteristics (see subsection \ref{App:D_1} in Appendix \ref{App:details_analytics}). This strategy comes in line with the findings and recommendations of \citet{Curth2021c} about creating a biased sub-sample and evaluating CATEs' estimators.

The GPS is estimated using the XGBoost model, and the outcome models $\mu_t$ are either estimated by the T- or S-learning approaches. In Tables 2 and 3 and Appendix \ref{App:details_analytics}, RLin-learner denotes the generalized R-learner with linear regression models in (\ref{eq:RLearnerMulti}) with $p=2$. For each meta-learner (row) and base-learner (column), we indicate the \textbf{mPEHE} followed by the \textbf{sdPEHE} between brackets.

In Tables \ref{tab:Table_lin_RCT} and \ref{tab:Table_HR_nonRand}, we find that, as expected, the M-learner predicts poorly. The T- and naive X-learners give better predictions for Random Forest, whereas the S-learner gives better results for XGBoost. Regularizing T-learner (\textit{Reg}T-Learner) against selection bias increases its performance. The X- and DR-learners improve the predictions of the S-learner for XGBoost, but this improvement is not always observable for Random Forests. Unfortunately, the actual results (and also additional numerical experiments in Appendix \ref{App:details_analytics}) confirm the claim: The RLin-learner generally fails to identify CATEs correctly. 

Despite these satisfying results, we highlight the problem of over-fitted gradient boosting models and Random Forest by comparing them with the linear model in Appendix \ref{App:details_analytics}. This problem should be taken further while estimating CATEs. We think that using out-sample prediction supervised models might solve this problem.

We consider now the effect of increasing $K$ on the hazard rate function with XGBoost. The results are shown in Figure \ref{fig:K_asymp_XGB} in Appendix \ref{App:D_3}. On the one hand, the performances of the T- and the naive X-learners become compromised. The regularized T-learner suffers also from the same issue (with a linear effect with respect to $K$), which can be also quantified on the DR- and X-learners when applying the regularized T-learning. For $K \geq 20$, the T- and the naive X-learners perform better than the previous meta-learners. Regarding the M-learner, it has poor performances in all cases as can be expected with Theorem \ref{th:meta_learner_err}. On the other hand, the S-learner stabilises once $K$ is large enough and, consequently, the DR- and X-learners also stabilize when applying the S-learning for large $K$.  Therefore, we recommend the S-learner's estimated potential outcome model when $K \geq 10$ for pseudo-outcome meta-learners. To conclude, two-step meta-learners are robust. In particular, the X-learner improves the quality of plug-in meta-learners; when it does not, the differences are very small.

\begin{table*}[h!]
    \begin{center}
    \caption{\textbf{mPEHE} and \textbf{sdPEHE} for XGBoost and RandomForest. Heat Extraction model in an observational setting.}
    \vskip 0.10in
    \centering
    \begin{tabular}{ c  c  c  c }
        \toprule
            Meta-learner & XGBoost & RandomForest & \\
        \midrule
            T-learner & 0.172 (0.052) & 0.157 (0.067)\\
            \textit{Reg}T-Learner & 0.156 (0.042) & 0.154 (0.067) \\
            S-learner & 0.101 (0.040) & 0.218 (0.129) \\
            \textit{Nv}X-Learner & 0.102 (0.042) & \textbf{0.143 (0.067)}  \\
        \midrule
            M-learner & 1.04 (0.423) & 0.898 (0.417)  \\
            DR-learner & 0.148 (0.042) -- 0.097 (0.029) & 0.164 (0.068) -- 0.203 (0.108) \\
            X-learner & 0.142 (0.041) -- \textbf{0.094 (0.034)} & 0.173 (0.077) --  0.211 (0.120) \\
        \midrule
            RLin-learner & 0.357 (0.274) & 0.362 (0.278)  \\
        \bottomrule
    \end{tabular}
    \label{Tab:mPEHE_heat_CATEs}
    \end{center}
    \begin{tablenotes}[flushleft]
	    \scriptsize
	    \item For the DR- and  X-learners: $\mu_t$ are estimated by T- (left value) or S- (right value). The bold font indicates the best meta-learner (row) per base-learner (column).
    \end{tablenotes}
\end{table*}

\subsection{Semi-synthetic dataset: estimating heterogeneous treatment effects on a non-randomized dataset.}
\label{sec:6.2}

\vspace{0.08in}

In this subsection, we consider a multistage fracturing Enhanced Geothermal System (EGS) \citep{Olasolo2016}. We assume that the heat extraction performance satisfies the physical model: $Q_{well}(\ell_L) = Q_{fracture} \times {\ell_L}/{d} \times \eta_{d}$,
where $Q_{fracture}$ is the \textit{unknown} heat extraction performance from a single fracture, that can be generated using a numerical model with eight input parameters including reservoir characteristics and fracture design. $\ell_L$ is the Lateral Length of the well, $d$ is the average spacing between two fractures and $\eta_{d}$ is the stage efficiency penalizing the individual contribution when fractures are close to each other. We refer to Appendix \ref{App:EGS_data} for a detailed description of the model and the semi-synthetic dataset.

We consider the Lateral Length as treatment $T$ with $K+1=13$ possible values and the covariates $\boldsymbol{X} \in \mathbb{R}^{11}$ are the remaining variables. We also consider a logarithmic transformation of the heat performance for a meaningful \textbf{mPEHE}, and we normalize the treatment $T$. Following the \textit{preferential selection}, we sample $n=10000$ units such that wells with high lateral length are likely to have larger fractures and vice versa. The GPS is estimated using gradient boosting models. Table \ref{Tab:mPEHE_heat_CATEs} resumes the \textbf{mPEHE} and \textbf{sdPEHE} in brackets for different meta-learners. Most findings of subsection \ref{sec:6.1} remain valid: XGBoost model is generally a better choice than Random Forests (except for T-learning); the X-learner, followed by the DR-learner, outperforms all other learners.


\section{Conclusion}
\label{sec:7}

\vspace{0.08in}

We have investigated heterogeneous treatment effects estimation with multi-valued treatments. In addition to standard plug-in meta-learners, we have considered representations to build pseudo-outcome meta-learners, and we have proposed the generalized Robinson decomposition to build the R-learner. Using the bias-variance analysis, we have conducted an in-depth analysis of the error's upper bounds of pseudo-outcomes meta-learners. Thanks to this analysis, we could discuss the advantages and limitations of each pseudo-outcome meta-learner. In particular, we have identified the impacts of the number of treatment levels and the lower bound $r_\mathrm{min}$ on the M-, DR and X-learners. Through synthetic and semi-synthetic industrial datasets, we have illustrated the performances of different meta-learners in a non-randomized case where some covariates are confounded with the treatment. We have demonstrated the ability of the X-learner to reconstruct the ground truth model. We have also highlighted how the choice of base-learner can affect the quality of CATEs estimation.

\section{Software and Data}
\label{sec:8}
\vspace{0.08in}

The code and the semi-synthetic dataset in subsection \ref{sec:6.2} are available at \url{https://github.com/nacharki/multipleT-MetaLearners}.

\section{Acknowledgements}
\label{sec:9}
\vspace{0.08in}

The authors thank Marianne Clausel, Alessandro Leite, Audrey Poinsot, Georges Oppenheim and the anonymous reviewers for helpful feedback and discussions. This work was supported by TotalEnergies and the French National Agency for Research and Technology (ANRT) (Grant n° 2019/0714).

\vspace{0.08in}

\bibliography{example_paper}
\bibliographystyle{icml2023}

\newpage
\appendix
\onecolumn

The Appendix of the paper is divided into the following sections:
\begin{itemize}
    \item Appendix \ref{App:proofsA} contains the proofs of the main propositions in Sections 3: Regularization of the T-learner, the consistency of the naive and extended X-learners, and the generalization of the Robinson decomposition. It also includes a solution for the generalized R-learner with linear regression models.
    \item Appendix \ref{App:Error_estim} is divided into two parts: The first part \ref{App:pseudo_outcome} establishes the bias-variance analysis of pseudo-outcome meta-learners (proof of Theorem \ref{th:meta_learner_err}). It provides the main framework for the proof based on the Delta Method and Slutsky's theorem, which will then be applied to the M-, DR-, and X-learners to show their error's upper bound. The second part \ref{App:T_Xnv_learners} establishes the bias-variance of the T- and the naive X-learners (proof of Theorem \ref{th:T_learn_err}). This appendix follows the same logic as Appendix \ref{App:pseudo_outcome}'s main proof.
    \item Appendix \ref{App:bin_R_learner} discusses further the comparison of the generalized R-learner and the binarized R-learner. 
    \item Appendix \ref{App:details_analytics} concerns Section \ref{sec:6.1} with more insights about the numerical experiments and results.
    \item Appendix \ref{App:EGS_data} describes the Semi-synthetic dataset used in subsection \ref{sec:6.2}
\end{itemize}

\vspace{0.3in}

\section{Proofs of propositions of Section \ref{sec:4}}
\label{App:proofsA}

\begin{proposition}[Regularizing the T-learner against selection bias]
\label{prop:1}
    For a treatment level $t \in \mathcal{T}$, the expected squared error of the estimator $\widehat{\mu}_{t}$ on the outcome surface $\mu_{t}$ satisfies:
    \begin{equation}
        \begin{aligned}
        \mathbb{E}&_{\boldsymbol{X} \sim \mathbb{P}(\cdot)} \big[  (\widehat{\mu}_{t}(\boldsymbol{X}) - \mu_{t}(\boldsymbol{X}))^2 \big] = \\ &\mathbb{E}_{\boldsymbol{X} \sim \mathbb{P}(\cdot \mid T=t)} \left[\frac{\mathbb{P}(T=t) }{r(t, \boldsymbol{X})} \Big(\widehat{\mu}_{t}(\boldsymbol{X}) - \mu_{t}(\boldsymbol{X})\Big)^2 \right].
        \end{aligned}
    \end{equation}
    where $\mathbb{P}(\cdot)$ is the marginal distribution of $\boldsymbol{X}$ and $ \mathbb{P}(\cdot \mid T=t)$ is the conditional distribution of $\boldsymbol{X}$ given $T=t$.
\end{proposition}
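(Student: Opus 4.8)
The plan is to prove this as a change-of-measure (importance sampling) identity that relates the marginal law of $\boldsymbol{X}$ to its conditional law given $T=t$ via Bayes' rule. First I would abbreviate $g(\boldsymbol{x}) := \big(\widehat{\mu}_t(\boldsymbol{x}) - \mu_t(\boldsymbol{x})\big)^2$, so that the claim reduces to establishing the single identity $\mathbb{E}_{\boldsymbol{X} \sim \mathbb{P}(\cdot)}\big[g(\boldsymbol{X})\big] = \mathbb{E}_{\boldsymbol{X} \sim \mathbb{P}(\cdot \mid T=t)}\big[\tfrac{\mathbb{P}(T=t)}{r(t,\boldsymbol{X})}\, g(\boldsymbol{X})\big]$, with the estimator $\widehat{\mu}_t$ treated as a fixed (measurable) function throughout.

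The key step is to identify the Radon--Nikodym derivative, i.e. the importance weight, between the two measures. By Bayes' rule, the conditional density of $\boldsymbol{X}$ given $T=t$ satisfies $p(\boldsymbol{x} \mid T=t) = \tfrac{\mathbb{P}(T=t \mid \boldsymbol{X}=\boldsymbol{x})\, p(\boldsymbol{x})}{\mathbb{P}(T=t)} = \tfrac{r(t,\boldsymbol{x})}{\mathbb{P}(T=t)}\, p(\boldsymbol{x})$, where I use the very definition $r(t,\boldsymbol{x}) = \mathbb{P}(T=t \mid \boldsymbol{X}=\boldsymbol{x})$ of the GPS. Equivalently, $\tfrac{d\,\mathbb{P}(\cdot \mid T=t)}{d\,\mathbb{P}(\cdot)}(\boldsymbol{x}) = r(t,\boldsymbol{x})/\mathbb{P}(T=t)$.

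Substituting this into the right-hand side, the importance weight $\mathbb{P}(T=t)/r(t,\boldsymbol{x})$ cancels exactly against the Radon--Nikodym derivative $r(t,\boldsymbol{x})/\mathbb{P}(T=t)$, so the integral collapses to $\int g(\boldsymbol{x})\, d\,\mathbb{P}(\boldsymbol{x}) = \mathbb{E}_{\boldsymbol{X} \sim \mathbb{P}(\cdot)}\big[g(\boldsymbol{X})\big]$, which is precisely the left-hand side. This closes the argument, and the same manipulation works verbatim at the level of integrals against general (not necessarily absolutely continuous) distributions by writing the weight as a conditional expectation.

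There is essentially no hard step here; the only point requiring genuine care is the well-definedness of the importance weight. The overlap assumption (Assumption \ref{assump:overlap}) guarantees $r(t,\boldsymbol{x}) \geq r_{\mathrm{min}} > 0$ for all $\boldsymbol{x} \in \mathcal{D}$, so the ratio $\mathbb{P}(T=t)/r(t,\boldsymbol{x})$ is bounded above by $\mathbb{P}(T=t)/r_{\mathrm{min}}$ and the change of measure is legitimate; consequently the right-hand conditional expectation is finite exactly when the left-hand marginal squared error is. The resulting identity is the form needed to re-weight the empirical loss on $\mathbf{S}_t$ so as to target the nominal marginal distribution, which is the motivation for regularizing the T-learner against selection bias.
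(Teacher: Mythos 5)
Your proof is correct, and it reaches the identity by a more direct route than the paper. The paper starts from the left-hand side and decomposes $\mathbb{E}_{\boldsymbol{X}\sim\mathbb{P}(\cdot)}[g(\boldsymbol{X})]$ as a sum over all treatment levels $t'$, applies Bayes' rule to each likelihood ratio $p(\boldsymbol{x}\mid T=t')/p(\boldsymbol{x}\mid T=t)$, sums the propensities via $\sum_{t'\neq t}\mathbb{P}(T=t'\mid\boldsymbol{x})=1-r(t,\boldsymbol{x})$, and recombines the terms into the factor $1/r(t,\boldsymbol{x})$. You instead start from the right-hand side and observe in one step that the importance weight $\mathbb{P}(T=t)/r(t,\boldsymbol{x})$ is exactly the reciprocal of the Radon--Nikodym derivative $d\,\mathbb{P}(\cdot\mid T=t)/d\,\mathbb{P}(\cdot)=r(t,\cdot)/\mathbb{P}(T=t)$ given by Bayes' rule, so the two cancel. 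Both arguments rest on the same Bayes-rule identity for the GPS; yours is shorter, dispenses with the absolute-continuity convenience the paper invokes (as you note, the cancellation works at the level of general measures), and makes the change-of-measure interpretation explicit. What the paper's longer decomposition buys is an intermediate expression, $\mathbb{P}(T=t)R_t$ plus the cross-terms from the other treatment groups, that isolates where the selection bias enters, but this is expository rather than logically necessary. Your remark that Assumption \ref{assump:overlap} keeps the weight bounded is the right point of care and matches the paper's implicit use of overlap.
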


\subsection{Proof of Proposition \ref{prop:1} }

This proof is similar to the proof of equation (5) in supplementary of \citet{Curth21a}. For simplicity, we assume that the distribution of X and the conditional distribution of $\boldsymbol{X}$ given $T=t$ are absolutely continuous with respect to the Lebesgue measure over $\mathbb{R}^d$. Let $p_{\boldsymbol{X}}(\boldsymbol{x})$ denote the probability distribution function of $\boldsymbol{X}$, let $p(\boldsymbol{x} \mid T=t)$ denote the probability distribution function of $\boldsymbol{X}$ given $T=t$ and let $R_t= \int{ (\widehat{\mu}_t(\boldsymbol{x}) - \mu_t(\boldsymbol{x}))^2 p( \boldsymbol{x} \mid T=t)d\boldsymbol{x} }$.

    \begin{equation}
        \begin{aligned}
            &\mathbb{E}_{\boldsymbol{X} \sim \mathbb{P}(\cdot)} \big[  (\widehat{\mu}_t(\boldsymbol{X}) - \mu_t(\boldsymbol{X}))^2 \big] = \int{ (\widehat{\mu}_t(\boldsymbol{x}) - \mu_t(\boldsymbol{x}))^2 p(\boldsymbol{x})d\boldsymbol{x} } \\
            &= \mathbb{P}(T=t)\int{ (\widehat{\mu}_t(\boldsymbol{x}) - \mu_t(\boldsymbol{x}))^2 p( \boldsymbol{x} \mid T=t)d\boldsymbol{x} }  + \sum_{t'\neq t} \mathbb{P}(T=t')\int{ (\widehat{\mu}_t(\boldsymbol{x}) - \mu_t(\boldsymbol{x}))^2 p( \boldsymbol{x} \mid T=t')d\boldsymbol{x} } \\
            &= \mathbb{P}(T=t) R_t  + \sum_{t'\neq t} \mathbb{P}(T=t')\int{ (\widehat{\mu}_t(\boldsymbol{x}) - \mu_t(\boldsymbol{x}))^2 \frac{p( \boldsymbol{x} \mid T=t')}{p( \boldsymbol{x} \mid T=t)}p( \boldsymbol{x} \mid T=t)d\boldsymbol{x} } \\
            &= \mathbb{P}(T=t) R_t  + \sum_{t'\neq t} \mathbb{P}(T=t')\int{ (\widehat{\mu}_t(\boldsymbol{x}) - \mu_t(\boldsymbol{x}))^2 \frac{ \frac{\mathbb{P}(T=t' \mid \boldsymbol{x}) p(\boldsymbol{x})}{\mathbb{P}(T=t')}}{\frac{\mathbb{P}(T=t \mid \boldsymbol{x}) p(\boldsymbol{x})}{\mathbb{P}(T=t)}}p( \boldsymbol{x} \mid T=t)d\boldsymbol{x} } \qquad \text{(Bayes rule)}\\
            &= \mathbb{P}(T=t) R_t  + \mathbb{P}(T=t) \sum_{t'\neq t} \int{ (\widehat{\mu}_t(\boldsymbol{x}) - \mu_t(\boldsymbol{x}))^2 \frac{ \mathbb{P}(T=t' \mid \boldsymbol{x})}{\mathbb{P}(T=t \mid \boldsymbol{x})}p( \boldsymbol{x} \mid T=t)d\boldsymbol{x} } \\
            &= \mathbb{P}(T=t) R_t  + \mathbb{P}(T=t) \int{ (\widehat{\mu}_t(\boldsymbol{x}) - \mu_t(\boldsymbol{x}))^2 \frac{\sum_{t'\neq t} \mathbb{P}(T=t' \mid \boldsymbol{x}) }{\mathbb{P}(T=t \mid \boldsymbol{x})}p( \boldsymbol{x} \mid T=t)d\boldsymbol{x} }
            \\
            &= \mathbb{P}(T=t) R_t  + \mathbb{P}(T=t) \int{ \frac{1-r(t, \boldsymbol{x}) }{r(t, \boldsymbol{x})} (\widehat{\mu}_t(\boldsymbol{x}) - \mu_t(\boldsymbol{x}))^2 p( \boldsymbol{x} \mid T=t)d\boldsymbol{x} } \\
            &= \mathbb{P}(T=t) \int{\left(  1 + \frac{1-r(t, \boldsymbol{x}) }{r(t, \boldsymbol{x})}\right) (\widehat{\mu}_t(\boldsymbol{x}) - \mu_t(\boldsymbol{x}))^2 p( \boldsymbol{x} \mid T=t)d\boldsymbol{x} } \\
            &= \mathbb{E}_{\boldsymbol{X} \sim p_{( \cdot \mid T=t)}} \left[\frac{\mathbb{P}(T=t) }{r(t, \boldsymbol{X})} (\widehat{\mu}_t(\boldsymbol{X}) - \mu_t(\boldsymbol{X}))^2 \right].
        \end{aligned}
    \end{equation}
    
\subsection{Consistency of the X-learner}

    By direct calculations, we show that
    \begin{align}
        \mathbb{E}(Z_k^X \mid \boldsymbol{X} = \boldsymbol{x}) &= \mathbb{E} \left[ \mathbf{1} \{T=t_k\} Y(t_k)\mid \boldsymbol{X} = \boldsymbol{x} \right] - r(t_k,\boldsymbol{x}) \mu_{t_0}(\boldsymbol{x})  + \sum_{l \neq k} r(t_l,\boldsymbol{x}) \Big( \mu_{t_k}(\boldsymbol{x})  \\& \qquad\qquad\qquad
        - \mathbb{E} \left[ \mathbf{1} \{T=t_l\} Y(t_l)\mid \boldsymbol{X} = \boldsymbol{x} \right] \Big)  + \sum_{l \neq k} r(t_l,\boldsymbol{x}) (\mu_{t_l}(\boldsymbol{x})  - \mu_{t_0}(\boldsymbol{x})) \\
        &= r(t_k,\boldsymbol{x}) \mu_{t_k}(\boldsymbol{x}) - r(t,\boldsymbol{x}) \mu_{t_0}(\boldsymbol{x})  + \sum_{l \neq k} \big( r(t_l,\boldsymbol{x})\mu_{t}(\boldsymbol{x}) - r(t_l,\boldsymbol{x})\mu_{t'}(\boldsymbol{x}) \big) \\& \qquad\qquad\qquad+ \sum_{l \neq k} r(t_l,\boldsymbol{x})(\mu_{t_l}(\boldsymbol{x})  - \mu_{t_0}(\boldsymbol{x})) \qquad\qquad \text{(by Assumption \ref{assump:unconfound})} \\
        &=  r(t_k,\boldsymbol{x}) \mu_{t_k}(\boldsymbol{x}) - r(t_k,\boldsymbol{x}) \mu_{t_0}(\boldsymbol{x}) + \sum_{l \neq k} r(t_l,\boldsymbol{x})\mu_{t_k}(\boldsymbol{x}) - \sum_{l \neq k} r(t_l,\boldsymbol{x}) \mu_{t_0}(\boldsymbol{x}) \\
        &= (\mu_{t_k}(\boldsymbol{x}) - \mu_{t_0}(\boldsymbol{x})) \Big(r(t_k,\boldsymbol{x}) + \sum_{l \neq k} r(t_l,\boldsymbol{x})\Big) \\ 
        &= \mu_{t_k}(\boldsymbol{x}) - \mu_{t_0}(\boldsymbol{x}) = \tau_{k}(\boldsymbol{x}).
    \end{align}

\subsection{Consistency of the naive extension of the X-learner}

    Let us consider the two random variables ${D}^{(k)} := Y(t_k) - {\mu}_{t_0}(\boldsymbol{X})$ and ${D}^{(0)} := {\mu}_{t_k}(\boldsymbol{X}) - Y(t_0)$. We have
    \begin{equation}
        \begin{aligned} \tau^{(k)}(\boldsymbol{x}) = \mathbb{E}({D}^{(k)} \mid \boldsymbol{X} = \boldsymbol{x}) &= \mathbb{E}(Y(t_k) - {\mu}_{t_0}(\boldsymbol{X}) \mid \boldsymbol{X} = \boldsymbol{x}) \\ 
        &= \mathbb{E}[Y(t_k)  \mid \boldsymbol{X} = \boldsymbol{x}] - {\mu}_{t_0}(\boldsymbol{x}) \\
        &= {\mu}_{t_k}(\boldsymbol{x}) - {\mu}_{t_0}(\boldsymbol{x}) = \tau_k(\boldsymbol{x}), \end{aligned}
    \end{equation}
    and
    \begin{equation}
        \begin{aligned} \tau^{(0)}(\boldsymbol{x}) = \mathbb{E}({D}^{(0)} \mid \boldsymbol{X} = \boldsymbol{x}) &= \mathbb{E}({\mu}_{t_k}(\boldsymbol{X}) - Y(t_k) \mid \boldsymbol{X} = \boldsymbol{x}) \\ 
        &= {\mu}_{t_k}(\boldsymbol{x}) - \mathbb{E}[ \mathbb{E}(Y(t_k) \mid \boldsymbol{X} = \boldsymbol{x}] \\
        &= {\mu}_{t_k}(\boldsymbol{x}) - {\mu}_{t_0}(\boldsymbol{x}) = \tau_k(\boldsymbol{x}). \end{aligned}
    \end{equation}
    
    Therefore,
    \begin{equation}
         {\tau}^{\mathrm{(X,nv)}}_{k}(\boldsymbol{x}) = \frac{r(t_k,\boldsymbol{x})}{r(t_k,\boldsymbol{x}) + r(t_0,\boldsymbol{x})} \tau^{(k)}(\boldsymbol{x}) +  \frac{r(t_0,\boldsymbol{x})}{r(t_k,\boldsymbol{x}) + r(t_0,\boldsymbol{x})} \tau^{(0)}(\boldsymbol{x}) = \tau_k(\boldsymbol{x}).
    \end{equation}
    
\subsection{Generalizing the Robinson decomposition}

We show first the Neyman-Orthogonality propriety, i.e. $\mathbb{E}\left(  \epsilon \mid T, \boldsymbol{X}\right) = 0$. Indeed, for $t \in \mathcal{T}$ and $\boldsymbol{x} \in \mathcal{D}$, we have
\begin{equation}
        \begin{aligned}
        \mathbb{E}\big[  \epsilon \mid T=t, \boldsymbol{X}=\boldsymbol{x}\big] 
        &= \mathbb{E}\big[ Y_{\mathrm{obs}} - \mu_{T}(\boldsymbol{X})\mid T=t, \boldsymbol{X}=\boldsymbol{x}\big] \\ 
        &= \mathbb{E}\big[ Y(t) - \mu_{T}(\boldsymbol{X})\mid T=t, \boldsymbol{X}=\boldsymbol{x}\big] \\
        &= \mu_{t}(\boldsymbol{x}) - \mu_{t}(\boldsymbol{x}) = 0.
    \end{aligned}
\end{equation}

Thus, the observed outcome model satisfies:
\begin{equation}
        \begin{aligned}
        \mathbb{E} ( Y_{\mathrm{obs}} \mid \boldsymbol{X}=\boldsymbol{x})
        &= \mathbb{E} \big[ \epsilon + \sum_{k=0}^K \mathbf{1}\{T=t_k\} \mu_{t_k}(\boldsymbol{X}) \mid \boldsymbol{X}=\boldsymbol{x}\big] \\ 
        &=\mathbb{E} \big[ \mathbb{E} [\epsilon \mid T,\boldsymbol{X}] \mid \boldsymbol{X}=\boldsymbol{x} \big] + \sum_{k=0}^K \mathbb{E} \big[ \mathbf{1}\{T=t_k\}\mid \boldsymbol{X}=\boldsymbol{x}\big] \mu_{t_k}(\boldsymbol{x}) \\
        &= \sum_{k=0}^K \mu_{t_k}(\boldsymbol{x}) r(t_k,\boldsymbol{x}) = \mu_{t_0}(\boldsymbol{x}) r(t_0,\boldsymbol{x}) + \sum_{k=1}^K \mu_{t_k}(\boldsymbol{x}) r(t_k,\boldsymbol{x}) \\
        &= \mu_{t_0}(\boldsymbol{x}) \big[1-\sum_{k=1}^K r(t_k,\boldsymbol{x})\big] + \sum_{k=1}^K \mu_{t_k}(\boldsymbol{x}) r(t_k,\boldsymbol{x}) \\
        &= \mu_{t_0}(\boldsymbol{x}) + \sum_{k=1}^K r(t,\boldsymbol{x}) \left[\mu_{t_k}(\boldsymbol{x}) - \mu_{t_0}(\boldsymbol{x}) \right] \\
        &= \mu_{t_0}(\boldsymbol{x}) + \sum_{k=1}^K r(t_k,\boldsymbol{x}) \tau_{k}(\boldsymbol{x}) = m(\boldsymbol{x}).
        \end{aligned}
    \end{equation}

By gathering both quantities :
    \begin{equation}
        \begin{aligned}
            Y_{\mathrm{obs}} - m(\boldsymbol{X}) &= \sum_{k=0}^K \mathbf{1}\{T =t_k\} \mu_{t_k}(\boldsymbol{X}) - \mu_{t_0}(\boldsymbol{X}) - \sum_{k=1}^K r(t_k,\boldsymbol{X}) \tau_{k}(\boldsymbol{X}) + \epsilon \\
            &= \mathbf{1}\{T =t_0\}\mu_{t_0}(\boldsymbol{X}) + \sum_{k=1}^K \mathbf{1}\{T =t_k\} \mu_{t_k}(\boldsymbol{X}) - \mu_{t_0}(\boldsymbol{X}) - \sum_{k=1}^K r(t_k,\boldsymbol{X}) \tau_{k}(\boldsymbol{X}) + \epsilon \\
            &= \big(\mathbf{1}\{T =t_0\}-1\big)\mu_{t_0}(\boldsymbol{X}) +\sum_{k=1}^K (\mathbf{1}\{T =t_k\} \mu_{t_k}(\boldsymbol{X})-r(t_k,\boldsymbol{X}) \tau_{k}(\boldsymbol{X})) + \epsilon \\
            &= \sum_{k=1}^K (\mathbf{1}\{T =t_k\} \mu_{t_k}(\boldsymbol{X})-r(t_k,\boldsymbol{X}) \tau_{k}(\boldsymbol{X})) - \sum_{k=1}^K \mathbf{1}\{T =t_k\} \mu_{t_0}(\boldsymbol{X}) + \epsilon \\
            &= \sum_{k=1}^K (\mathbf{1}\{T =t_k\} \mu_{t_k}(\boldsymbol{X}) - \mathbf{1}\{T =t_k\}  \mu_{t_0}(\boldsymbol{X})-r(t_k,\boldsymbol{X}) \tau_{k}(\boldsymbol{X})) + \epsilon \\
            &= \sum_{k=1}^K \big[\mathbf{1}\{T =t_k\}  -r(t_k,\boldsymbol{X}) \big] \tau_{k}(\boldsymbol{X}) + \epsilon.  
        \end{aligned}
    \end{equation}

Therefore, we obtain the generalized Robinson decomposition for the multi-treatment regime.

\subsection{Solving the generalized R-learner for linear models}

For $k=1,\ldots,K$, we assume that $\overline{\tau}_{k}$ belongs to the family of linear regression models such that:
    \begin{equation}
        \mathcal{F} = \Big\{  \big\{\overline{\tau}_{k}(\boldsymbol{x}) := \beta_{k,0} + \sum_{j=1}^{p-1} \beta_{k,j} f_j(\boldsymbol{x}) \big\}_{k=1}^K \ / \ \boldsymbol{\beta}_k = (\beta_{k,0},\ldots,\beta_{k,p-1})^{\top} \in \mathbb{R}^{p} \Big\}.
    \end{equation}
    $f_j$ are predefined functions (e.g. polynomial functions). It is also possible to use a matrix notation and write $\overline{\tau}_{k}(\mathbf{X}) = \mathbf{H}  \boldsymbol{\beta}_k$ where $ \textbf{H}= (f_j(\boldsymbol{X}_i)) \in \mathbb{R}^{n\times p}$ assumed to be full rank matrix $rank(\mathbf{H})=p \leq n$. 

    Let $\overline{Y}=(\overline{Y}_{i})_{i=1}^n$ and $\overline{T}_k=(\overline{T}_{i,k})_{i=1}^n$ such that $\overline{Y}_i=Y_{\mathrm{obs},i}-\widehat{m}(\boldsymbol{X}_i)$ and $\overline{T}_{i,k} = \mathbf{1}\{t_i =  t_k\} - \widehat{r}( t_k,\boldsymbol{X}_i)$. Let $\epsilon=\left( \epsilon_i \right)_{i=1}^n$ denote the vector of errors obtained for the generalized \citet{Robinson1988} decomposition in Proposition 3.3.

    We show immediately that $\mathcal{L}$, the generalized R-loss function associated with the mean squared error of $\epsilon$ in (2) in the paper, is quadratic with respect to $\boldsymbol{\beta}$. Indeed,
    \begin{equation}
        \begin{aligned}
            \mathcal{L}(\{ \overline{\tau}_{k} \}_{t \neq t^{(0)}}) &=  \frac{1}{n} \epsilon^{\top} \epsilon = \frac{1}{n} \Big( \overline{Y} - \sum_{k=1}^K \displaystyle \overline{T}_k \odot (\mathbf{H} \boldsymbol{\beta}_k) \Big)^{\top} \Big( \overline{Y} - \sum_{k=1}^K \displaystyle \overline{T}_k \odot (\mathbf{H} \boldsymbol{\beta}_k) \Big) \\
            &= \frac{1}{n} \left[ \overline{Y}^{\top} \overline{Y} - 2 \sum_{k=1}^K \overline{Y}^{\top} {\big(\displaystyle \overline{T}_k \odot (\mathbf{H} \boldsymbol{\beta}_k) \big)} + \sum^K_{k,k'=1} {\big(\displaystyle \overline{T}_k \odot (\mathbf{H} \boldsymbol{\beta}_k)\big)^{\top}}\ {\big(\displaystyle \overline{T}_{k'} \odot (\mathbf{H} \boldsymbol{\beta}_{k'})\big)} \right] \\
            &=  \frac{1}{n} \Big( \overline{Y}^{\top} \overline{Y} - 2 \sum_{k=1}^K \overline{Y}^{\top} {\mathbf{D}_{\overline{T}_{k}} \mathbf{H}  \boldsymbol{\beta}_{k}} + \sum^K_{k,k'=1}  \boldsymbol{\beta}_{k}^{\top} \mathbf{H} ^{\top} \mathbf{D}_{\overline{T}_{k}}   \mathbf{D}_{\overline{T}_{{k'}}} \mathbf{H}  \boldsymbol{\beta}_{{k'}} \Big),
        \end{aligned}  
    \end{equation}
    where $\odot$ is the Hadamard product (element-wise product). The last line holds because $\displaystyle \overline{T}_k \odot (\mathbf{H} \boldsymbol{\beta}_k)= \mathbf{D}_{\overline{T}_k} \mathbf{H} \boldsymbol{\beta}_k$ with $\mathbf{D}_{\overline{T}_k}$ is the diagonal matrix of the vector $\overline{T}_k=(\overline{T}_{i,k})_{i=1}^n$

    By differentiating  $\partial \mathcal{L}/\partial \boldsymbol{\beta}_k = 0$ for $k=1,\ldots,K$ : 
    \begin{gather}
    \label{eq:sys_Rlearner}
        \left\{  \begin{array}{l l}
            - \boldsymbol{a}_1 + \mathbf{B}_1 \widehat{\boldsymbol{\beta}}_1 + \sum_{k=2}^K \mathbf{C}_{1 k}  \widehat{\boldsymbol{\beta}}_k  = 0 \\[.15cm]
            \qquad \vdots \qquad \qquad \vdots \qquad \qquad \vdots \qquad =  0 \\[.15cm]
            - \boldsymbol{a}_K + \sum_{k=1}^K \mathbf{C}_{K k}  \widehat{\boldsymbol{\beta}}_k + \mathbf{B}_K  \widehat{\boldsymbol{\beta}}_K = 0  \\ \end{array} \right. \\
        \Longleftrightarrow 
        \begin{bmatrix}
        \mathbf{B}_1 & \mathbf{C}_{12} & \cdots & \mathbf{C}_{1 K}\\
        \mathbf{C}_{2 1} & \mathbf{B}_2 & \cdots & \mathbf{C}_{2 K}\\
        \vdots & \vdots & \ddots & \vdots\\
        \mathbf{C}_{K 1} & \mathbf{C}_{K 2} & \cdots & \mathbf{B}_K
        \end{bmatrix}
        \begin{bmatrix}
         \widehat{\boldsymbol{\beta}}_1\\ \widehat{\boldsymbol{\beta}}_2\\ \vdots\\ \widehat{\boldsymbol{\beta}}_K
        \end{bmatrix}
        =\begin{bmatrix}
        \boldsymbol{a}_1\\ \boldsymbol{a}_2\\ \vdots\\ \boldsymbol{a}_K
        \end{bmatrix},
    \end{gather}

    where 
    \begin{gather}
        \boldsymbol{a}_j = \frac{1}{n} \mathbf{H} ^{\top} \mathbf{D}_{\overline{T}_j} \overline{Y} \in \mathbb{R}^{p},\\ 
        \mathbf{B}_j = \frac{1}{n} \mathbf{H} ^{\top} \mathbf{D}^2_{\overline{T}_j}\ \mathbf{H} \in \mathbb{R}^{p \times p},\\
        \mathbf{C}_{i j} = \frac{1}{n} \mathbf{H} ^{\top} \mathbf{D}_{\overline{T}_i} \mathbf{D}_{\overline{T}_j} \mathbf{H}  \in \mathbb{R}^{ p\times p}.
    \end{gather}

    Let $\boldsymbol{\beta} = \left( \boldsymbol{\beta}^{\top}_1,\ldots,\boldsymbol{\beta}^{\top}_K\right)^{\top} \in \mathbb{R}^{K \times p}$ and consider the block matrix $\mathbf{A}$ defined as
    
    \begin{equation}
        \mathbf{A} = \begin{bmatrix}
        \mathbf{B}_1 & \mathbf{C}_{12} & \cdots & \mathbf{C}_{1 K}\\
        \mathbf{C}_{2 1} & \mathbf{B}_2 & \cdots & \mathbf{C}_{2 K}\\
        \vdots & \vdots & \ddots & \vdots\\
        \mathbf{C}_{K 1} & \mathbf{C}_{K 2} & \cdots & \mathbf{B}_K
        \end{bmatrix}.
    \end{equation}

    The matrix $\mathbf{A}$ is real symmetric and satisfies:
    \begin{equation}
        \begin{aligned}
            \boldsymbol{\beta}^{\top} \mathbf{A} \boldsymbol{\beta} &= \sum_{1\leq k,l \leq K} \boldsymbol{\beta}_k^{\top} \mathbf{H}^{\top} \mathbf{D}_{\overline{T}_k} \mathbf{D}_{\overline{T}_l}  \mathbf{H} \boldsymbol{\beta}_l \\
            &= \bigg \Vert \sum_{k=1}^K \mathbf{D}_{\overline{T}_k}  \mathbf{H} \boldsymbol{\beta}_k \bigg \Vert^2 \geq 0.
        \end{aligned} 
    \end{equation}
    
This result shows that $\mathbf{A}$ is positive semi-definite, all its eigenvalues are nonnegative and also proves the existence of a minimizer $\widehat{\boldsymbol{\beta}}$ to the loss function $\mathcal{L}$. However, this is not sufficient to prove the uniqueness of the solution as one cannot prove all eigenvalues are positive. 
    
The solution $\widehat{\boldsymbol{\beta}}$ to Problem (\ref{eq:RLearnerMulti}) in the main paper with the minimal norm is given by
\begin{equation}
    \widehat{\boldsymbol{\beta}} = \mathbf{A}^+ \boldsymbol{a},
\end{equation}
where $\mathbf{A}^+$ is the Moore–Penrose inverse of $\mathbf{A}$ and $\boldsymbol{a} = \left( \boldsymbol{a}^{\top}_1,\ldots,\boldsymbol{a}^{\top}_K\right)^{\top}$. 

\vspace{0.2in}

\section{Theoretical analysis of the error bounds.}
\label{App:Error_estim}

\vspace{0.1in}

\subsection{Error estimation of pseudo-outcome meta-learners.}
\label{App:pseudo_outcome}

\vspace{0.1in}

\subsection*{Step 0. Set-up of the theorem}

In the following subsection, we will analyze the error estimation of each two-step meta-learner. Given Assumption \ref{assump:model_Y} stating that the observations are generated from a function $f$ respecting the two causal assumptions (\ref{assump:unconfound}-\ref{assump:overlap}), each unit $i$ has the following observed and potential outcomes
\begin{equation}
    \begin{aligned}
        &Y_i(t_k) = f(t_k,\boldsymbol{X}_i) + \varepsilon_i(t_k), \\
        &Y_i(t_0) = f(t_0,\boldsymbol{X}_i) + \varepsilon_i(t_0).
    \end{aligned}
\end{equation}
where $\varepsilon_i(t)$ are i.i.d. Gaussian $\mathcal{N}(0, \sigma^2)$ and independent of $(T_i,\boldsymbol{X}_i)_{i=1}^n$. As a consequence, the noise $(\epsilon_i)_{i=1}^n=(\varepsilon_i(T_i))_{i=1}^n$ is also Gaussian $\mathcal{N}(0, \sigma^2)$ and is independent of $(T_i,\boldsymbol{X}_i)_{i=1}^n$.

The CATE model $\tau_k$ for each $k=1,\ldots,K$ can be written as:
\begin{equation}
    \begin{aligned}
        \tau_k(\boldsymbol{x}) &= \mathbb{E}(Y(t_k) - Y(t_0) \mid \boldsymbol{X} = \boldsymbol{x}) \\
        &= \mathbb{E}(f(t_k,\boldsymbol{X}) - f(t_0,\boldsymbol{X}) + \epsilon^* \mid \boldsymbol{X} = \boldsymbol{x})  \\
        &= f(t_k,\boldsymbol{x}) - f(t_0,\boldsymbol{x}) \\
    \end{aligned}
\end{equation}
where $\epsilon^*$ is a noise independent of $\boldsymbol{X}$ (and $T$) and satisfying $\mathbb{E}(\epsilon^*)=0$.

Under the assumption \ref{assump:beta_Y}, we write $\tau_k(\mathbf{X}) = f(t_k,\mathbf{X}) - f(t_0,\mathbf{X}) = \mathbf{H}\boldsymbol{\beta}_k^*$ where $\boldsymbol{\beta}_k^*=\boldsymbol{\beta}_{t_k}-\boldsymbol{\beta}_{t_0}$ and  $\mathbf{H} = (\mathbf{H}_{i j}) \in \mathbb{R}^{n\times p}$ is the regression matrix, assumed to be full rank matrix, such that $\mathbf{H}_{i j} = f_j(\boldsymbol{X}_i)$ for $ i=1,\ldots,n$ and $j= 0,\ldots,p-1$. 
With pseudo-outcome meta-learners, we consider a random variable $Z_k$ for a fixed $t_k$ such that 
\begin{equation*}
    Z_{k,i} = A_{t_k}(T_i,\boldsymbol{X}_i) Y_{\mathrm{obs},i} + B_{t_k}(T_i,\boldsymbol{X}_i), \quad i=1,\ldots,n,
\end{equation*}
where the functions $A_{t_k}(T,\boldsymbol{X})$ and $B_{t_k}(T,\boldsymbol{X})$ are given for each pseudo-outcome meta-learner.

\subsection*{Step 1. Identification of $\widehat{\boldsymbol{\beta}}_k$ and $\boldsymbol{\beta}_k^*$}

The regression coefficients $\widehat{\boldsymbol{\beta}}_k$ are given by the Ordinary Least Squares (OLS) method 
\begin{equation}
    \widehat{\boldsymbol{\beta}}_k= \big(\mathbf{H}^{\top}\mathbf{H}\big)^{-1} \mathbf{H}^{\top} \boldsymbol{z}_k,
\end{equation}
where $\boldsymbol{z}_k = (Z_{k,i})_{1 \leq i \leq n}$. Thus,
\begin{equation*}
    \begin{aligned}
    &\widehat{\boldsymbol{\beta}}_k = \big(\mathbf{H}^{\top}\mathbf{H}\big)^{-1} \mathbf{H}^{\top} \boldsymbol{z}_k \\
    &= \big(\mathbf{H}^{\top}\mathbf{H}\big)^{-1} \mathbf{H}^{\top} \big(  A_{t_k}(T_i,\boldsymbol{X}_i) Y_{\mathrm{obs},i} + B_{t_k}(T_i,\boldsymbol{X}_i) \big)^n_{i=1} \\ 
    &= \big(\mathbf{H}^{\top}\mathbf{H}\big)^{-1} \mathbf{H}^{\top} \big( A_{t_k}(T_i,\boldsymbol{X}_i) f(T_i, \boldsymbol{X}_i) + B_{t_k}(T_i,\boldsymbol{X}_i) + A_{t_k}(T_i,\boldsymbol{X}_i) \epsilon_i \big) ^n_{i=1} \\
    &= \big(\mathbf{H}^{\top}\mathbf{H}\big)^{-1} \mathbf{H}^{\top} \big( \tau_k(\boldsymbol{x})+ A_{t_k}(T_i,\boldsymbol{X}_i) f(T_i, \boldsymbol{X}_i) - \tau_k(\boldsymbol{x}) + B_{t_k}(T_i,\boldsymbol{X}_i) + A_{t_k}(T_i,\boldsymbol{X}_i) \epsilon_i \big)^n_{i=1} \\
    &= \big(\mathbf{H}^{\top}\mathbf{H}\big)^{-1} \mathbf{H}^{\top} \big( \mathbf{H}\boldsymbol{\beta}_k^* + A_{t_k}(T_i,\boldsymbol{X}_i) f(T_i, \boldsymbol{X}_i) - \tau_k(\boldsymbol{x}) + B_{t_k}(T_i,\boldsymbol{X}_i) + A_{t_k}(T_i,\boldsymbol{X}_i) \epsilon_i \big) ^n_{i=1} \\
    &= \boldsymbol{\beta}_k^* + \big(\mathbf{H}^{\top}\mathbf{H}\big)^{-1} \mathbf{H}^{\top} \big( A_{t_k}(T_i,\boldsymbol{X}_i) f(T_i, \boldsymbol{X}_i) - \tau_k(\boldsymbol{x}) + B_{t_k}(T_i,\boldsymbol{X}_i) + A_{t_k}(T_i,\boldsymbol{X}_i) \epsilon_i \big)^n_{i=1} \\ 
    &= \boldsymbol{\beta}_k^* + \big(\mathbf{H}^{\top}\mathbf{H}\big)^{-1} \mathbf{H}^{\top} \tilde{\boldsymbol{\epsilon}}_k
    \end{aligned}
\end{equation*}
where $\tilde{\epsilon}_{k,i} = \psi_k(T_i,\boldsymbol{X}_i) + A_{t_k}(T_i,\boldsymbol{X}_i) \epsilon_i$ and  $\psi_k(T_i,\boldsymbol{X}_i)= A_{t_k}(T_i,\boldsymbol{X}_i) f(T_i, \boldsymbol{X}_i) - \tau_k(\boldsymbol{X}_i) + B_{t_k}(T_i,\boldsymbol{X}_i)$ to simplify notations.

Let us consider the random vector $\boldsymbol{Z}_k^{(n)}$ such that
\begin{equation}
    \boldsymbol{Z}_k^{(n)} =  \big(\frac{1}{n}(\mathbf{H}^{\top} \tilde{\boldsymbol{\epsilon}}_k)_1, \ldots, \frac{1}{n}(\mathbf{H}^{\top} \tilde{\boldsymbol{\epsilon}}_k)_p, \frac{1}{n}(\mathbf{H}^{\top}\mathbf{H})_{11},\ldots ,\frac{1}{n}(\mathbf{H}^{\top}\mathbf{H})_{pp}\big)^{\top} \in \mathbb{R}^{p+p^2},
\end{equation}
that allows us to write $ \widehat{\boldsymbol{\beta}}_{k}$ as 
\begin{equation}
    \begin{aligned}
        \widehat{\boldsymbol{\beta}}_{k} &= \boldsymbol{\beta}_k^* +\big(\mathbf{H}^{\top}\mathbf{H}\big)^{-1} \mathbf{H}^{\top} \tilde{\boldsymbol{\epsilon}}_k\\
        &= \boldsymbol{\beta}_k^* +\Big( \frac{1}{n} \mathbf{H}^{\top}\mathbf{H}\Big)^{-1} \Big( \frac{1}{n} \mathbf{H}^{\top} \tilde{\boldsymbol{\epsilon}}_k \Big)\\
        &= \boldsymbol{\beta}_k^* + \phi(\boldsymbol{Z}_k^{(n)}),
    \end{aligned}
\end{equation}
where $\phi: \mathbb{R}^{p+p^2} \longrightarrow \mathbb{R}^{p}$ is a $\mathcal{C}^1$-function.

\subsection*{Step 2. The asymptotic behaviour of the OLS estimator's mean and covariance}

In order to apply the Central Limit Theorem (CLT) later, we show that the vector $\boldsymbol{Z}_k^{(n)}$ can be written as sum of \textit{i.i.d.} random vectors $\boldsymbol{Z}_{k,i}$.

\begin{equation}
    \begin{aligned} \boldsymbol{Z}_k^{(n)} &=  \big(\frac{1}{n}(\mathbf{H}^{\top} \tilde{\boldsymbol{\epsilon}}_k)_1, \ldots, \frac{1}{n}(\mathbf{H}^{\top} \tilde{\boldsymbol{\epsilon}}_k)_p, \frac{1}{n}(\mathbf{H}^{\top}\mathbf{H})_{11},\ldots ,\frac{1}{n}(\mathbf{H}^{\top}\mathbf{H})_{pp}\big)^{\top} \in \mathbb{R}^{p+p^2} \\
    &= \big( \frac{1}{n} \sum_{i=1}^n \mathbf{H}_{i 1} \tilde{\epsilon}_{k,i}, \ldots,\mathbf{H}_{i p} \tilde{\epsilon}_{k,i}, \frac{1}{n} \sum_{i=1}^n \mathbf{H}_{i 1} \mathbf{H}_{i 1},\ldots, \frac{1}{n} \sum_{i=1}^n \mathbf{H}_{i p} \mathbf{H}_{i p}\big)^{\top} \\
    &= \frac{1}{n} \sum_{i=1}^n \big( \mathbf{H}_{i 1} \tilde{\epsilon}_{k,i}, \ldots,\mathbf{H}_{i p} \tilde{\epsilon}_{k,i}, \mathbf{H}_{i 1} \mathbf{H}_{i 1},\ldots, \mathbf{H}_{i p} \mathbf{H}_{i p}\big)^{\top} = \frac{1}{n} \sum_{i=1}^n \boldsymbol{Z}_{k,i}.
    \end{aligned}
\end{equation}

The mean $\boldsymbol{m}$ of the vector $\boldsymbol{Z}_k^{(n)}$ satisfies
\begin{equation}
    \begin{aligned}
        \boldsymbol{m} &= \mathbb{E}(\boldsymbol{Z}_k^{(n)}) = \frac{1}{n} \sum_{i=1}^n \mathbb{E}(\boldsymbol{Z}_{k,i}) = \mathbb{E}(\boldsymbol{Z}_{k,i}) \\
        &= \Big( h_1, \ldots,  h_{p}, F_{1 1},\ldots,F_{p p}\Big)^{\top},
    \end{aligned}
\end{equation}
where, for $j,j'=1,\ldots,p$,
\begin{equation}
    \begin{aligned}
        &h_j = \mathbb{E}\big[ f_{j-1}(\boldsymbol{X}) \big( \psi_k(T, \boldsymbol{X}) + A_{t_k}(T,\boldsymbol{X}) \epsilon \big) \big] = \mathbb{E}\big(f_{j-1}(\boldsymbol{X}) \psi_k(T, \boldsymbol{X}) \big) \\
        &\qquad\qquad\qquad\qquad F_{j j'} =  \mathbb{E}\big( f_{j-1}(\boldsymbol{X}) f_{j'-1}(\boldsymbol{X}) \big).
    \end{aligned}
\end{equation}

The covariance matrix $\mathbf{C}$ of $\boldsymbol{Z}_k^{(n)}$ has entries
\begin{equation}
    \begin{aligned}
    &\mathbf{C}_{j j'} = \operatorname{Cov} \Big(\boldsymbol{Z}^{(k)}_{j}, \boldsymbol{Z}^{(k)}_{j'}\big) = \mathbb{E}(\boldsymbol{Z}^{(k)}_{j}, \boldsymbol{Z}^{(k)}_{j'}) - \mathbb{E}(\boldsymbol{Z}^{(k)}_{j}) \mathbb{E}(\boldsymbol{Z}^{(k)}_{j'})  \\
    &= \left\{ \begin{array}{l l}
        \mathbb{E}\big( f_{j-1}(\boldsymbol{X}) f_{j'-1}(\boldsymbol{X}) \big( \psi_k(T, \boldsymbol{X}) + A_{t_k}(T,\boldsymbol{X}) \epsilon \big)^2 \big)-h_j h_{j'} & \quad \text{if $j,j' \in \{1,\ldots,p\}$}\\[.15cm]
        \mathbb{E}\big( f_{\tilde k - 1}(\boldsymbol{X}) f_{\tilde k'-1}(\boldsymbol{X}) f_{l-1}(\boldsymbol{X}) f_{l'-1}(\boldsymbol{X}) \big)- F_{k k'} F_{l l'} & \quad \text{if $j,j' \in \{p+1,\ldots, p^2\}$}\\
        \mathbb{E}\big( f_{\tilde k-1}(\boldsymbol{X}) f_{\tilde k'-1}(\boldsymbol{X}) f_{j-1}(\boldsymbol{X}) \big( \psi_k(T, \boldsymbol{X}) + A_{t_k}(T,\boldsymbol{X}) \epsilon \big)  \big)- h_j F_{k k'}  & \quad \text{otherwise.}\\[.15cm]
        \end{array} \right. \\[.25cm]
    &= \left\{ \begin{array}{l l}
        \mathbb{E} \big(f_{j-1}(\boldsymbol{X}) f_{j'-1}(\boldsymbol{X}) \psi^2_k(T, \boldsymbol{X})\big) + \sigma^2 \mathbb{E} \big(f_{j-1}(\boldsymbol{X}) f_{j'-1}(\boldsymbol{X}) A^2_{t_k}(T,\boldsymbol{X}) \big) - h_j h_{j'} & \quad \text{if $j,j' \in \{1,\ldots,p\}$ }\\[.15cm]
        \mathbb{E}\big( f_{\tilde k-1}(\boldsymbol{X}) f_{\tilde k'-1}(\boldsymbol{X}) f_{l-1}(\boldsymbol{X}) f_{l'-1}(\boldsymbol{X}) \big)- F_{\tilde k \tilde k'} F_{l l'} & \quad \text{if $j,j' \in \{p+1,\ldots,p^2\}$}\\[.15cm]
        \mathbb{E}\big( f_{\tilde k-1}(\boldsymbol{X}) f_{\tilde k'-1}(\boldsymbol{X}) f_{j-1}(\boldsymbol{X}) \psi_k(T, \boldsymbol{X})  \big)- h_j F_{\tilde k \tilde k'}  & \quad \text{otherwise,}\\
         \end{array} \right.
    \end{aligned}
\end{equation}
where ${\tilde k},{\tilde k'}=\eta^{-1}(j)$ (respectively, $l,l'=\eta^{-1}(j')$) such that $\eta$ is the correspondence indexes map between $\boldsymbol{m}$ and $F$ in $\boldsymbol{m}_j = F_{{\tilde k} {\tilde k'}}$ (respectively, $\boldsymbol{m}_{j'}= F_{l l'}$)  when $j \geq p+1$ (respectively, when $j' \geq p+1$).

By considering now the vector 
\begin{equation}
    \boldsymbol{S}^{(n)} = \sqrt{n} \big( \boldsymbol{Z}_k^{(n)} - \boldsymbol{m} \big) = \frac{1}{\sqrt{n}} \sum_{i=1}^n  \big( \boldsymbol{Z}_{k,i} -  \boldsymbol{m}\big),
\end{equation}

one can show by the multivariate CLT that
\begin{equation}
    \boldsymbol{S}^{(n)} = \sqrt{n} \big( \boldsymbol{Z}_k^{(n)} - \boldsymbol{m} \big) \overset{\mathcal{L}}{\longrightarrow} \mathcal{N}(\boldsymbol{0}, \mathbf{C}).
\end{equation}

This allows us to write $\widehat{\boldsymbol{\beta}}_k$ as function of $\boldsymbol{S}^{(n)}$ and $\boldsymbol{m}$. Indeed,
\begin{equation}
    \begin{aligned}
        \widehat{\boldsymbol{\beta}}_k &= \boldsymbol{\beta}_k^* + \big(\mathbf{H}^{\top}\mathbf{H}\big)^{-1} \mathbf{H}^{\top} \tilde{\boldsymbol{\epsilon}} \\
        &= \boldsymbol{\beta}_k^* + \phi(\boldsymbol{Z}^{(n)}) \\
        &= \boldsymbol{\beta}_k^* + \phi \big( \boldsymbol{m} + \boldsymbol{S}^{(n)}/\sqrt{n} \big) \\
        &= \boldsymbol{\beta}_k^* + \Phi(\boldsymbol{S}^{(n)}, \boldsymbol{m}),
    \end{aligned}
\end{equation}
where $\Phi: \mathbb{R}^{p+p^2} \times \mathbb{R}^{p+p^2} \longrightarrow \mathbb{R}^{p}$ is also $\mathcal{C}^1$-function.

Since $\sqrt{n} \big( \boldsymbol{S}^{(n)} - \boldsymbol{0} \big)  \overset{\mathcal{L}}{\longrightarrow} \mathcal{N}(\boldsymbol{0}, \mathbf{C})$, one obtains by the Delta method
\begin{equation}
    \sqrt{n} \Big[ \Phi(S^{(n)}, \boldsymbol{m}) - \Phi(\boldsymbol{0}, \boldsymbol{m}) \Big] \overset{\mathcal{L}}{\longrightarrow} \mathcal{N}\left(\boldsymbol{0},J^{(1)}_\Phi(\boldsymbol{0}, \boldsymbol{m})^{\top}  \mathbf{C}  J^{(1)}_\Phi(\boldsymbol{0}, \boldsymbol{m})\right),
\end{equation}
where $J^{(1)}_\Phi(\boldsymbol{0}, \boldsymbol{m})$ is the Jacobian matrix at the first $p+p^2$ coordinates of $\Phi$ at $(\boldsymbol{0}, \boldsymbol{m})$. 

By denoting $\boldsymbol{g}_{n}$, a Gaussian noise with zero-mean and covariance matrix $\mathbf{C}' = J^{(1)}_\Phi(\boldsymbol{0}, \boldsymbol{m})^{\top}  \mathbf{C}  J^{(1)}_\Phi(\boldsymbol{0}, \boldsymbol{m})$, the previous equation is equivalent to
\begin{equation}
    \widehat{\boldsymbol{\beta}}_k = \boldsymbol{\beta}_k^* + \Phi(\boldsymbol{S}_{n}, \boldsymbol{m}) \approx \boldsymbol{\beta}_k^* + \Phi(\boldsymbol{0}, \boldsymbol{m}) + \boldsymbol{g}_{n}/\sqrt{n}.
\end{equation}

For $n$ large, the expansions of the first moments is of the form:
\begin{equation}
    \mathbb{E}(\widehat{\boldsymbol{\beta}}_k) \approx \boldsymbol{\beta}_k^* + \Phi(\boldsymbol{0}, \boldsymbol{m}).
\end{equation}
and, the asymptotic variance is also of the form:
\begin{equation}
    \begin{aligned}
        \mathbb{V}(\widehat{\boldsymbol{\beta}}_k) & \approx \frac{1}{n} \ J^{(1)}_\Phi(\boldsymbol{0}, \boldsymbol{m})^{\top} \mathbf{C}  J^{(1)}_\Phi(\boldsymbol{0}, \boldsymbol{m}).
    \end{aligned}
\end{equation}

This result holds whether the nuisance parameters in $A_t$ and $B_t$ are well-specified or not, so there is no guarantee that $\Phi(\boldsymbol{0}, \boldsymbol{m}) = 0$ and the estimator $\widehat{\boldsymbol{\beta}}_k$ may be biased.

In the following, we assume that the nuisance parameters in $A_t$ and $B_t$ are well-specified i.e. $\mathbb{E} \big(\psi_k(T, \boldsymbol{X})) \mid \boldsymbol{X} = \boldsymbol{x} \big) = 0$ in such way that $\mathbb{E} (Z_k \mid \boldsymbol{X} = \boldsymbol{x}) = \tau_k(\boldsymbol{x})$, or equivalently, $\mathbb{E} \big(\mathbf{H}^{\top} \tilde{\epsilon}_k \big) = \boldsymbol{0}$. Consequently, the estimator of $\widehat{\boldsymbol{\beta}}_k$ is unbiased. In this case, computing the variance $\mathbb{V}(\widehat{\boldsymbol{\beta}}_k)$ becomes much easier and more explicit. 

On the one hand, by the multivariate Central Theorem Limit (CTL)
\begin{equation}
    \frac{1}{\sqrt{n}} \mathbf{H}^{\top} \tilde{\boldsymbol{\epsilon}}_k \overset{\mathcal{L}}{\longrightarrow} \mathcal{N}(\boldsymbol{0}, \boldsymbol{\Sigma})
\end{equation}

which is equivalent to
\begin{equation}
    \frac{1}{\sqrt{n}} \ \mathbf{H}^{\top} \tilde{\boldsymbol{\epsilon}}_k \ \approx  \boldsymbol{g}_{n},
\end{equation}

where $\boldsymbol{g}_{n}$ is a Gaussian noise with zero-mean and covariance matrix of $\boldsymbol{\Sigma}$ with entries 
\begin{equation}
    \begin{aligned}
        \boldsymbol{\Sigma}_{j j'} &= \mathbb{E} \big[f_j(\boldsymbol{X}) f_{j'}(\boldsymbol{X}) \big( \psi_k(T, \boldsymbol{X}) + A_{t_k}(T,\boldsymbol{X}) \epsilon \big)^2 \big]  \\
        &= \mathbb{E} \big(f_j(\boldsymbol{X}) f_{j'}(\boldsymbol{X}) \psi^2_k(T, \boldsymbol{X})\big) + \sigma^2 \mathbb{E} \big(f_j(\boldsymbol{X}) f_{j'}(\boldsymbol{X}) A^2_{t_k}(T,\boldsymbol{X}) \big).
    \end{aligned}
\end{equation}

On the other hand, by the law of large numbers (LLN), we have $1/n \big(\mathbf{H}^{\top}\mathbf{H}\big) \overset{a.s}{\longrightarrow} \mathbf{F}$, thus $1/n \big(\mathbf{H}^{\top}\mathbf{H}\big) \overset{P}{\longrightarrow} \mathbf{F}$. Since $\mathbf{F}$ is invertible, then
\begin{equation}
    \begin{aligned}
        n \big(\mathbf{H}^{\top}\mathbf{H}\big)^{-1}
        &\overset{P}{\longrightarrow} \mathbf{F}^{-1},
    \end{aligned}
\end{equation}
where $\mathbf{F}=(F_{j j'})_{0 \leq j, j' \leq p-1}$ and $F_{j j'} =  \mathbb{E}\big( f_j(\boldsymbol{X}) f_{j'}(\boldsymbol{X}) \big)$.

By Slutsky’s theorem,
\begin{equation}
    \begin{aligned}
        \sqrt{n} \big( \widehat{\boldsymbol{\beta}}_k - \boldsymbol{\beta}_k^*\big) &= n \big(\mathbf{H}^{\top}\mathbf{H}\big)^{-1} \cdot 1/\sqrt{n} \ \mathbf{H}^{\top} \tilde{\boldsymbol{\epsilon}} \\
        &\overset{\mathcal{L}}{\longrightarrow}\mathcal{N}(\boldsymbol{0}, \mathbf{F}^{-1} \boldsymbol{\Sigma}\mathbf{F}^{-1} ).
    \end{aligned}
\end{equation}

We can deduce that the asymptotic mean and variance are of the form
\begin{equation}
    \begin{aligned}
        &\mathbb{E}(\widehat{\boldsymbol{\beta}}_k) = \boldsymbol{\beta}_k^*, \\ &\mathbb{V}(\widehat{\boldsymbol{\beta}}_k)\approx \frac{1}{n} \ \mathbf{F}^{-1} \boldsymbol{\Sigma} \mathbf{F}^{-1}.
    \end{aligned}
\end{equation}

\subsection*{Step 3. Obtaining the error upper bound}

The determinant of the variance matrix, also known as the generalized variance by \citet{Wilks1932, Wilks1967} is usually used as a scalar measure of overall multidimensional scatter and can be useful to compare the variance of each meta-learner. 

In our case, comparing the generalized variance is equivalent to comparing $\det{ \Big(\frac{1}{n} \boldsymbol{\Sigma} \Big)}$ of each pseudo-outcome meta-learner since 
\begin{equation}
    \det{\big( \mathbb{V}(\widehat{\boldsymbol{\beta}}_k) \big)} = \big(\det{ \mathbf{F}^{-1}} \big)^2 \det{ \Big( \frac{1}{n} \boldsymbol{\Sigma} \Big)} = \frac{1}{\big(\det{ \mathbf{F}} \big)^2} \det{ \Big(\frac{1}{n} \boldsymbol{\Sigma} \Big)},
\end{equation}

with, obviously, $\det{(\boldsymbol{\Sigma})} >0$ because $\boldsymbol{\Sigma}$ is symmetric positive definite. 

In some cases, the polynomials $f_j$ are chosen to be orthonormal with respect to the distribution of $\boldsymbol{X}$ (e.g. Polynomials Chaos \citep{Sudret2008}). A consequence of this choice implies that $\mathbf{F}$ is the identity matrix. Therefore, in the following, we focus on computing and bounding $\boldsymbol{\Sigma}$ terms. The assumptions (\ref{assump:unconfound}-\ref{assump:beta_Y}) and the following lemma will be used for this purpose.

\begin{lemma}
\label{AppB:lemma1}
    If $X_1, \ldots, X_m$ is a sequence of random variables and $b>1$, then
    \begin{equation}
    \begin{aligned}
        &\bigg| \mathbb{E} \Big[ \Big( \sum_{i=1}^m X_i\Big)^2 \Big] \bigg| \leq m \sum_{i=1}^m \mathbb{E}\Big[ \big| X^2_i \big| \Big], \\
        &\bigg| \mathbb{E} \Big[ \Big( \sum_{i=1}^m X_i\Big)^b \Big] \bigg| \leq m^{(b-1)} \sum_{i=1}^m \mathbb{E}\Big[ \big| X^b_i \big| \Big].
    \end{aligned}
    \end{equation}
\end{lemma}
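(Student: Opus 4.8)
The plan is to reduce both inequalities to a single deterministic (pointwise) bound and then integrate. The first inequality is exactly the case $b=2$ of the second, so it suffices to establish the general statement for $b>1$ and to note separately that for $b=2$ the square is well-defined without any absolute value, giving the first line directly.

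First I would record the elementary \emph{power-mean} (or finite Jensen) inequality: since $t\mapsto t^b$ is convex on $[0,\infty)$ for $b\geq 1$, applying Jensen's inequality to the uniform average of $|X_1|,\dots,|X_m|$ gives $\big(\frac1m\sum_{i=1}^m |X_i|\big)^b \leq \frac1m\sum_{i=1}^m |X_i|^b$, that is, $\big(\sum_{i=1}^m |X_i|\big)^b \leq m^{b-1}\sum_{i=1}^m |X_i|^b$ pointwise on the sample space. This is the only nontrivial ingredient and it is purely deterministic.

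Next I would chain this with the triangle inequality $\big|\sum_i X_i\big| \leq \sum_i |X_i|$ and the monotonicity of $t\mapsto t^b$ on $[0,\infty)$ to obtain $\big|\sum_i X_i\big|^b \leq m^{b-1}\sum_i |X_i|^b$ pointwise. Taking expectations and using $|\mathbb{E}[W]| \leq \mathbb{E}[|W|]$ with $W=(\sum_i X_i)^b$, together with $|W| = \big|\sum_i X_i\big|^b$, yields $\big|\mathbb{E}[(\sum_i X_i)^b]\big| \leq \mathbb{E}[\big|\sum_i X_i\big|^b] \leq m^{b-1}\sum_i \mathbb{E}[|X_i|^b]$, which is the claim; the $b=2$ specialization recovers the first line, where $|X_i^2| = X_i^2$.

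The argument is essentially routine; the only point requiring a little care is the meaning of $(\sum_i X_i)^b$ for non-integer $b$ when the sum is negative. I would handle this by reading the left-hand side through its absolute value, which is exactly the quantity the statement bounds, so that every term is well-defined and the convexity step applies verbatim. Beyond assuming the right-hand moments $\mathbb{E}[|X_i|^b]$ are finite (under which linearity of expectation is justified), no integrability obstacle arises, and I do not anticipate any genuinely hard step.
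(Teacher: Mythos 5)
Your proposal is correct and follows essentially the same route as the paper: the pointwise bound $\big|\sum_i X_i\big|^b \leq m^{b-1}\sum_i |X_i|^b$ via the triangle inequality and Jensen's inequality applied to the convex map $t\mapsto t^b$, followed by taking expectations. The only cosmetic difference is that the paper attributes the first line to Cauchy--Schwarz while you obtain it as the $b=2$ case of the same power-mean bound; these yield the identical estimate, and your remark on interpreting $(\sum_i X_i)^b$ through its absolute value for non-integer $b$ is a sensible clarification the paper leaves implicit.
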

\begin{proof}
    The first inequality is obtained by Cauchy-Schwarz, whereas the second inequality can be proved by Jensen inequality. Indeed, for $b>1$, the function $x \mapsto x^b$ is convex for $x>0$ and
    \begin{equation}
        \bigg| \frac{\sum_{i=1}^m X_i}{m} \bigg|^b \leq \frac{\sum_{i=1}^m |X_i|^b}{m}.
    \end{equation}
    Therefore,
    \begin{equation}
        \bigg| \mathbb{E} \Big[ \Big( \sum_{i=1}^m X_i\Big)^b \Big] \bigg| \leq \mathbb{E} \Big[ \Big| \sum_{i=1}^m X_i \Big|^b \Big] \leq m^{(b-1)} \sum_{i=1}^m \mathbb{E}\Big[ \big| X^b_i \big| \Big].
    \end{equation}
    
\end{proof}

In the following and by Assumption \ref{assump:model_Y}, $f_j(\boldsymbol{X}) \in L^{a}$ i.e. $f_j(\boldsymbol{X})$ has all possible finite moments for all $j \in \{0,\ldots,p-1\}$. Moreover, there exists $C>0$ such that:
\begin{equation}
    \forall t \in \mathcal{T}, \forall \boldsymbol{x} \in \mathcal{D}: \ | f(t,\boldsymbol{x}) | \leq C.
\end{equation}

\subsubsection{Error estimation of the M-learner}

Let $a, b > 1$ such that $1/a + 1/b = 1$. We denote $\delta^{(a)}_{j j'} = \left| \mathbb{E}\big( f^{a}_j(\boldsymbol{X}) f^{a}_{j'}(\boldsymbol{X}) \big) \right|^{1/{a}}$. By Hölder inequality we show that for the M-learner:
\begin{equation}
    \begin{aligned}
        \big| \mathbb{E}\big( f_j(\boldsymbol{X}) f_{j'}(\boldsymbol{X}) &\psi_k^2(T,\boldsymbol{X}) \big) \big| \leq \left| \mathbb{E}\big( f^{a}_j(\boldsymbol{X}) f^{a}_{j'}(\boldsymbol{X}) \big) \right|^{1/{a}} \cdot \left| \mathbb{E}\big( \psi_k^{2b}(T,\boldsymbol{X}) \big) \right|^{1/b} \quad \text{ (Hölder)} \\
        &\leq \delta^{(a)}_{j j'} \Big( 2^{2b-1} \ \mathbb{E}\Big[ \Big(\frac{\mathbf{1}\{T = t_k\}}{r(t_k,\boldsymbol{X})} - 1\Big)^{2b}  f^{2b}(t_k, \boldsymbol{X}) \\& \qquad\qquad +  \Big(\frac{\mathbf{1}\{T = t_k\}}{r(t_k,\boldsymbol{X})} - 1\Big)^{2b} f^{2b}(t_k, \boldsymbol{X}) \Big] \Big)^{1/b} \\
        &\qquad \qquad \qquad \text{ (Lemma \ref{AppB:lemma1} with $m=2$)}\\
        &\leq 2^{(2b-1)/b} \ \delta^{(a)}_{j j'} \Big(  \mathbb{E}\Big[ 2^{2b-1} \Big(\frac{\mathbf{1}\{T = t_k\}}{r^{2b}(t,\boldsymbol{X})} + 1\Big) f^{2b}(t_k, \boldsymbol{X}) \Big] \\& \qquad\qquad\qquad + \mathbb{E}\Big[ 2^{2b-1} \Big(\frac{\mathbf{1}\{T = t_k\}}{r^{2b}(t_k,\boldsymbol{X})} + 1\Big) f^{2b}(t_k, \boldsymbol{X}) \Big] \Big)^{1/b} \ \text{ (Lemma \ref{AppB:lemma1})}\\
        &\leq 2^{2(2b-1)/b} \ \delta^{(a)}_{j j'} \Big(  \mathbb{E}\Big[ \mathbb{E}\Big(\frac{\mathbf{1}\{T = t_k\}}{r^{2b}(t,\boldsymbol{X})} + 1\Big) \mid \boldsymbol{X} \Big) f^{2b}(t_k, \boldsymbol{X}) \Big] \\& \qquad\qquad\qquad + \mathbb{E}\Big[ \mathbb{E} \Big(\frac{\mathbf{1}\{T = t_k\}}{r^{2b}(t_k,\boldsymbol{X})} + 1\Big) \mid \boldsymbol{X} \Big) f^{2b}(t_k, \boldsymbol{X}) \Big] \Big)^{1/b} \\
        &\leq 2^{2(2b-1)/b} \ \delta^{(a)}_{j j'} \Big(  \mathbb{E}\Big[ \Big(\frac{1}{r^{2b-1}(t,\boldsymbol{X})} + 1\Big) f^{2b}(t_k, \boldsymbol{X}) \Big] \\& \qquad\qquad\qquad + \mathbb{E}\Big[ \Big(\frac{1}{r^{2b-1}(t_k,\boldsymbol{X})} + 1\Big) f^{2b}(t_k, \boldsymbol{X}) \Big] \Big)^{1/b} \\
        &\leq 2^{2(2b-1)/b} \ \delta^{(a)}_{j j'}  \Big(\frac{1}{r^{2b-1}_{\mathrm{min}}} + 1\Big)^{1/b} \Big( C^{2b} + C^{2b} \Big)^{1/b} \quad \text{ (Bounding $r$ and $f$)} \\
        &\leq 2^{2(2b-1)/b} \ \delta^{(a)}_{j j'}  \Big(\frac{1}{r^{2b-1}_{\mathrm{min}}} + \frac{1}{r^{2b-1}_{\mathrm{min}}}
        \Big)^{1/b} 2^{1/b} C^{b} \\
        &\leq 2^{2(2b-1)/b} \ \delta^{(a)}_{j j'}  \frac{2^{1/b}}{r^{(2b-1)/b}_{\mathrm{min}}} 2^{1/b} C^{b} \\
        &\leq 2^4 \ \delta^{(a)}_{j j'} \frac{1}{r^{(2b-1)/b}_{\mathrm{min}}}C^b =  \frac{16}{r^{(2b-1)/b}_{\mathrm{min}}} \delta^{(a)}_{j j'} C^b.
    \end{aligned}
\end{equation}

On the other term, one obtains similarly:
\begin{equation}
    \begin{aligned}
        \big|\mathbb{E} \big(f_j(\boldsymbol{X}) f_{j'}(\boldsymbol{X}) &A^2_{t_k}(T,\boldsymbol{X}) \big) \big| \leq \left|\mathbb{E} \big(f^a_j(\boldsymbol{X}) f^a_{j'}(\boldsymbol{X}) \right|^{1/a} \cdot\left|\mathbb{E} \big( A^{2b}_{t_k}(T,\boldsymbol{X}) \big) \right|^{1/b} \quad \text{ (Hölder)} \\
        &\leq \delta^{(a)}_{j j'} \left|\mathbb{E} \big( A^{2b}_{t_k}(T,\boldsymbol{X}) \big) \right|^{1/b}  \\
        &\leq \delta^{(a)}_{j j'} \left( 2^{2b-1} \ \mathbb{E} \Big( \frac{\mathbf{1}\{T = t_k\}}{r(t_k,\boldsymbol{X})} \Big)^{2b} +  \mathbb{E} \Big( \frac{\mathbf{1}\{T = t_k \}}{r(t_k,\boldsymbol{X})}  \Big)^{2b} \right)^{1/b} \ \text{ (Lemma \ref{AppB:lemma1})} \\
        &\leq 2^{(2b-1)/b} \sigma^2 \delta^{(a)}_{j j'}  \left(\mathbb{E} \Big( \frac{\mathbf{1}\{T = t_k\}}{r^{2b}(t_k,\boldsymbol{X})} \Big) +  \mathbb{E} \Big( \frac{\mathbf{1}\{T = t_k \}}{r^{2b}(t_k,\boldsymbol{X})}  \Big) \right)^{1/b} \\
        &\leq 2^{(2b-1)/b} \sigma^2 \delta^{(a)}_{j j'}  \Big( \frac{2}{r^{2b-1}_{\mathrm{min}}}\Big)^{1/b} = \frac{4}{r^{(2b-1)/b}_{\mathrm{min}}} \sigma^2 \delta^{(a)}_{j j'}.
    \end{aligned}
\end{equation}

Thus, by combining the two terms, one gets:
\begin{equation}
    \begin{aligned}
        \left| \boldsymbol{\Sigma}^{\mathrm{(M)}}_{j j'} \right| &\leq \left| \mathbb{E} \big(f_j(\boldsymbol{X}) f_{j'}(\boldsymbol{X}) \psi^2_k(T, \boldsymbol{X})\big) \right|+ \sigma^2 \left|\mathbb{E} \big(f_j(\boldsymbol{X}) f_{j'}(\boldsymbol{X}) A^2_{t_k}(T,\boldsymbol{X}) \big) \right|\\
        &\leq \frac{16}{r^{(2b-1)/b}_{\mathrm{min}}} \delta^{(a)}_{j j'} C^b + \frac{4}{r^{(2b-1)/b}_{\mathrm{min}}} \sigma^2 \delta^{(a)}_{j j'}  \\
        &\leq \frac{1}{r^{(2b-1)/b}_{\mathrm{min}}} \big( 16 \ C^b + 4\sigma^2 \big) \delta_*^{(b)} ,
    \end{aligned}
\end{equation}
where $\delta_*^{(b)} = \max_{j,j'} \left| \mathbb{E}\big( f^{b/(b-1)}_j(\boldsymbol{X}) f^{b/(b-1)}_{j'}(\boldsymbol{X}) \big) \right|^{(b-1)/b} =  \max_{j,j'} \delta^{(a)}_{j j'}$.

Therefore,  for all $\epsilon=b-1>0$, there exists $C_{M} = 4 C + \sigma^2$ such that 
\begin{equation}
\label{eq:M_learnerDet}
    \left| \boldsymbol{\Sigma}^{\mathrm{(M)}}_{j j'} \right| \leq 4 r^{1/(1+\epsilon)-2}_{\mathrm{min}} \delta_*^{(1+\epsilon)}  C_{M}.
\end{equation}

In particular, if $\epsilon \ll 1$ then $ 1/(1+\epsilon)-2 \approx -(1+\epsilon)$ and
\begin{equation}
   \left| \boldsymbol{\Sigma}^{\mathrm{(M)}}_{j j'} \right| \leq  \frac{4}{r^{1+\epsilon}_{\mathrm{min}}} \delta_*^{(1+\epsilon)} C_{M}
\end{equation}


\subsubsection{Error estimation of the DR-learner.}

In this case, we have
\begin{align}
    &A_{t_k}(T,\boldsymbol{X}) = \frac{\mathbf{1}\{T = t_k\}}{r(t_k,\boldsymbol{X})} - \frac{\mathbf{1}\{T = t_k \}}{r(t_k,\boldsymbol{X})}, \\
    &B_{t_k}(T,\boldsymbol{X}) = \mu_{t_k}(\boldsymbol{X})- \mu_{t_k}(\boldsymbol{X}) - \left( \frac{\mathbf{1}\{T = t_k\}}{r(t_k,\boldsymbol{X})} - \frac{\mathbf{1}\{T = t_k \}}{r(t_k,\boldsymbol{X})} \right) \mu_T(\boldsymbol{X}).
\end{align}
We need just to compute the upper bound of $\mathbb{E}\big( f_j(\boldsymbol{X}) f_{j'}(\boldsymbol{X}) \psi_k^2(T,\boldsymbol{X}) \big) $ such that 
\begin{equation}
    \begin{aligned}
        &\psi_k(T,\boldsymbol{X}) =  A_{t_k}(T,\boldsymbol{X}) f(T,\boldsymbol{X}) - \tau_k(\boldsymbol{x}) + B_{t_k}(T,\boldsymbol{X}) \\
        &= \Big(\frac{\mathbf{1}\{T = t_k\}}{r(t_k,\boldsymbol{X})} - 1\Big) f(t_k, \boldsymbol{X}) - \Big(\frac{\mathbf{1}\{T = t_k\}}{r(t_k,\boldsymbol{X})} - 1\Big) f(t_k, \boldsymbol{X}) + \mu_{t_k}(\boldsymbol{X}) \left( 1 - \frac{\mathbf{1}\{T = t_k\}}{r(t_k,\boldsymbol{X})} \right) \\&
        \qquad \qquad \qquad \qquad- \mu_{t_k}(\boldsymbol{X}) \left( 1 - \frac{\mathbf{1}\{T = t_k \}}{r(t_k,\boldsymbol{X})} \right) \\
        &= \Big(\frac{\mathbf{1}\{T = t_k\}}{r(t_k,\boldsymbol{X})} - 1\Big) \big(f(t_k, \boldsymbol{X}) -  \mu_{t_k}(\boldsymbol{X})\big) - \Big(\frac{\mathbf{1}\{T = t_k\}}{r(t_k,\boldsymbol{X})} - 1\Big) \big(f(t_k, \boldsymbol{X}) -  \mu_{t_k}(\boldsymbol{X})\big)
    \end{aligned}
\end{equation}
Similarly to the previous calculus, we show that for the DR-learner 
\begin{equation}
    \begin{aligned}
        \big| \mathbb{E}\big( f_j(\boldsymbol{X}) &f_{j'}(\boldsymbol{X}) \psi_k^2(T,\boldsymbol{X}) \big) \big| \leq \left| \mathbb{E}\big( f^a_j(\boldsymbol{X}) f^a_{j'}(\boldsymbol{X}) \big) \right|^{1/a} \cdot \left| \mathbb{E}\big( \psi_k^{2b}(T,\boldsymbol{X}) \big) \right|^{1/b} \quad \text{ (Hölder)} \\
        &\leq \delta^{(a)}_{j j'} \Big( 2^{2b-1} \ \mathbb{E}\Big[ \Big(\frac{\mathbf{1}\{T = t_k\}}{r(t_k,\boldsymbol{X})} - 1\Big)^{2b} \big( f(t_k, \boldsymbol{X}) - \mu_{t_k}(\boldsymbol{X}) \big)^{2b} \\& \qquad\qquad + \Big(\frac{\mathbf{1}\{T = t_k\}}{r(t_k,\boldsymbol{X})} - 1\Big)^{2b} \big( f(t_k, \boldsymbol{X}) - \mu_{t_k}(\boldsymbol{X})\big)^{2b} \Big] \Big)^{1/b} \quad \text{ (Lemma \ref{AppB:lemma1})} \\
        &\leq 2^{(2b-1)/b} \ \delta^{(a)}_{j j'} \Big(  \mathbb{E}\Big[ \Big(\frac{\mathbf{1}\{T = t_k\}}{r(t_k,\boldsymbol{X})} - 1\Big)^{2b} \big( f(t_k, \boldsymbol{X}) - \mu_{t_k}(\boldsymbol{X}) \big)^{2b}  \Big] \\& \qquad\qquad\qquad + \mathbb{E}\Big[\Big(\frac{\mathbf{1}\{T = t_k\}}{r(t_k,\boldsymbol{X})} - 1\Big)^{2b} \big( f(t_k, \boldsymbol{X}) - \mu_{t_k}(\boldsymbol{X})\big)^{2b} \Big] \Big)^{1/b} \\
        &\leq  2^{(2b-1)/b} \ \delta^{(a)}_{j j'} \Big(  \mathbb{E}\Big[ 2^{2b-1} \ \Big(\frac{\mathbf{1}\{T = t_k\}}{r^{2b}(t,\boldsymbol{X})} + 1\Big) \big( f(t_k, \boldsymbol{X}) - \mu_{t_k}(\boldsymbol{X}) \big)^{2b} \Big] \\& \qquad\qquad+ \mathbb{E}\Big[2^{2b-1} \ \Big(\frac{\mathbf{1}\{T = t_k\}}{r^{2b}(t_k,\boldsymbol{X})} + 1\Big) \big( f(t_k, \boldsymbol{X}) - \mu_{t_k}(\boldsymbol{X})\big)^{2b} \Big] \Big)^{1/b}  \text{ (Lemma \ref{AppB:lemma1})} \\
        &\leq 2^{2(2b-1)/b} \ \delta^{(a)}_{j j'} \Big(  \mathbb{E}\Big[ \ \Big(\frac{1}{r^{2b-1}(t,\boldsymbol{X})} + 1\Big) \big( f(t_k, \boldsymbol{X}) - \mu_{t_k}(\boldsymbol{X}) \big)^{2b}  \Big] \\& \qquad\qquad\qquad + \mathbb{E}\Big[ \Big(\frac{1}{r^{2b-1}(t_k,\boldsymbol{X})} + 1\Big) \big( f(t_k, \boldsymbol{X}) - \mu_{t_k}(\boldsymbol{X})\big)^{2b} \Big] \Big)^{1/b} \\
        &\leq 2^{2(2b-1)/b}  \ \delta^{(a)}_{j j'} \Big( \frac{1}{r^{(2b-1)/b}_{\mathrm{min}}} + 1 \Big) \Big( \mathbb{E}\Big[ \ \big( f(t_k, \boldsymbol{X}) - \mu_{t_k}(\boldsymbol{X}) \big)^{2b}  \Big] \\& \qquad\qquad\qquad + \mathbb{E}\Big[ \big( f(t_k, \boldsymbol{X}) - \mu_{t_k}(\boldsymbol{X})\big)^{2b} \Big] \Big)^{1/b}\\
        &\leq 2^{2(2b-1)/b} \ \delta^{(a)}_{j j'} \Big( \frac{1}{r^{(2b-1)/b}_{\mathrm{min}}} + 1 \Big)   \Big[  \Big( \mathbb{E}\big( f(t_k, \boldsymbol{X}) - \mu_{t_k}(\boldsymbol{X}) \big)^{2b}  \Big)^{1/b} \\& \qquad\qquad\qquad + \mathbb{E}\big( f(t_k, \boldsymbol{X}) - \mu_{t_k}(\boldsymbol{X})\big)^{2b} \Big)^{1/b}  \Big] \quad \text{ (Subadditivity of $\mid \boldsymbol{X}|^{1/b}$)}  \\
    \end{aligned}
\end{equation}

Hence, 
\begin{equation}
    \begin{aligned}
        \left| \boldsymbol{\Sigma}^{(\mathrm{DR})}_{j j'} \right| &\leq 2^{2(2b-1)/b} \ \delta^{(a)}_{j j'} \Big( \frac{1}{r^{(2b-1)/b}_{\mathrm{min}}} + 1 \Big)   \Big[  \Big( \mathbb{E}\big( f(t_k, \boldsymbol{X}) - \mu_{t_k}(\boldsymbol{X}) \big)^{2b} \Big)^{1/b} \\& \qquad\qquad\qquad  + \Big(\mathbb{E}\big( f(t_k, \boldsymbol{X}) - \mu_{t_k}(\boldsymbol{X})\big)^{2b} \Big)^{1/b}  \Big]  + \frac{4}{r^{(2b-1)/b}_{\mathrm{min}}} \sigma^2 \delta^{(a)}_{j j'}  \\
        &\leq 2^{2(2b-1)/b} \ \delta_*^{(b)} \Big( \frac{1}{r^{(2b-1)/b}_{\mathrm{min}}} + 1 \Big)   \Big[  \Big( \mathbb{E}\big( f(t_k, \boldsymbol{X}) - \mu_{t_k}(\boldsymbol{X}) \big)^{2b} \Big)^{1/b} \\& \qquad\qquad\qquad  + \Big(\mathbb{E}\big( f(t_k, \boldsymbol{X}) - \mu_{t_k}(\boldsymbol{X})\big)^{2b} \Big)^{1/b}  \Big]  + \frac{4}{r^{(2b-1)/b}_{\mathrm{min}}} \sigma^2 \delta_*^{(b)}  \\
    \end{aligned}
\end{equation}

We consider now $\epsilon=b-1>0$, and we assume that $\epsilon \ll 1$, then
\begin{equation}
    \begin{aligned}
        &2^{2(2b-1)/b} \ \delta_*^{(b)}  \Big( \frac{1}{r^{(2b-1)/b}_{\mathrm{min}}} + 1 \Big) \Big[  \Big( \mathbb{E}\big( f(t_k, \boldsymbol{X}) - \mu_{t_k}(\boldsymbol{X}) \big)^{2b} \Big)^{1/b}  + \Big(\mathbb{E}\big( f(t_k, \boldsymbol{X}) - \mu_{t_k}(\boldsymbol{X})\big)^{2b} \Big)^{1/b}  \Big] \\
        &  + \frac{4}{r^{(2b-1)/b}_{\mathrm{min}}} \sigma^2 \delta_*^{(b)} \approx 4 \ \delta_*^{(1+\epsilon)} \Big( \frac{1}{r^{1+\epsilon}_{\mathrm{min}}} + 1 \Big) \Big( \mathbb{E} \big( f(t_k, \boldsymbol{X}) - \mu_{t_k}(\boldsymbol{X}) \big)^2 + \mathbb{E} \big( f(t_k, \boldsymbol{X}) - \mu_{t_k}(\boldsymbol{X})\big)^2  \Big) \\ & \qquad\qquad\qquad\qquad\qquad + 4 \ \sigma^2 \delta_*^{(1+\epsilon)} \frac{1}{r^{1+\epsilon}_{\mathrm{min}}}.
    \end{aligned} 
\end{equation}

Consequently,
\begin{equation}
\label{eq:DR_learnerDet}
   \left| \boldsymbol{\Sigma}^{\mathrm{(DR)}}_{j j'} \right| \leq 4 \Big( \frac{C^*_{DR} + \sigma^2}{r^{1+\epsilon}_{\mathrm{min}}} + C^*_{DR} \Big) \delta_*^{(1+\epsilon)},
\end{equation}
where $ C^*_{DR}= \mathbb{E} \big( f(t_k, \boldsymbol{X}) - \mu_{t_k}(\boldsymbol{X}) \big)^2 + \mathbb{E}\big( f(t_k, \boldsymbol{X}) - \mu_{t_k}(\boldsymbol{X})\big)^2 = \mathrm{err}({\mu}_{t_k}) + \mathrm{err}({\mu}_{t_k})$.

\subsubsection{Error estimation of the X-learner.} 

In this case, we have
\begin{align}
    &A_{t_k}(T,\boldsymbol{X}) = 2 \times \mathbf{1}\{T = t_k\} - 1, \\
    &B_{t_k}(T,\boldsymbol{X}) = (1-\mathbf{1}\{T = t_k\})\mu_{t_k}(\boldsymbol{X})- \mu_{t_k}(\boldsymbol{X}) + \sum_{l \neq k} \mathbf{1}\{T = t_l\}\mu_{t_l}(\boldsymbol{X}).
\end{align}
 One can write $\psi_k$ as
\begin{equation}
    \begin{aligned}
        \psi_k(T,\boldsymbol{X}) &=  A_{t_k}(T,\boldsymbol{X}) f(T,\boldsymbol{X}) - \tau_k(\boldsymbol{x}) + B_{t_k}(T,\boldsymbol{X}) \\
        &= \big(  2 \ \mathbf{1}\{T = t_k\} - 1 \big)  f(T,\boldsymbol{X}) - (f(t_k,\boldsymbol{X}) - f(t_k,\boldsymbol{X})) + \big(1-\mathbf{1}\{T = t_k\}\big)\\& \qquad \qquad \qquad \qquad \mu_{t_k}(\boldsymbol{X}) - \mu_{t_k}(\boldsymbol{X}) + \sum_{l \neq k} \mathbf{1}\{T = t_l\}\mu_{t_l}(\boldsymbol{X}) \\
        &= \big( 1 - \mathbf{1}\{T = t_k\}   \big) ( \mu_{t_k}(\boldsymbol{X}) - f(t_k,\boldsymbol{X}) ) - (\mu_{t_k}(\boldsymbol{X}) - f(t_k,\boldsymbol{X})) \\&  \qquad\qquad\qquad+ \sum_{l \neq k} \mathbf{1}\{T = t_l\} \big( \mu_{t_l}(\boldsymbol{X}) - f(t_l,\boldsymbol{X}) \big) = a_k + \sum_{l \neq k} b_{l}.
    \end{aligned}
\end{equation}
where 
\begin{gather}
    a_k = \big( 1 - \mathbf{1}\{T = t_k\}   \big) ( \mu_{t_k}(\boldsymbol{X}) - f(t_k,\boldsymbol{X}) ) - (\mu_{t_k}(\boldsymbol{X}) - f(t_k,\boldsymbol{X})), \\ b_l =  \mathbf{1}\{T = t_l\} \big( \mu_{t_l}(\boldsymbol{X}) - f(t_l,\boldsymbol{X}) \big).
\end{gather}

Similarly to the M- and DR-learners calculus, and using lemma \ref{AppB:lemma1}:
\begin{equation}
    \begin{aligned}
        \big| \mathbb{E}\big( f_j(\boldsymbol{X}) &f_{j'}(\boldsymbol{X}) \psi_k^2(T,\boldsymbol{X}) \big) \big| \leq \left| \mathbb{E}\big( f^a_j(\boldsymbol{X}) f^a_{j'}(\boldsymbol{X}) \big) \right|^{1/a} \cdot \left| \mathbb{E}\big( \psi_k^{2b}(T,\boldsymbol{X}) \big) \right|^{1/b} \\ 
        &\leq \delta^{(a)}_{j j'} \Big| \mathbb{E}\big( a_t + \sum_{l \neq k} b_{l} \big)^{2b} \Big|^{1/b} \quad \text{ (Hölder)} \\
        &\leq \delta^{(a)}_{j j'} \Big( 2^{2b-1} \Big( \mathbb{E}\big( a^{2b}_t\big) + \mathbb{E}\big(\sum_{l \neq k} b_{l} \big)^{2b} \Big) \Big)^{1/b} \quad \text{ (Lemma \ref{AppB:lemma1} with $m=2$)} \\
        &\leq 2^{(2b-1)/b} \ \delta^{(a)}_{j j'}  \Big( \mathbb{E}\big( a^{2b}_t\big) + \mathbb{E}\big(\sum_{l \neq k} b_{l} \big)^{2b}  \Big)^{1/b} \\
        &\leq 2^{(2b-1)/b} \ \delta^{(a)}_{j j'}  \Big[ 2^{2b-1} \Big( \mathbb{E}\big(\big( 1 - \mathbf{1}\{T = t_k\}  \big)^{2b} ( \mu_{t_k}(\boldsymbol{X}) - f(t_k,\boldsymbol{X}) \big)^{2b} \big) \\ & \qquad\qquad + \mathbb{E}\big(\mu_{t_k}(\boldsymbol{X})  - f(t_k,\boldsymbol{X})\big)^{2b} \Big)  + (K-1)^{2b-1} \\ & \qquad\qquad \times \sum_{l \neq k} \mathbb{E}\big( \mathbf{1}\{T = t_l\} \big( \mu_{t_l}(\boldsymbol{X}) - f(t_l,\boldsymbol{X}) \big)^{2b} \Big]^{1/b}  \\ & \text{ (Lemma \ref{AppB:lemma1} with $m=2$ on the $1^{\text{st}}$ term, and $m=(K-1)$ on the $2^{\text{nd}}$ term)} \\
        &\leq 2^{(2b-1)/b} \ \delta^{(a)}_{j j'}  \Big[ 2^{2b-1} \Big( \mathbb{E}\big( \mu_{t_k}(\boldsymbol{X}) - f(t_k,\boldsymbol{X}) \big)^{2b} + \mathbb{E}\big(\mu_{t_k}(\boldsymbol{X}) - f(t_k,\boldsymbol{X})\big)^{2b} \Big) \\& \qquad \qquad \qquad + (K-1)^{2b-1} \sum_{l \neq k} \mathbb{E}\big( \mu_{t_l}(\boldsymbol{X}) - f(t_l,\boldsymbol{X}) \big)^{2b} \Big]^{1/b} \\ 
        &\leq 2^{(2b-1)/b} \ \delta^{(a)}_{j j'}  \Big[ 2^{(2b-1)/b} \Big( \mathbb{E}\big( \mu_{t_k}(\boldsymbol{X}) - f(t_k,\boldsymbol{X}) \big)^{2b}\Big)^{1/b} + 2^{(2b-1)/b} \Big( \mathbb{E}\big(\mu_{t_k}(\boldsymbol{X})\\& \qquad - f(t_k,\boldsymbol{X})\big)^{2b} \Big)^{1/b} + (K-1)^{(2b-1)/b} \sum_{l \neq k} \Big(\mathbb{E}\big( \mu_{t_l}(\boldsymbol{X}) - f(t_l,\boldsymbol{X}) \big)^{2b} \Big)^{1/b} \Big] \\
        &\leq 2^{2(2b-1)/b} \ \delta^{(a)}_{j j'}  \Big[ \Big( \mathbb{E}\big( \mu_{t_k}(\boldsymbol{X}) - f(t_k,\boldsymbol{X}) \big)^{2b}\Big)^{1/b} + \Big( \mathbb{E}\big(\mu_{t_k}(\boldsymbol{X}) - f(t_k,\boldsymbol{X})\big)^{2b} \Big)^{1/b} \\& \qquad + \big(\frac{K-1}{2}\big)^{(2b-1)/b} \sum_{l \neq k} \Big(\mathbb{E}\big( \mu_{t_l}(\boldsymbol{X}) - f(t_l,\boldsymbol{X}) \big)^{2b} \Big)^{1/b} \Big]. 
    \end{aligned}
\end{equation}

Given that $\mathbb{E}\big( f_j(\boldsymbol{X}) f_{j'}(\boldsymbol{X}) A^2_{t_k}(T,\boldsymbol{X}) \big) =  \mathbb{E}\big( f_j(\boldsymbol{X}) f_{j'}(\boldsymbol{X}) \big) = \delta^{(1)}_{j j'}$, we deduce finally
\begin{equation}
    \begin{aligned}
    \left| \boldsymbol{\Sigma}^{\mathrm{(X)}}_{j j'} \right| &\leq \left| \mathbb{E} \big(f_j(\boldsymbol{X}) f_{j'}(\boldsymbol{X}) \psi^2_k(T, \boldsymbol{X})\big) \right|+ \sigma^2 \left|\mathbb{E} \big(f_j(\boldsymbol{X}) f_{j'}(\boldsymbol{X}) A^2_{t_k}(T,\boldsymbol{X}) \big) \right| \\
    &\leq 2^{2(2b-1)/b} \ \delta^{(a)}_{j j'}  \Big[ \Big( \mathbb{E}\big( \mu_{t_k}(\boldsymbol{X}) - f(t_k,\boldsymbol{X}) \big)^{2b}\Big)^{1/b} + \Big( \mathbb{E}\big(\mu_{t_k}(\boldsymbol{X}) - f(t_k,\boldsymbol{X})\big)^{2b} \Big)^{1/b} \\& \qquad + \big(\frac{K-1}{2}\big)^{(2b-1)/b} \sum_{l \neq k} \Big(\mathbb{E}\big( \mu_{t_l}(\boldsymbol{X}) - f(t_l,\boldsymbol{X}) \big)^{2b} \Big)^{1/b} \Big] + \sigma^2 \delta^{(1)}_{j j'} \\
    &\leq 2^{2(2b-1)/b} \ \delta^{(b)}_{*}  \Big[ \Big( \mathbb{E}\big( \mu_{t_k}(\boldsymbol{X}) - f(t_k,\boldsymbol{X}) \big)^{2b}\Big)^{1/b} + \Big( \mathbb{E}\big(\mu_{t_k}(\boldsymbol{X}) - f(t_k,\boldsymbol{X})\big)^{2b} \Big)^{1/b} \\& \qquad + \big(\frac{K-1}{2}\big)^{(2b-1)/b} \sum_{l \neq k} \Big(\mathbb{E}\big( \mu_{t_l}(\boldsymbol{X}) - f(t_l,\boldsymbol{X}) \big)^{2b} \Big)^{1/b} \Big] + \sigma^2 \delta^{(1)}_{*} \\
    \end{aligned}
\end{equation}
where $\delta^{(1)}_{*} = \max_{j, j'} \mathbb{E}\big( f_j(\boldsymbol{X}) f_{j'}(\boldsymbol{X}) \big)$.

As in the previous cases, we consider now $\epsilon=b-1>0$ with $\epsilon \ll 1$, then
\begin{equation}
    \begin{aligned}
        &2^{2(2b-1)/b} \ \delta^{(b)}_{*}  \Big[ \Big( \mathbb{E}\big( \mu_{t_k}(\boldsymbol{X}) - f(t_k,\boldsymbol{X}) \big)^{2b}\Big)^{1/b} + \Big( \mathbb{E}\big(\mu_{t_k}(\boldsymbol{X}) - f(t_k,\boldsymbol{X})\big)^{2b} \Big)^{1/b} \\ & \qquad\qquad\qquad + \big(\frac{K-1}{2}\big)^{(2b-1)/b} \sum_{l \neq k} \Big(\mathbb{E}\big( \mu_{t_l}(\boldsymbol{X}) - f(t_l,\boldsymbol{X}) \big)^{2b} \Big)^{1/b} \Big] + \sigma^2 \delta^{(1)}_{*}  \\
        & \qquad\quad \approx 4 \ \delta_*^{(1+\epsilon)} \Big( \mathbb{E}  \big( f(t_k, \boldsymbol{X}) - \mu_{t_k}(\boldsymbol{X}) \big)^2  + \mathbb{E} \big( f(t_k, \boldsymbol{X}) - \mu_{t_k}(\boldsymbol{X})\big)^2   \\ & \qquad\qquad\qquad\quad + \frac{(K-1)^2}{4} \sum_{l \neq k} \mathbb{E} \big( \mu_{t_l}(\boldsymbol{X}) - f(t_l,\boldsymbol{X}) \big)^{2}  + \sigma^2 \delta^{(1)}_{*}.
    \end{aligned} 
\end{equation}

Therefore,
\begin{equation}
\label{eq:X_learnerDet}
   \left| \boldsymbol{\Sigma}^{\mathrm{(X)}}_{j j'} \right| \leq  4 \delta_*^{(1+\epsilon)} C_{X} + \sigma^2 \delta^{(1)}_{*}.
\end{equation}
where $ C_{X} = \mathrm{err}({\mu}_{t_k}) + \mathrm{err}({\mu}_{t_k}) + \frac{(K-1)^2}{4}\sum_{l \neq k}\mathrm{err}({\mu}_{t_l}) .$

\subsection{Error estimation of the T- and naive X-learners.}
\label{App:T_Xnv_learners}

\vspace{0.1in}

In this subsection, we propose to conduct the bias-variance analysis of the T-learner and the naive extension of the X-learner. Some steps of this proof are quite similar to the proof of Appendix \ref{App:pseudo_outcome}.

\subsubsection{Error estimation of the T-learner.}

\vspace{0.1in}

\subsubsection*{Step 0. Set-up}

For all $t \in \mathcal{T}$, we define the set $\mathbf{S}_t=\{i, T_i=t \}$ with $n_t = \big| \mathbf{S}_t \big|$. Under Assumptions (\ref{assump:unconfound}-\ref{assump:beta_Y}), the T-learner of the CATE can be defined as 
\begin{equation}
    {\widehat{\tau}}^{\mathrm{(T)}}_{k}(\boldsymbol{x}) = \boldsymbol{f}(\boldsymbol{x})^{\top} {\widehat{\boldsymbol{\beta}}}_{t_k} - \boldsymbol{f}(\boldsymbol{x})^{\top} {\widehat{\boldsymbol{\beta}}}_{t_0} =  \boldsymbol{f}(\boldsymbol{x})^{\top} ({\widehat{\boldsymbol{\beta}}}_{t_k} - {\widehat{\boldsymbol{\beta}}}_{t_0}),
\end{equation}
where ${\widehat{\boldsymbol{\beta}}}_{t_k}$ and ${\widehat{\boldsymbol{\beta}}}_{t_0}$ are the OLS estimators of ${\boldsymbol{\beta}}_{t_k}$ and ${\boldsymbol{\beta}}_{t_0}$ such that:
\begin{align}
    &\widehat{\boldsymbol{\beta}}_{t_k}= \big(\mathbf{H}_k^{\top}\mathbf{H}_k\big)^{-1} \mathbf{H}_k^{\top} \boldsymbol{y}_k, \\
    &\widehat{\boldsymbol{\beta}}_{t_0}= \big(\mathbf{H}_0^{\top}\mathbf{H}_0\big)^{-1} \mathbf{H}_0^{\top} \boldsymbol{y}_0,
\end{align}
where $\mathbf{H}_k = (f_j(\boldsymbol{X}_i))_{i \in \mathbf{S}_{t_k}, j} \in \mathbb{R}^{n_{k} \times p}$ (respectively, $\mathbf{H}_0 = (f_j(\boldsymbol{X}_i))_{i \in \mathbf{S}_{t_0}, j} \in \mathbb{R}^{n_{0} \times p}$) is the regression matrix and $\boldsymbol{y}_k = ( Y_{\mathrm{obs},i})_{i \in \mathbf{S}_{t_k}}$ (respectively, $\boldsymbol{y}_0 = ( Y_{\mathrm{obs},i})_{i \in \mathbf{S}_{t_0}}$).

\subsubsection*{Step 1. Identification of $\widehat{\boldsymbol{\beta}}_k$}

By similar calculus, we show that:
\begin{equation}
    \begin{aligned}
        \widehat{\boldsymbol{\beta}}_{t_k} &= \big(\mathbf{H}_k^{\top}\mathbf{H}_k\big)^{-1} \mathbf{H}_k^{\top} \boldsymbol{y}_k \\
        &= \big(\mathbf{H}_k^{\top}\mathbf{H}_k\big)^{-1} \mathbf{H}_k^{\top} \big( f(t_k, \boldsymbol{X}_i) + \varepsilon_i(t_k) \big)_{i \in \mathbf{S}_{t_k} } \\
        &= \boldsymbol{\beta}_{t_k} + \big(\mathbf{H}_k^{\top}\mathbf{H}_k\big)^{-1} \mathbf{H}_k^{\top}  \boldsymbol{\epsilon}_k \\
        &= \boldsymbol{\beta}_{t_k} + \frac{1}{\sqrt{n_k}} \Big(\frac{1}{n_k} \mathbf{H}_k^{\top}\mathbf{H}_k \Big)^{-1} \Big( \frac{1}{\sqrt{n_k}} \mathbf{H}_k^{\top} \boldsymbol{\epsilon}_k \Big).
    \end{aligned}
\end{equation}
where $\boldsymbol{\epsilon}_k = (\varepsilon_i(t_k))_{i=1}^n$ are i.i.d. Gaussian $\mathcal{N}(0, \sigma^2)$ and independent of $(T_i,\boldsymbol{X}_i)_{i=1}^n$.

Thus,
\begin{equation}
    \begin{aligned}
        \sqrt{n} \big( \widehat{\boldsymbol{\beta}}_{t_k} - \boldsymbol{\beta}_{t_k} \big) &= \sqrt{ \frac{n}{n_k} } \Big(\frac{1}{n_k} \mathbf{H}_k^{\top}\mathbf{H}_k \Big)^{-1} \Big( \frac{1}{\sqrt{n_k}} \mathbf{H}_k^{\top} \boldsymbol{\epsilon}_k \Big).
    \end{aligned}
\end{equation}

By similar calculus
\begin{equation}
    \begin{aligned}
        \sqrt{n} \big( \widehat{\boldsymbol{\beta}}_{t_0} - \boldsymbol{\beta}_{t_0} \big) &= \sqrt{ \frac{n}{n_0} } \Big(\frac{1}{n_0} \mathbf{H}_0^{\top}\mathbf{H}_0 \Big)^{-1} \Big( \frac{1}{\sqrt{n_0}} \mathbf{H}_0^{\top} \boldsymbol{\epsilon}_0 \Big).
    \end{aligned}
\end{equation}

Therefore, by considering ${\widehat{\boldsymbol{\beta}}}_k = \widehat{\boldsymbol{\beta}}_{t_k} - \widehat{\boldsymbol{\beta}}_{t_0}$,
\begin{equation}
\label{eq:T_learn_iden}
    \begin{aligned}
        \sqrt{n} \big( {\widehat{\boldsymbol{\beta}}}_k - {\boldsymbol{\beta}}^*_k \big) &= \sqrt{n} \big( \widehat{\boldsymbol{\beta}}_{t_k} - \boldsymbol{\beta}_{t_k} \big) + \sqrt{n} \big( \widehat{\boldsymbol{\beta}}_{t_0} - \boldsymbol{\beta}_{t_0} \big) \\
        &= \sqrt{ \frac{n}{n_k} } \Big(\frac{1}{n_k} \mathbf{H}_k^{\top}\mathbf{H}_k \Big)^{-1} \Big( \frac{1}{\sqrt{n_k}} \mathbf{H}_k^{\top} \boldsymbol{\epsilon}_k \Big) + \sqrt{ \frac{n}{n_0} } \Big(\frac{1}{n_0} \mathbf{H}_0^{\top}\mathbf{H}_0 \Big)^{-1} \Big( \frac{1}{\sqrt{n_0}} \mathbf{H}_0^{\top} \boldsymbol{\epsilon}_0 \Big).
    \end{aligned}
\end{equation}

\subsubsection*{Step 2. The asymptotic behaviour of the OLS estimator's mean and covariance}

Let $\boldsymbol{a}=(\boldsymbol{a}_k, \boldsymbol{a}_0) \in \mathbb{R}^{2p}$ and let $\phi_n$ denote the characteristic function of the vector $\big( \frac{1}{\sqrt{n_k}} \mathbf{H}_k^{\top} \boldsymbol{\epsilon}_k, \frac{1}{\sqrt{n_0}} \mathbf{H}_0^{\top} \boldsymbol{\epsilon}_0 \big)$. We have
\begin{equation}
    \begin{aligned}
        \phi_n( \boldsymbol{a} ) &= \mathbb{E} \Big[ \exp{ i \boldsymbol{a}^{\top} \big( \frac{1}{\sqrt{n_k}} \mathbf{H}_k^{\top} \boldsymbol{\epsilon}_k, \frac{1}{\sqrt{n_0}} \mathbf{H}_0^{\top} \boldsymbol{\epsilon}_0 \big) } \Big] \\
        &= \mathbb{E} \Big[ \exp{ i
        \Big( \boldsymbol{a}_k^{\top} \frac{1}{\sqrt{n_k}} \mathbf{H}_k^{\top} \boldsymbol{\epsilon}_k + \boldsymbol{a}_0^{\top}  \frac{1}{\sqrt{n_0}} \mathbf{H}_0^{\top} \boldsymbol{\epsilon}_0 \Big) } \Big] \\
        &= \mathbb{E} \Big[ \exp{ i \Big( \frac{1}{\sqrt{n_k}} \sum_{m=1}^{n_k} \boldsymbol{a}_k^{\top}  \big( H_{m j} \varepsilon_m(t_k) \big)_{j=0}^{p-1} + \frac{1}{\sqrt{n_0}} \sum_{m=1}^{n_0} \boldsymbol{a}_0^{\top} \big( H_{m j} \varepsilon_m(t_0)\big)_{j=0}^{p-1} \Big)  } \Big] \\
        &= \mathbb{E} \Big[ \exp{ i \Big( \frac{1}{\sqrt{n}} \sum_{m=1}^{n} \boldsymbol{a}_k^{\top}  \big( H_{m j} \big)_{j=0}^{p-1} \varepsilon_m(t_k) \mathbf{1} \{T_m = t_k \}  \times \frac{\sqrt{n}}{\sqrt{ n_k } } \Big) }  \\  & \qquad\qquad \ + \frac{1}{\sqrt{n}} \sum_{m=1}^{n} \boldsymbol{a}_0^{\top}  \big( H_{m j} \big)_{j=0}^{p-1}  \varepsilon_m(t_0) \mathbf{1} \{T_m = t_0 \}  \times \frac{\sqrt{n}}{\sqrt{ n_0 } } \Big)  \Big].
    \end{aligned}
\end{equation}

Now, let us consider the vector $\boldsymbol{Z}^{(n)} = \big(\boldsymbol{Z}_k^{(n)}, \boldsymbol{Z}_0^{(n)}\big)   \in \mathbb{R}^{2p}$ such that
\begin{equation}
    \begin{aligned}
        \boldsymbol{Z}^{(n)} 
        &= \Big( \frac{1}{n} \sum_{m=1}^n H_{m 1} \varepsilon_m(t_k) \mathbf{1} \{T_m = t_k \}, \ldots, \frac{1}{n} \sum_{m=1}^n  H_{m p} \varepsilon_m(t_k) \mathbf{1} \{T_m = t_k \},  \\
        & \qquad \ \frac{1}{n} \sum_{m=1}^n \big( H_{m 1} \varepsilon_m(t_0) \mathbf{1} \{T_m = t_0 \}, \ldots, \frac{1}{n} \sum_{m=1}^n H_{m p} \varepsilon_m(t_0) \mathbf{1} \{T_m = t_0 \} \Big) \\
        &= \frac{1}{n} \sum_{m=1}^n \Big( H_{m 1} \varepsilon_m(t_k) \mathbf{1} \{T_m = t_k \}, \ldots, \ H_{m p} \varepsilon_m(t_k) \mathbf{1} \{T_m = t_k \}, \\ & \qquad \qquad \ \ \ H_{m 1} \varepsilon_m(t_0) \mathbf{1} \{T_m = t_0 \}, \ldots, \ H_{m p} \varepsilon_m(t_0) \mathbf{1} \{T_m = t_0 \} \Big) \\ &= \frac{1}{n} \sum_{m=1}^n \boldsymbol{Z}_{m}.
    \end{aligned}
\end{equation}

The mean $\boldsymbol{m}= (\boldsymbol{m}_k, \boldsymbol{m}_0)$ of the vector $\boldsymbol{Z}_{m}$ satisfies, for $j=0,\ldots,p-1$,
\begin{gather}
        {m}_{k,j} = \mathbb{E}\big[f_j(\boldsymbol{X}) \varepsilon(t_k) \mathbf{1} \{T = t_k \}\big] = 0, \\
        m_{0,j} = \mathbb{E}\big[f_j(\boldsymbol{X}) \varepsilon(t_0) \mathbf{1} \{T = t_0 \} \big] = 0,
\end{gather} 
and its covariance matrix that satisfies, for $j,j'=1,\ldots,2p$,
\begin{equation}
    \begin{aligned}
    \operatorname{Cov} \Big( \boldsymbol{Z}_{m,j}, \boldsymbol{Z}_{m,j'}\big)&= \left\{ \begin{array}{l l}
        \mathbb{E} \big[f_{j-1}(\boldsymbol{X}) f_{j'-1}(\boldsymbol{X}) \varepsilon^2(t_k) \mathbf{1} \{T = t_k \} \big] & \quad \text{if $j,j' \in \{1,\ldots,p\}$}\\[.15cm]
        \mathbb{E} \big[f_{j-1}(\boldsymbol{X}) f_{j'-1}(\boldsymbol{X}) \varepsilon^2(t_0) \mathbf{1} \{T = t_0 \} \big] & \quad \text{if $j,j' \in \{p+1,\ldots,2p\}$}\\[.15cm]
        \mathbb{E} \big[f_{j-1}(\boldsymbol{X}) f_{j'-1}(\boldsymbol{X}) \varepsilon(t_k) \varepsilon(t_0) \mathbf{1} \{T = t_k \} \mathbf{1} \{T = t_0 \} \big]  & \quad \text{otherwise.}\\[.15cm]
        \end{array} \right. \\
    &= \left\{ \begin{array}{l l}
        \sigma^2\mathbb{E} \big[ f_{j-1}(\boldsymbol{X}) f_{j'-1}(\boldsymbol{X}) \mathbf{1} \{T = t_k \}  \big] & \quad \text{if $j,j' \in \{1,\ldots,p\}$ }\\[.15cm]
        \sigma^2\mathbb{E} \big[  f_{j-1}(\boldsymbol{X}) f_{j'-1}(\boldsymbol{X}) \mathbf{1} \{T = t_0 \}  \big]  & \quad \text{if $j,j' \in \{p+1,\ldots,2p\}$}\\[.15cm]
        0  & \quad \text{otherwise,}\\
        \end{array} \right. \\
    &= \left\{ \begin{array}{l l}
        \sigma^2 \rho(t_k) F_{k, j j'} \quad \text{if $j,j' \in \{1,\ldots,p\},$}\\[.15cm]
        \sigma^2 \rho(t_0) F_{0, j j'}  \quad \text{if $j,j' \in \{p+1,\ldots,2p\}$,}\\[.15cm]
        0  \qquad\qquad\qquad \text{otherwise,}\\
         \end{array} \right.
    \end{aligned} 
\end{equation}
where the matrices $\mathbf{F}_k = \big( \mathbb{E} \big[f_{j-1}(\boldsymbol{X}) f_{j'-1}(\boldsymbol{X}) \mid T=t_k \big] \big)_{j,j'} \in \mathbb{R}^p$ and $\mathbf{F}_0 = \big( \mathbb{E} \big[f_{j-1}(\boldsymbol{X}) f_{j'-1}(\boldsymbol{X}) \mid T=t_0 \big] \big)_{j,j'} \in \mathbb{R}^p$ are supposed to be invertible. Note that, for a integrable function $h$, $\mathbb{E} \big[h(\boldsymbol{X}) \mathbf{1} \{T = t \} \big] = \mathbb{P}(T=t) \mathbb{E} \big[h(\boldsymbol{X}) \mid T = t \big]$.

Therefore, using the multivariate CLT on $ \boldsymbol{Z}^{(n)}$, we get
\begin{equation}
    \begin{aligned}
        \left(\begin{array}{c}{ \sqrt{n} \ \boldsymbol{a}_k^{\top} \boldsymbol{Z}_k^{(n)} } \\ { \sqrt{n} \ \boldsymbol{a}_0^{\top} \boldsymbol{Z}_0^{(n)}  } \end{array}\right) 
        &= \left(\begin{array}{c}{ \frac{1}{\sqrt{n}} \sum_{m=1}^{n} \boldsymbol{a}_k^{\top}  \big( H_{m j} \big)_{j=0}^{p-1} \varepsilon_m(t_k) \mathbf{1} \{T_m = t_k \} } \\ { \frac{1}{\sqrt{n}} \sum_{m=1}^{n} \boldsymbol{a}_0^{\top}  \big( H_{m j} \big)_{j=0}^{p-1} \varepsilon_m(t_0) \mathbf{1} \{T_m = t_0 \} } \end{array}\right) \\[.15cm]
        &\overset{\mathcal{L}}{\longrightarrow} \mathcal{N} \left( \left(\begin{array}{c}{ 0 } \\ { 0} \end{array}\right) , 
        \begin{pmatrix}
            \sigma^2 \rho(t_k) \boldsymbol{a}_k^{\top} \mathbf{F}_k  \boldsymbol{a}_k & 0\\
            0 & \sigma^2 \rho(t_0) \boldsymbol{a}_0^{\top} \mathbf{F}_0  \boldsymbol{a}_0 
        \end{pmatrix} \right).
    \end{aligned}
\end{equation}

On the other hand,
\begin{equation}
     \left(\frac{\sqrt{n}}{\sqrt{ n_k } }, \frac{\sqrt{n}}{\sqrt{ n_0 } } \right) = \left(\frac{\sqrt{n}}{\sqrt{\sum_{m=1}^{n} \mathbf{1} \{T_m = t_k \} } }, \frac{\sqrt{n}}{\sqrt{\sum_{m=1}^{n} \mathbf{1} \{T_m = t_0 \} } } \right) \overset{a.s}{\longrightarrow} \left(\frac{1}{\sqrt{\rho(t_k)}}, \frac{1}{\sqrt{\rho(t_0)}} \right),
\end{equation}
where $\rho(t) = \mathbb{P}(T=t)$.

Thus, by the Slutsky theorem:
\begin{equation}
    \begin{aligned}
        \frac{1}{\sqrt{n}} \sum_{m=1}^{n} \big( \boldsymbol{a}_k^{\top} \big( H_{m j} \big)_{j=0}^{p-1} \varepsilon_m(t_k) \mathbf{1}& \{T_m = t_k \} \frac{\sqrt{n}}{\sqrt{ n_k } } + \boldsymbol{a}_0^{\top} \big( H_{m j} \big)_{j=0}^{p-1} \varepsilon_m(t_0) \mathbf{1} \{T_m = t_0 \} \frac{\sqrt{n}}{\sqrt{ n_0 } } \big) \\ &\overset{\mathcal{L}}{\longrightarrow}  \mathcal{N}(\boldsymbol{0}, \sigma^2 \boldsymbol{a}_k^{\top} \mathbf{F}_k \boldsymbol{a}_k + \sigma^2  \boldsymbol{a}_0^{\top} \mathbf{F}_0 \boldsymbol{a}_0). 
     \end{aligned}
\end{equation}

Therefore,
\begin{equation}
    \begin{aligned}
        \phi_n( \boldsymbol{a} ) \overset{n \rightarrow +\infty}{\longrightarrow} \mathbb{E} \Big[ \exp{ i \Big(  \boldsymbol{a}_k^{\top} \sigma^2 \rho(t_k) \mathbf{F}_k \boldsymbol{a}_k + \boldsymbol{a}_0^{\top} \sigma^2 \rho(t_0) \mathbf{F}_0 \boldsymbol{a}_0 } \Big)  \Big] =  \phi_{(\boldsymbol{Z}_k, \boldsymbol{Z}_0)}( \boldsymbol{a} ),
    \end{aligned}
\end{equation}
where $\boldsymbol{Z}_k$ and $\boldsymbol{Z}_0$ are two independent zero-mean random vectors with covariance matrices $\sigma^2 \mathbf{F}_k$ and $\sigma^2 \mathbf{F}_0$ respectively. 

As shown previously in Appendix \ref{App:pseudo_outcome}, we can prove immediately that $\big( 1/n_k \ \mathbf{H}_k^{\top}\mathbf{H}_k \big)^{-1} \overset{P}{\longrightarrow} \mathbf{F}_k^{-1}$. Moreover, $n_{k}/n \overset{a.s}{\longrightarrow} \rho(t_k)$ so  $n_{k}/n \overset{P}{\longrightarrow} \rho(t_k)$. Thus
\begin{equation}
    \begin{aligned}
        \sqrt{ \frac{n}{n_k} } \Big(\frac{1}{n_k} \mathbf{H}_k^{\top}\mathbf{H}_k \Big)^{-1}
        &\overset{P}{\longrightarrow} \frac{1}{\sqrt{\rho(t_k)}} \mathbf{F}_k^{-1}.
    \end{aligned}
\end{equation}

Finally, given Equation (\ref{eq:T_learn_iden}) and using the Slutsky theorem, we get
\begin{equation}
    \begin{aligned}
        \label{eq:T_learner_dist}
            \sqrt{n} \big( {\widehat{\boldsymbol{\beta}}}_k - {\boldsymbol{\beta}}^*_k \big) &\overset{\mathcal{L}}{\longrightarrow}\mathcal{N} \big( \boldsymbol{0}, \frac{1}{\rho(t_k)} \mathbf{F}_k^{-1} \sigma^2 \mathbf{F}_k \mathbf{F}_k^{-1} + \frac{1}{\rho(t_0)} \mathbf{F}_0^{-1} \sigma^2 \mathbf{F}_0 \mathbf{F}_0^{-1} \big) \\
            &= \mathcal{N} \Big( \boldsymbol{0}, \frac{\sigma^2}{\rho(t_k)}  \mathbf{F}_k^{-1}  + \frac{\sigma^2}{\rho(t_0)} \mathbf{F}_0^{-1} \Big).
    \end{aligned}
\end{equation}

Here also, we can deduce that the asymptotic mean and  covariance matrix are of the form
\begin{equation}
    \begin{aligned}
        \mathbb{E}(\widehat{\boldsymbol{\beta}}_{k}) &=  \boldsymbol{\beta}_{t_k} - \boldsymbol{\beta}_{t_0} = {\boldsymbol{\beta}}^*_k, \\ \mathbb{V}(\widehat{\boldsymbol{\beta}}_k) &\approx \frac{1}{n}  \Big( \frac{1}{\rho(t_k)}  \mathbf{F}_k^{-1}  + \frac{1}{\rho(t_0)} \mathbf{F}_0^{-1} \Big) \sigma^2.
    \end{aligned}
\end{equation}

\subsubsection*{Step 3. Obtaining the error upper bound}

The asymptotic covariance matrix is given by the matrices $\mathbf{F}_k^{-1}$ and $\mathbf{F}_0^{-1}$. We assume that the polynomials $f_j$ are chosen to be orthonormal, and that, conditionally to $T$, their distribution is not significantly different. One can anticipate, therefore, that $\mathbf{F}_k, \mathbf{F}_0 \approx \mathbf{F}$ and easily identify the error's upper bound of the T-learner as:
\begin{equation}
\label{eq:T_upperbound}
          \frac{1}{\rho(t_k)} +  \frac{1}{\rho(t_0)}.
\end{equation}

\subsection{Error estimation of the naive X-learner.}

\vspace{0.1in}

Let $\overline{r}$ denote a fixed arbitrary estimator of the GPS (see remark 3.3) and respecting the assumption \ref{assump:overlap}, that is, $r_{\mathrm{min}} \leq \overline{r}(t,\boldsymbol{x})$. Let $\widehat{\mu}_{t_k}$ denote the estimator of ${\mu}_{t_k}$ . The model $\widehat{\mu}_{t_k}$ is trained using the sample $\mathbf{S}_{t_k}$, the OLS estimator $\widehat{\boldsymbol{\beta}}_{t_k}$ satisfies
\begin{equation}
    \begin{aligned}
        \widehat{\boldsymbol{\beta}}_{t_k} &=  \big(\mathbf{H}_k^{\top}\mathbf{H}_k\big)^{-1} \mathbf{H}_k^{\top} \boldsymbol{y}_k \\
        &= \boldsymbol{\beta}_{t_k} + \big(\mathbf{H}_k^{\top}\mathbf{H}_k \big)^{-1} \mathbf{H}_k^{\top} \boldsymbol{\epsilon}_k
    \end{aligned}
\end{equation}
where $\boldsymbol{y}_k = ( Y_{\mathrm{obs},i})_{i \in \mathbf{S}_{t_k}}$ and $\boldsymbol{\epsilon}_k = ( \varepsilon_i(t_k) )_{i \in \mathbf{S}_{t_k}}$.

Similarly, the OLS estimator of $\mu_{t_0}$ satisfies also
\begin{equation}
        \widehat{\boldsymbol{\beta}}_{t_0} 
        = \boldsymbol{\beta}_{t_0} + \big(\mathbf{H}_0^{\top}\mathbf{H}_0\big)^{-1} \mathbf{H}_0^{\top}  \boldsymbol{\epsilon}_0,
\end{equation}
where $\boldsymbol{y}_0 = ( Y_{\mathrm{obs},i})_{i \in \mathbf{S}_{t_0}}$ and $\boldsymbol{\epsilon}_0 = ( \varepsilon_i(t_0) )_{i \in \mathbf{S}_{t_0}}$.

We recall now the definition of the naive extension of the X-learner:
\begin{equation}
        {\widehat{\tau}}^{\mathrm{(X,nv)}}_{k}(\boldsymbol{x}) = \frac{\overline{r}(t_k,\boldsymbol{x})}{\overline{r}(t_k,\boldsymbol{x}) + \overline{r}(t_0,\boldsymbol{x})} {\widehat{\tau}}^{(k)}(\boldsymbol{x}) + \frac{\overline{r}(t_0,\boldsymbol{x})}{\overline{r}(t_k,\boldsymbol{x}) + \overline{r}(t_0,\boldsymbol{x})} {\widehat{\tau}}^{(0)}(\boldsymbol{x}).
\end{equation}

where the estimators ${\widehat{\tau}}^{(k)}$ and ${\widehat{\tau}}^{(0)}$ are built respectively on $\mathbf{S}_{t_k}$ and $\mathbf{S}_{t_0}$ by regressing $({D}_i^{(k)})_{i \in \mathbf{S}_{t_k}} = (Y_i(t_k) - \widehat{\mu}_{t_0}(\boldsymbol{X}_i))_{i \in \mathbf{S}_{t_k}}$ and $({D}_i^{(0)})_{i \in \mathbf{S}_{t_0}} = (\widehat{\mu}_{t_k}(\boldsymbol{X}_i) - Y_i(t_0))_{i \in \mathbf{S}_{t_0}}$ on $\boldsymbol{X}$. In the following, we denote ${\widehat{\tau}}^{(k)}(\boldsymbol{x}) = \boldsymbol{f}(\boldsymbol{x})^{\top}  \widehat{\boldsymbol{\beta}}^{(k)}$ and ${\widehat{\tau}}^{(0)}(\boldsymbol{x}) = \boldsymbol{f}(\boldsymbol{x})^{\top}  \widehat{\boldsymbol{\beta}}^{(0)}$. Here, $\widehat{\boldsymbol{\beta}}^{(k)}$ denotes the OLS estimator of ${\widehat{\tau}}^{(k)}$ and is given by:
\begin{equation}
    \begin{aligned}
        \widehat{\boldsymbol{\beta}}^{(k)} &= \big(\mathbf{H}_k^{\top}\mathbf{H}_k\big)^{-1} \mathbf{H}_k^{\top} \big (Y_{\mathrm{obs,i}} - \widehat{\mu}_{t_0}(\boldsymbol{X}_i)\big)_{i \in \mathbf{S}_{t_k}} \\
        &= \big(\mathbf{H}_k^{\top}\mathbf{H}_k\big)^{-1} \mathbf{H}_k^{\top} \big (Y_{\mathrm{obs,i}} - \boldsymbol{f}(\boldsymbol{X}_i)^{\top} \widehat{\boldsymbol{\beta}}_{t_0} \big)_{i \in \mathbf{S}_{t_k}} \\
        &= \big(\mathbf{H}_k^{\top}\mathbf{H}_k\big)^{-1} \mathbf{H}_k^{\top} \boldsymbol{y}_k - \big(\mathbf{H}_k^{\top}\mathbf{H}_k\big)^{-1} \mathbf{H}_k^{\top} \mathbf{H}_k \widehat{\boldsymbol{\beta}}_{t_0} \\
        &= \widehat{\boldsymbol{\beta}}_{t_k} -\widehat{\boldsymbol{\beta}}_{t_0} = {\widehat{\boldsymbol{\beta}}}_k,
    \end{aligned}
\end{equation}

where ${\widehat{\boldsymbol{\beta}}}_k = {\widehat{\boldsymbol{\beta}}}_{t_k} - {\widehat{\boldsymbol{\beta}}}_{t_0}$ is the T-learner OLS estimator as given in (\ref{eq:T_learner_dist}).

By similar calculus, we show that
\begin{equation}
    \begin{aligned}
        \widehat{\boldsymbol{\beta}}^{(0)} &= \big(\mathbf{H}_0^{\top}\mathbf{H}_0\big)^{-1} \mathbf{H}_0^{\top} \big (\widehat{\mu}_{t_k}(\boldsymbol{X}_i) - Y_{\mathrm{obs,i}} \big)_{i \in \mathbf{S}_{t_0}} \\
        &= \big(\mathbf{H}_0^{\top}\mathbf{H}_0\big)^{-1} \mathbf{H}_0^{\top} \big (\boldsymbol{f}(\boldsymbol{X}_i)^{\top} \widehat{\boldsymbol{\beta}}_{t_k} - Y_{\mathrm{obs,i}} \big)_{i \in \mathbf{S}_{t_0}} \\
        &=  \big(\mathbf{H}_0^{\top}\mathbf{H}_0\big)^{-1} \mathbf{H}_0^{\top} \mathbf{H}_0 \widehat{\boldsymbol{\beta}}_{t_k} - \big(\mathbf{H}_0^{\top}\mathbf{H}_0\big)^{-1} \mathbf{H}_0^{\top} \boldsymbol{y}_0 \\
        &= \widehat{\boldsymbol{\beta}}_{t_k} - \widehat{\boldsymbol{\beta}}_{t_0} = {\widehat{\boldsymbol{\beta}}}_k.
    \end{aligned}
\end{equation}

It results that
\begin{equation}
    \begin{aligned}
        {\widehat{\tau}}^{\mathrm{(X,nv)}}_{k}(\boldsymbol{x}) &= \frac{\overline{r}(t_k,\boldsymbol{x})}{\overline{r}(t_k,\boldsymbol{x}) + \overline{r}(t_0,\boldsymbol{x})} \boldsymbol{f}(\boldsymbol{x})^{\top}  \widehat{\boldsymbol{\beta}}^{(k)} + \frac{\overline{r}(t_0,\boldsymbol{x})}{\overline{r}(t_k,\boldsymbol{x}) + \overline{r}(t_0,\boldsymbol{x})} \boldsymbol{f}(\boldsymbol{x})^{\top}  \widehat{\boldsymbol{\beta}}^{(0)} \\
        &= \Big( \frac{\overline{r}(t_k,\boldsymbol{x})}{\overline{r}(t_k,\boldsymbol{x}) + \overline{r}(t_0,\boldsymbol{x})} + \frac{\overline{r}(t_0,\boldsymbol{x})}{\overline{r}(t_k,\boldsymbol{x}) + \overline{r}(t_0,\boldsymbol{x})} \Big) \boldsymbol{f}(\boldsymbol{x})^{\top}  {\widehat{\boldsymbol{\beta}}}_k = \boldsymbol{f}(\boldsymbol{x})^{\top}  {\widehat{\boldsymbol{\beta}}}_k
    \end{aligned}
\end{equation}

In the end, the naive X-learner is no more than a simple T-learner, the error's upper bound of the naive X-learner is given therefore by:
\begin{equation}
        \sigma^2 \Big( \frac{ 1 }{\rho(t_k)} + \frac{1 }{\rho(t_0)} \Big).
\end{equation}

\section{Discussion about the binarized R-learner.}
\label{App:bin_R_learner}

\vspace{0.1in}

Another alternative to R-learning to continuous treatments is proposed by \cite{Kaddour2021}. The approach considers both Assumptions \ref{assump:model_Y} and \ref{assump:beta_Y} on the outcome $Y(t)= \boldsymbol{f}(\boldsymbol{X})^{\top} \boldsymbol{\beta}_t + \varepsilon(t)$, then establishes the binarized \cite{Robinson1988} decomposition such that
\begin{equation}
\label{eq:bin_Rloss}
    Y_{\mathrm{obs}}-m(\boldsymbol{X}) = \boldsymbol{f}(\boldsymbol{X})^{\top}( \boldsymbol{\beta}_T-e^{\boldsymbol{\beta}}(\boldsymbol{X})) + \epsilon,
\end{equation}
where $\epsilon = \varepsilon(T)$, $m(\boldsymbol{x})=\mathbb{E}(Y_{\mathrm{obs}} \mid \boldsymbol{X} = \boldsymbol{x})$ and $e^{\boldsymbol{\beta}}(\boldsymbol{x}) = \mathbb{E}(\boldsymbol{\beta}_T \mid \boldsymbol{X} = \boldsymbol{x})$.

Considering the mean squared error of $\epsilon$ as a loss function and minimizing it allows us to identify the optimal $\widehat{\boldsymbol{\beta}}$ and therefore CATEs. Given two nuisance estimators $\widehat{m}$ and $\widehat{e}^{\boldsymbol{\beta}}$ of $m$ and ${e}^{\boldsymbol{\beta}}$, one needs to solve the following problem:
\begin{equation}
\label{eq:RLearnerCont}
    \widehat{\boldsymbol{\beta}} = \operatorname{argmin}_{ \boldsymbol{\beta} \in \mathcal{F}} \ \frac{1}{n} \sum_{i=1}^n \Big[  \left(Y_{\mathrm{obs},i} - \widehat{m}(\boldsymbol{X}_i) \right) - \boldsymbol{f}(\boldsymbol{X}_i)^{\top}\big(\boldsymbol{\beta}_{T_i} -\widehat{e}^{\boldsymbol{\beta}}(\boldsymbol{X}_i) \big) \Big]^2,
\end{equation}
where $\mathcal{F}$ is the space of candidate models $\boldsymbol{\beta}$. The previous problem corresponds to a classical OLS estimator and has, therefore, a unique solution.

If the space of candidate models $\mathcal{F}$ is separable, then the optimization problem can be divided into the following sub-problems:
\begin{gather*}
    \widehat{\boldsymbol{\beta}}_{t_0} = \operatorname{argmin} \ \frac{1}{n_k} \sum_{i \in \mathbf{S}_{t_0}} \Big[  \left(Y_{\mathrm{obs},i} - \widehat{m}(\boldsymbol{X}_i) \right) - \boldsymbol{f}(\boldsymbol{X}_i)^{\top}\big(\boldsymbol{\beta}_{t_0} -\widehat{e}^{\boldsymbol{\beta}}(\boldsymbol{X}_i) \big) \Big]^2 \\
    \vdots \\
    \widehat{\boldsymbol{\beta}}_{t_K} = \operatorname{argmin} \ \frac{1}{n_K} \sum_{i \in \mathbf{S}_{t_K}} \Big[  \left(Y_{\mathrm{obs},i} - \widehat{m}(\boldsymbol{X}_i) \right) - \boldsymbol{f}(\boldsymbol{X}_i)^{\top}\big(\boldsymbol{\beta}_{t_K} -\widehat{e}^{\boldsymbol{\beta}}(\boldsymbol{X}_i) \big) \Big]^2.
\end{gather*}

However, this approach does not consider the interactions between different $\widehat{\boldsymbol{\beta}}_{t}$ and is computationally heavy when the number of possible treatments $K$ becomes larger. It also requires specifying the family of models $\mathcal{F}$ and precise the dimension $p$ for Assumption \ref{assump:beta_Y}. 

There are two main differences between the generalized R-learner and the binarized: 1) In the binarized R-learner, $(\widehat{\boldsymbol{\beta}}_{t_k})_{k=1}^K$ may be solved separately but using a small sample ($\mathbf{S}_{t_k}$ instead of $ \mathbf{D}_{\mathrm{obs}}$); 2) The solution $(\widehat{\boldsymbol{\beta}}_{t_k})_{k=1}^K$ of the binarized R-learner is unique and is given by the OLS estimator of the binarized R-loss function.

\newpage

\section{Additional details about simulated analytical functions in section \ref{sec:6.1}}
\label{App:details_analytics}

\vspace{0.1in}

In this section, we consider a treatment $T$ with $K+1=10$ possible values in $\mathcal{T}=\{ t_k := \frac{k}{K}, k \in \{0,\ldots,K\} \}$, drawn from a uniform distribution, and the following outcome functions.

The linear model outcome for $X \in \mathbb{R}$:
\begin{equation}
\label{def:lin_model}
    Y(t) \mid X \sim \mathcal{N} \big( (1+t) X, \sigma^2\big).
\end{equation}

The multivariate hazard rate \citep{Imbens2000} outcome satisfies for $\boldsymbol{X} \in \mathbb{R}^5$:
\begin{equation}
\label{def:HR_model}
    Y(t) \mid \boldsymbol{X} \sim \mathcal{N} \big( t+\|\boldsymbol{X}\| \exp{(-t \|\boldsymbol{X}\|)} \ , \sigma^2\big).
\end{equation}

We compute the exact components of each model in the following subsections: the GPS $r$, the potential outcome models $\mu_t$ and the observed outcome model $m$.

\subsection{Computing nuisance components}
\label{App:D_1}

\vspace{0.08in}

\subsubsection*{The Generalized Propensity Score (GPS).}

In the first design (RCT), we sample $n$ units such that $T$ and $\boldsymbol{X}$ are independent. The true propensity score is known 
\begin{equation}
\label{eq:GPS_RCT}
    r(t,\boldsymbol{X})=\mathbb{P}(T=t) = 1/(K+1) \text{ for } t \in \mathcal{T}.
\end{equation}

In the second design (observational studies), we combine $K+2$ samples in a single sample of $n$ units. The first sample $\mathbf{D}_{K+1}$ contains $n_{K+1}=n/2$ units where the treatment is assigned randomly: $\boldsymbol{X}$ and $T$ are independent, $\mathbb{P}(T=t)=1/(K+1)$, $\boldsymbol{X} \sim \mathcal{N}(\boldsymbol{0},\mathbf{I}_5)$ when the hazard rate model is applied and $X \sim \mathcal{U}(0,1)$ when the linear model is applied. For $k=0,\ldots,K$, the sample $\mathbf{D}_k$ contains $n_k=n/(2(K+1))$ units and the distribution of $(\boldsymbol{X},T)$ does not respect a RCT setting. For the linear model, the joint distribution of $({X},T)$ is given by:
\begin{equation}
\label{eq:joint_lin}
    T=\frac{k}{K} \text{ and $X$ follows a uniform distribution $\mathcal{U}(I_k)$ with} \ I_k = \Big[\frac{k}{K+1},\frac{k+1}{K+1}\Big).
\end{equation}

For the hazard rate model, the joint distribution of $(\boldsymbol{X},T)$ is given by:
\begin{equation}
\label{eq:joint_HR}
\begin{aligned}
    &T=\frac{k}{K} \text{, $X_{1}$ follows a truncated standardized normal distribution on} \ I_k = \big[q_\frac{k}{K+1},q_\frac{k+1}{K+1}\big) \\
    &\qquad\qquad \text{ and $X_j$ follow a standardized normal distribution $\mathcal{N}(0,1)$ for $2 \leq j \leq 5$,}
\end{aligned}
\end{equation}
where $q_{\alpha}$ is the ${\alpha}$-quantile of the standardized normal distribution with $q_0=-\infty$ and $q_1=+\infty$ . This strategy of selecting preferentially only observations with certain characteristics is called \textit{preferential selection} sampling and creates thus a selection bias on observed data.

For all $k \in \{0,\ldots,K\}$, the true propensity score satisfies for the linear model:
\begin{equation}
\label{eq:true_GPS_lin}
    \begin{aligned}
        r(t_k, x) &= \left\{  \begin{array}{l l}
        \frac{K+2}{ 2(K+1)} & \quad \text{if $x \in I_k$,}\\[.15cm]
        \frac{1}{2(K+1)} & \quad \text{otherwise.}\\[.15cm] \end{array} \right. \\
    \end{aligned}
\end{equation}

and, for the hazard rate model, it satisfies for $\boldsymbol{x} \in \mathbb{R}^5$:
\begin{equation}
\label{eq:true_GPS_HR}
    \begin{aligned}
        r(t_k, \boldsymbol{x}) &= \left\{  \begin{array}{l l}
        \frac{K+2}{ 2(K+1)} & \quad \text{if $x_1 \in I_k$,}\\[.15cm]
        \frac{1}{2(K+1)} & \quad \text{otherwise.}\\[.15cm] \end{array} \right. \\
    \end{aligned}
\end{equation}

\begin{proof}
    We show proof for the hazard rate model with normal distribution. The proof remains the same for the linear model in a non-randomized setting.
    
    Let $A$ be a given event, and then 
    \begin{equation}
        \mathbb{P}(A) = \sum_{k=0}^{K+1} \frac{n_k}{n} \mathbb{P}_{k}(A),
    \end{equation}
    where $\mathbb{P}$ is the observed probability distribution of the combined sample and $\mathbb{P}_{k}$ denotes the probability measure induced by (\ref{eq:GPS_RCT}), (\ref{eq:joint_HR}) and the unconfoundedness assumption \ref{assump:unconfound}.
    
    Given the treatment $T=t_j$ and covariate vector $\boldsymbol{x}=(x,x_2,\ldots,x_5)$, we have
    \begin{equation}
        \begin{aligned}
        r(t_j,\boldsymbol{x}) &= \mathbb{P}(T = t_j \mid X_1 = x) \\
        &= \lim_{\delta \rightarrow 0} \mathbb{P}(T = t_j \mid X_1 \in [x,  x + \delta]) \\
        &= \lim_{\delta \rightarrow 0} \frac{\mathbb{P}\left(T=t_j, X_1 \in [x,  x + \delta] \right)}{\mathbb{P}\left( X_1 \in [x,  x + \delta] \right)}.
        \end{aligned}
    \end{equation}
    
    On the one hand,
    \begin{equation}
        \begin{aligned}
        \mathbb{P}(T=t_j, X_1 \in &[x,  x + \delta]  ) = \sum_{k=0}^{K+1} \frac{n_k}{n} \mathbb{P}_{k}(T=t_j, X_1 \in [x,  x + \delta] ) \\
        &= \frac{n_j}{n} \ \mathbb{P}_{j}(T=t_j, X_1 \in [x,  x + \delta] )
        + \frac{n_{K+1}}{n} \ \mathbb{P}_{K+1}(T=t_j, X_1 \in [x,  x + \delta] ) \\
        &= \frac{n_j}{n} \ \mathbb{P}_{j}(X_1 \in [x,  x + \delta] )
        + \frac{n_{K+1}}{n} \  \mathbb{P}_{K+1}(T=t_j)  \mathbb{P}_{K+1}(X_1 \in [x,  x + \delta] ) \\
        &= \frac{1}{2(K+1)} \ \mathbb{P}_{j}(X_1 \in [x,  x + \delta] ) + \frac{1}{2(K+1)}  \ \mathbb{P}_{K+1}(X_1 \in [x,  x + \delta] ).
        \end{aligned}
    \end{equation}
    
    For $x \in \mathbb{R}$, there exists a unique $j_0$ such that $x \in I_{j_0}$. For $\delta$ small enough, we have $[x,x+\delta] \subset I_{j_0}$ and, consequently, $[x,x+\delta]\cap I_j=\emptyset$ for all $j \neq j_0$. This implies:
    \begin{equation}
         \mathbb{P}_{j}(X_1 \in [x,  x + \delta] ) = \frac{\mathbb{P}_{K+1}\left( X_1 \in [x,  x + \delta] , X_1 \in I_{j}  \right)}{\mathbb{P}_{K+1}\left(X_1 \in I_{j}  \right)} = \frac{\mathbb{P}_{K+1}\left( X_1 \in [x,  x + \delta] \right)}{\mathbb{P}_{K+1}\left(X_1 \in I_{j}   \right)} \mathbf{1}\{ j = j_0 \}.
    \end{equation}
    
    Therefore,
    \begin{equation}
        \begin{aligned}
        \mathbb{P}(T=t_j, X_1 \in [x,  x + \delta]  ) &= \frac{1}{2(K+1)} \mathbb{P}_{K+1}(X_1 \in [x,x+\delta]) \ (  \frac{\mathbf{1}\{ j = j_0 \}}{\mathbb{P}_{K+1}(X_1 \in I_{j_0})}+ 1 ) \\ 
        &= \Big(  \frac{1}{2} \mathbf{1}\{ j = j_0 \}  + \frac{1}{2(K+1)} \Big) \mathbb{P}_{K+1}(X_1 \in [x,x+\delta]).
    \end{aligned}
    \end{equation}
    
    On the other hand,
    \begin{equation}
    \begin{aligned}
        \mathbb{P} (X_1 \in [x,  x + \delta]) &= \sum_{k=0}^{K+1} \frac{n_{k}}{n} \mathbb{P}_{k}(X_1 \in [x,  x + \delta] ) \\
        &= \frac{1}{2(K+1)} \sum_{k=0}^{K}  \frac{\mathbb{P}_{K+1}\left( X_1 \in [x,  x + \delta] , X_1 \in I_k  \right)}{\mathbb{P}_{K+1}\left(X_1 \in I_k  \right)}  + \frac{1}{2} \mathbb{P}_{K+1}( X_1 \in [x,  x + \delta] )  \\
        &= \frac{1}{2(K+1)} \ \frac{ \mathbb{P}_{K+1} (X_1 \in [x,x+\delta]) }{ \mathbb{P}_{K+1}(X_1\in I_{j_0}) } +  \frac{1}{2} \mathbb{P}_{K+1}(X_1 \in [x,x+\delta]) \\
        &= \frac{1}{2} \mathbb{P}_{K+1}(X_1 \in [x,x+\delta]) + \frac{1}{2} \mathbb{P}_{K+1}(X_1 \in [x,x+\delta]) \\
        &= \mathbb{P}_{K+1}(X_1 \in [x,x+\delta])
    \end{aligned}
    \end{equation}
    
    Finally,
    \begin{equation}
    \begin{aligned}
        r(t_j, \boldsymbol{x}) &= \lim_{\delta \rightarrow 0} \frac{\mathbb{P}\left(T=t_j, X_1 \in [x,  x + \delta] \right)}{\mathbb{P}\left( X_1 \in [x,  x + \delta] \right)} \\
        &=  \lim_{\delta \rightarrow 0} \frac{ \Big(  \frac{1}{2} \mathbf{1}\{ j = j_0 \}  + \frac{1}{2(K+1)} \Big) \mathbb{P}_{K+1}(X_1 \in [x,  x + \delta]) }{ \mathbb{P}_{K+1}(X_1 \in [x,  x + \delta]) } \\
        &= \frac{1}{2} \mathbf{1}\{ j = j_0 \}  + \frac{1}{2(K+1)}  \\ 
        &= \frac{ (K+1) \mathbf{1}\{ j = j_0 \} + 1 }{2(K+1)}
        \\
        &= \left\{  \begin{array}{l l}
        \frac{K+2}{ 2(K+1)} & \quad \text{if $x \in I_j$,}\\[.15cm]
        \frac{1}{2(K+1)} & \quad \text{otherwise.}\\[.15cm] \end{array} \right. \\
    \end{aligned}
    \end{equation}
    
\end{proof}

\subsubsection*{The potential outcome models.}

The potential outcome models are given directly by the conditional mean.
For the linear model, $\mu_t$ satisfies for all $t \in \mathcal{T}$ and $x \in [0,1]$:
\begin{equation}
    \mu_t({x}) = (1+t){x}.
\end{equation}

For the hazard rate model, $\mu_t$ is given by:
\begin{equation}
    \mu_t(\boldsymbol{x}) = t+\|\boldsymbol{x}\| \exp{(-t \|\boldsymbol{x}\|)}.
\end{equation}

\subsubsection*{The observed outcome models.}

For the linear model, the observed outcome model $m$ can be computed as:
\begin{equation}
    \begin{aligned}
        m({x}) &= \mathbb{E}( Y_{\mathrm{obs}} \mid {X} = {x}) \\
        &= \mathbb{E}( (1+T)\boldsymbol{X} \mid {X} = {x}) \\
        &= (1+\mathbb{E}( T\mid {X} = {x}) ) {x} \\
        &= \big(1+ \sum_{k=0}^{K} {r}(t_k,{x}) t_k  \big){x},
    \end{aligned}
\end{equation}
where $r$ is given by (\ref{eq:true_GPS_lin}).

and, for the hazard rate model, $m$ can be computed as:
\begin{equation}
    \begin{aligned}
        m(\boldsymbol{x}) &= \mathbb{E}( \mathbb{E} (Y_{\mathrm{obs}} \mid \boldsymbol{X}, T) \mid \boldsymbol{X} = \boldsymbol{x}) \\
        &= \mathbb{E}( T+\|\boldsymbol{X}\| \exp{(-T \|\boldsymbol{X}\|)}  \mid \boldsymbol{X} = \boldsymbol{x}) \\
        &= \mathbb{E}( T\mid \boldsymbol{X} = \boldsymbol{x}) + \|\boldsymbol{x}\| \ \mathbb{E}( \exp{(-T \|\boldsymbol{X}\|)} \mid \boldsymbol{X} = \boldsymbol{x}) \\
        &= \sum_{k=0}^{K} {r}(t_k,\boldsymbol{x}) t_k  + \sum_{k=0}^{K}  \|\boldsymbol{x}\| \ r(t_k,\boldsymbol{x}) \exp{(-t_k \|\boldsymbol{x}\|)},
    \end{aligned}
\end{equation}
where $r$ is given by (\ref{eq:true_GPS_HR}). 

\pagebreak

\subsection{Additional numerical results and plots.}
\label{App:D_2}

\vspace{0.08in}

In this section, we present the results of different simulations and scenarios for linear and hazard rate models with $K+1=10$, $n=2000$ for the linear model, and $n=10000$ for the Hazard rate model. In the randomized setting, the sample $\mathbf{D}_{\mathrm{obs}}$ is sampled randomly, and the propensity score is given by (\ref{eq:GPS_RCT}). In a non-randomized setting, the sample $\mathbf{D}_{\mathrm{obs}}$ is given by preferential selection as described in Section \ref{App:details_analytics} and the GPS is given by (\ref{eq:true_GPS_lin}). When we say that \textit{"the models' nuisance components are exact"}, we replace the expression of the estimators $\widehat{\mu}_t, \widehat{m}$ or $\widehat{r}$ with the expressions obtained in Section \ref{App:details_analytics}.

\subsection*{Linear model in a randomized setting.}
    
\begin{table*}[h!] 
\caption{\textbf{mPEHE} and \textbf{sdPEHE} for three different ML base-learners; Case where nuisance components are exact.}
\centering
\label{tab:Table2}
\begin{tabular}{ c  c  c  c }
  \toprule
  Meta-learner & XGBoost & RandomForest & Linear Model \\
  \midrule
  M-Learner & 2.23 (1.20) & 2.09 (1.08) & 0.087 (0.096) \\
  DR-Learner & 0.165 (0.034) & 0.140 (0.033) & 9.65 (7.84) $10^{-3}$\\
  X-Learner & \textbf{0.022 (0.004)} & \textbf{0.029 (0.004)} & \textbf{1.42 (1.46)} $\mathbf{10^{-3}}$\\
  \midrule
  RLin-Learner & \multicolumn{3}{c}{10.2 (8.42) $10^{-3}$ }\\
  \bottomrule
\end{tabular}
\end{table*}

\begin{table*}[h!] 
    \begin{center}
    \caption{\textbf{mPEHE} and \textbf{sdPEHE} for three different ML base-learners; Case where nuisance components are well-specified.}
    \centering
    \label{tab:Table3}
    \begin{tabular}{ c  c  c  c  c }
      \toprule
      Meta-learner & XGBoost & RandomForest & Linear Model \\
      \midrule
      T-Learner & 0.065 (0.019) & 0.041 (0.016) & 10.0 (8.37) $10^{-3}$\\
      S-Learner & \textbf{0.033 (0.018) } & 0.032 (0.028) & \textbf{3.03} (2.42) $\mathbf{10^{-3}}$\\
      \textit{Nv}X-Learner & 0.060 (0.019) & \textbf{0.037 (0.016)} &  10.0 (8.37) $10^{-3}$\\
      \midrule
      M-Learner & 1.25 (0.610) & 1.22 (0.621) & 0.201 (0.191) \\
      DR-Learner & 0.068 (0.019)  -- 0.063 (0.020) & 0.068 (0.018) -- 0.068 (0.018) & 10.0 (9.14) -- \textbf{5.27 (4.36)} $\mathbf{10^{-3}}$ \\
      X-Learner & 0.063 (0.020)  -- \textbf{0.033 (0.017)} & 0.045 (0.016) -- 0.061 (0.040) & 10.0 (8.37) -- {3.28 (2.98)} $10^{-3}$ \\
      \midrule
      RLin-Learner & 0.135 (0.130) & 0.137 (0.128) & 0.073 (0.063)\\
      \bottomrule
    \end{tabular}
    \end{center}
    \begin{tablenotes}[flushleft]
	    \scriptsize
	    \item For the DR- and  X-learners: $\mu_t$ are estimated by T-learning (left value) or S-learning (right value). 
    \end{tablenotes}
\end{table*}

\begin{table*}[h!] 
\caption{\textbf{mPEHE} and \textbf{sdPEHE} for three different ML base-learners; Case where the propensity score is misspecified.}
\centering
\label{tab:Table4}
\begin{tabular}{ c  c  c  c }
  \toprule
  Meta-learner & XGBoost & RandomForest & Linear Model \\
  \midrule
  M-Learner & 3.86 (2.95) & 3.68 (2.80) & 1.45 (0.99)\\
  DR-Learner & \textbf{0.145 (0.108)} & \textbf{0.245 (0.179)} & \textbf{0.014 (0.015)}\\
  X-Learner & \textcolor{lightgray}{0.033 (0.017)} & \textcolor{lightgray}{0.061 (0.040)} & \textcolor{lightgray}{3.28 (2.98) $10^{-3}$}\\
  \midrule
  RLin-Learner & 0.336 (0.272) & 0.338 (0.277) & 0.338 (0.215)\\
  \bottomrule
\end{tabular}
\end{table*}

\begin{table*}[h!] 
\caption{\textbf{mPEHE} and \textbf{sdPEHE} for three different ML base-learners; Case where the outcome models are misspecified.}
\centering
\label{tab:Table5}
\begin{tabular}{ c  c  c  c }
  \toprule
  Meta-learner & XGBoost & RandomForest & Linear Model \\
  \midrule
  M-Learner & \textcolor{lightgray}{1.25 (0.610)} & \textcolor{lightgray}{1.22 (0.621)} & \textcolor{lightgray}{0.201 (0.191)}\\
  DR-Learner & 0.811 (0.386) & 0.888 (0.378) & 0.308 (0.362)\\
  X-Learner & 0.304 (0.330) & 0.303 (0.330) & 0.275 (0.328)\\
  \midrule
  RLin-Learner & \multicolumn{3}{c}{\textbf{ 0.073 (0.062)}} \\
  \bottomrule
\end{tabular}
\end{table*}

\begin{table*}[h!] 
\caption{\textbf{mPEHE} and \textbf{sdPEHE} for three different ML base-learners; Case where nuisance components are misspecified.}
\centering
\label{tab:Table6}
\begin{tabular}{ c  c  c  c }
  \toprule
  Meta-learner & XGBoost & RandomForest & Linear Model \\
  \midrule
  M-Learner & 3.86 (2.95) & 3.68 (2.80) & 1.45 (0.99) \\
  DR-Learner & 1.87 (1.31) & 2.09 (1.48) & 0.828 (0.496) \\
  X-Learner & 0.304 (0.330) & 0.303 (0.330) & 0.275 (0.328) \\
  \midrule
  RLin-Learner & \multicolumn{3}{c}{\textbf{ 0.277 (0.178)}} \\
  \bottomrule
\end{tabular}
\end{table*}

\newpage

\subsection*{Linear model in non-randomized setting}

\begin{table*}[h!] 
\caption{\textbf{mPEHE} and \textbf{sdPEHE} for three different ML base-learners; Case where nuisance components are exact.}
\centering
\label{tab:Table7}
\begin{tabular}{ c  c  c  c }
  \toprule
  Meta-learner & XGBoost & RandomForest & Linear Model \\
  \midrule
  M-Learner & 3.69 (1.80) & 2.96 (1.47) & 0.153 (0.177) \\
  DR-Learner & 0.276 (0.081) & 0.206 (0.056) &  10.9 (8.47) $10^{-3}$\\
  X-Learner & \textbf{0.022 (0.004)} & \textbf{0.028 (0.004)} & \textbf{1.69 (1.11)}  $\mathbf{10^{-3}}$\\
  \midrule
  RLin-Learner & \multicolumn{3}{c}{11.0 (11.1) ${10^{-3}}$}\\
  \bottomrule
\end{tabular}
\end{table*}

\begin{table*}[h!] 
    \begin{center}
    \caption{\textbf{mPEHE} and \textbf{sdPEHE} for three different ML base-learners; Case where nuisance components are well-specified.}
    \centering
    \label{tab:Table8}
    \begin{tabular}{ c  c  c  c }
      \toprule
      Meta-learner & XGBoost & RandomForest & Linear Model \\
      \midrule
      T-Learner & 0.067 (0.023) & 0.043 (0.016) & 10.5 (10.0) $10^{-3}$\\
      \textit{Reg}T-Learner & 0.059 (0.021) & 0.042 (0.016) & 13.0 (11.1) $10^{-3}$\\
      S-Learner & \textbf{0.033 (0.018)} & 0.060 (0.055) & \textbf{6.46 (5.11)} $\mathbf{10^{-3}}$\\
      \textit{Nv}X-Learner & 0.062 (0.023) & \textbf{0.039 (0.017)} & 9.56 (10.0) $10^{-3}$\\
      \midrule
      M-Learner & 1.35 (0.82) & 1.14 (0.72) & 0.196 (0.153) \\
      DR-Learner & 0.065 (0.022) -- 0.065 (0.027) & 0.069 (0.026) -- 0.096 (0.056) & 13.0 (11.1) -- 8.17 (6.30) $10^{-3}$\\
      X-Learner & 0.059 (0.021) -- \textbf{0.034 (0.017)} & 0.046 (0.016) -- 0.084 (0.058) & 14.7 (11.6) -- 6.49 (5.23) $10^{-3}$\\
      \midrule
      RLin-Learner & 0.155 (0.137) & 0.124 (0.114) & 0.108 (0.097)\\
      \bottomrule
    \end{tabular}
    \end{center}
    \begin{tablenotes}[flushleft]
	    \scriptsize
	    \item For the DR- and  X-learners: $\mu_t$ are estimated by T-learning (left value) or S-learning (right value). 
    \end{tablenotes}
\end{table*}

\subsection*{Hazard rate model in randomized setting}

\begin{table*}[h!] 
\caption{\textbf{mPEHE} and \textbf{sdPEHE} for three different ML base-learners; Case where nuisance components are exact.}
\centering
\label{tab:Table9}
\begin{tabular}{ c  c  c  c }
  \toprule
  Meta-learner & XGBoost & RandomForest & Linear Model \\
  \midrule
  M-Learner & 4.27 (1.45) & 4.21 (1.28) & 0.529 (0.188)\\
  DR-Learner & 0.127 (0.022) & 0.144 (0.044) &  0.106 (0.094)\\
  X-Learner & \textbf{0.045 (0.025)} & \textbf{0.087 (0.049)} & \textbf{0.106 (0.094)}\\
  \midrule
  RLin-Learner & \multicolumn{3}{c}{0.107 (0.094)}\\
  \bottomrule
\end{tabular}
\end{table*}

\begin{table*}[h!]
    \begin{center}
    \caption{\textbf{mPEHE} and \textbf{sdPEHE} for three different ML base-learners; Case where nuisance components are well-specified.}
    \centering
    \label{tab:Table10}
    \begin{tabular}{ c  c  c  c }
      \toprule
      Meta-learner & XGBoost & RandomForest & Linear Model \\
      \midrule
      T-Learner & 0.175 (0.046) & 0.263 (0.144) & \textbf{0.113 (0.091)} \\
      S-Learner & 0.159 (0.048) & 0.260 (0.130) & 0.662 (0.421) \\
      \textit{Nv}X-Learner & 0.176 (0.091) & 0.313 (0.188) & \textbf{0.113 (0.092)}  \\
      \midrule
      M-Learner & 1.57 (0.471) & 1.79 (0.453) & 0.824 (0.522)\\
      DR-Learner & 0.165 (0.049) -- 0.159 (0.047) & 0.281 (0.144) -- 0.275 (0.137) & 0.114 (0.094) -- 0.464 (0.286) \\
      X-Learner & 0.163 (0.057) -- \textbf{0.154 (0.051)} & 0.279 (0.157) -- 0.279 (0.146) & \textbf{0.113 (0.092)} -- 0.644 (0.380)\\
      \midrule
      RLin-Learner & 0.245 (0.136) & 0.241 (0.136) & 0.717 (0.450)\\
      \bottomrule
    \end{tabular}
    \end{center}
    \begin{tablenotes}[flushleft]
	    \scriptsize
	    \item For the DR- and  X-learners: $\mu_t$ are estimated by T-learning (left value) or S-learning (right value). 
    \end{tablenotes}
\end{table*}

\newpage

\subsection*{Hazard rate model in non-randomized setting}

\begin{table*}[h!] 
\caption{\textbf{mPEHE} and \textbf{sdPEHE} for three different ML base-learners; Case where nuisance components are exact.}
\centering
\label{tab:Table11}
\begin{tabular}{ c  c  c  c }
  \toprule
  Meta-learner & XGBoost & RandomForest & Linear Model \\
  \midrule
  M-Learner & 6.28 (1.88) & 5.74 (1.60) & 3.72 (1.42) \\
  DR-Learner & 0.138 (0.029) & 0.139 (0.044) &  \textbf{0.110 (0.097)}\\
  X-Learner &  \textbf{0.044 (0.025)} & \textbf{0.087 (0.050)} & \textbf{0.110 (0.097)}\\
  \midrule
  RLin-Learner & \multicolumn{3}{c}{0.299 (0.176)}\\
  \bottomrule
\end{tabular}
\end{table*}

\begin{table*}[h!]
    \begin{center}
    \caption{\textbf{mPEHE} and \textbf{sdPEHE} for three different ML base-learners; Case where nuisance components are well-specified.}
    \centering
    \label{tab:Table12} 
    \begin{tabular}{ c  c  c  c }
      \toprule
      Meta-learner & XGboost & RandomForest & Linear Model \\
      \midrule
      T-Learner & 0.183 (0.039) & 0.286 (0.155) & 0.129 (0.094) \\
      \textit{Reg}T-Learner & 0.176 (0.044) & 0.286 (0.155)  & \textbf{0.121 (0.098)}  \\
      S-Learner & 0.176 (0.056) & 0.306 (0.153) & 0.671 (0.428) \\
      \textit{Nv}X-Learner & 0.190 (0.096) & 0.336 (0.200) & 0.129 (0.094)  \\
      \midrule
      M-Learner & 1.61 (0.505) & 1.58 (0.472) & 0.906 (0.557)\\
      DR-Learner & 0.168 (0.045) - 0.178 (0.048) &  0.304 (0.158) -- 0.322 (0.162) & \textbf{0.121 (0.098)} -- 0.518 (0.327) \\
      X-Learner & \textbf{0.167 (0.053)} -- 0.172 (0.057)  &  0.302 (0.169) -- 0.332 (0.167)  & 0.120 (0.094) -- 0.652 (0.388)\\
      \midrule
      RLin-Learner & 0.231 (0.081) & \textbf{0.186 (0.123)} &  1.05 (0.651) \\
      \bottomrule
    \end{tabular}
    \end{center}
    \begin{tablenotes}[flushleft]
	\scriptsize
	\item For the DR- and X-learners: $\mu_t$ are estimated by T-learning (left value) or S-learning (right value). 
    \end{tablenotes}
\end{table*}

\newpage

\subsection{Asymptotic performances when $K$ increases.}
\label{App:D_3}

\vspace{0.08in}

In this subsection, we consider the effect of increasing $K$ on the hazard rate function with XGBoost. For each value $K$, we sample $J=10$ different non-randomized samples following the \textit{preferential selection} as defined previously in Appendix D.1. The \textbf{mPEHE} is then computed by averaging the \textbf{mPEHE} over the $J=10$ samples. The results are drawn in the figure below.

\begin{figure*}[h!]
    \centering
     \subfigure[]{%
        \includegraphics[width=0.48\linewidth, height=0.75\textwidth]{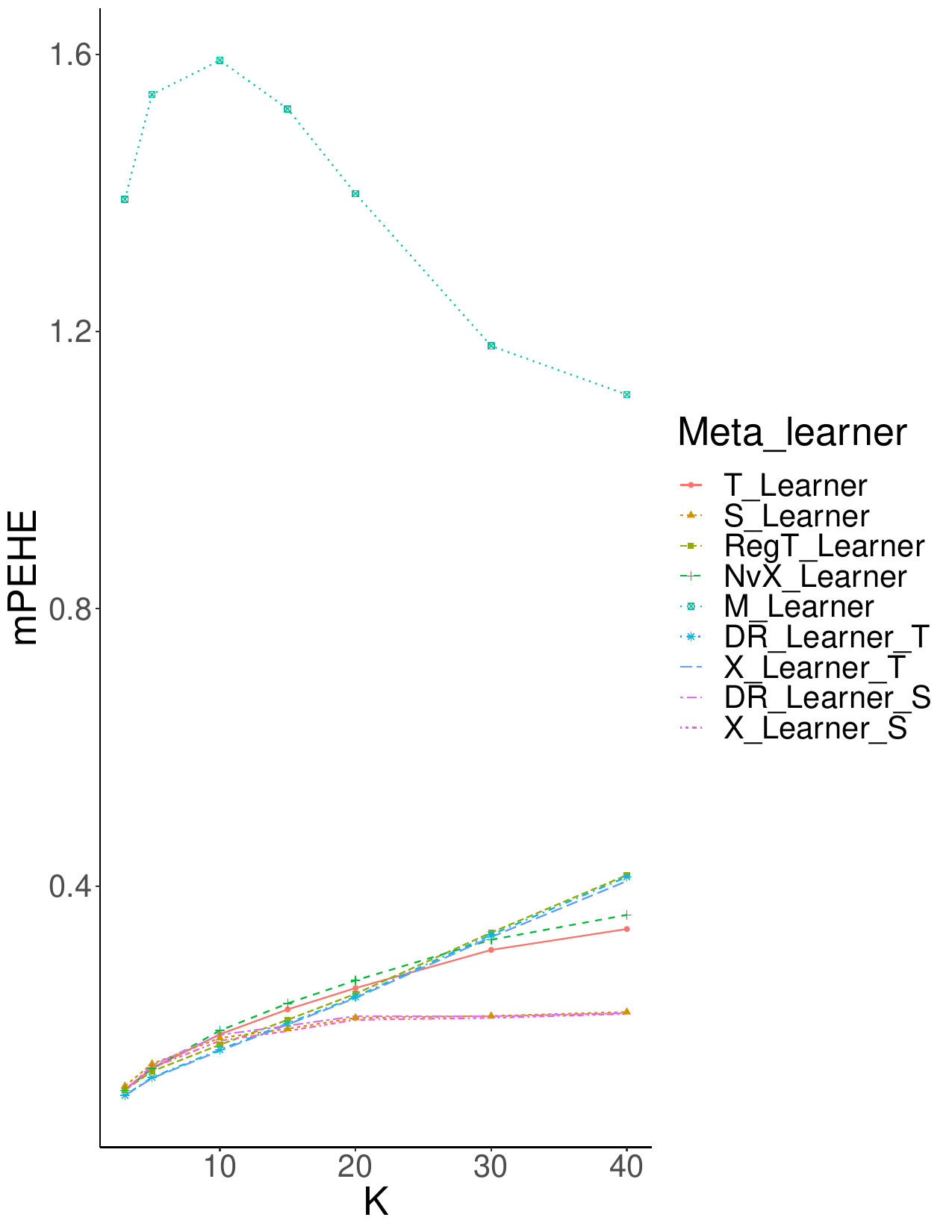}\label{fig:K_All_XGB}
        }
     \subfigure[]{
        \includegraphics[width=0.48\linewidth, height=0.75\textwidth]{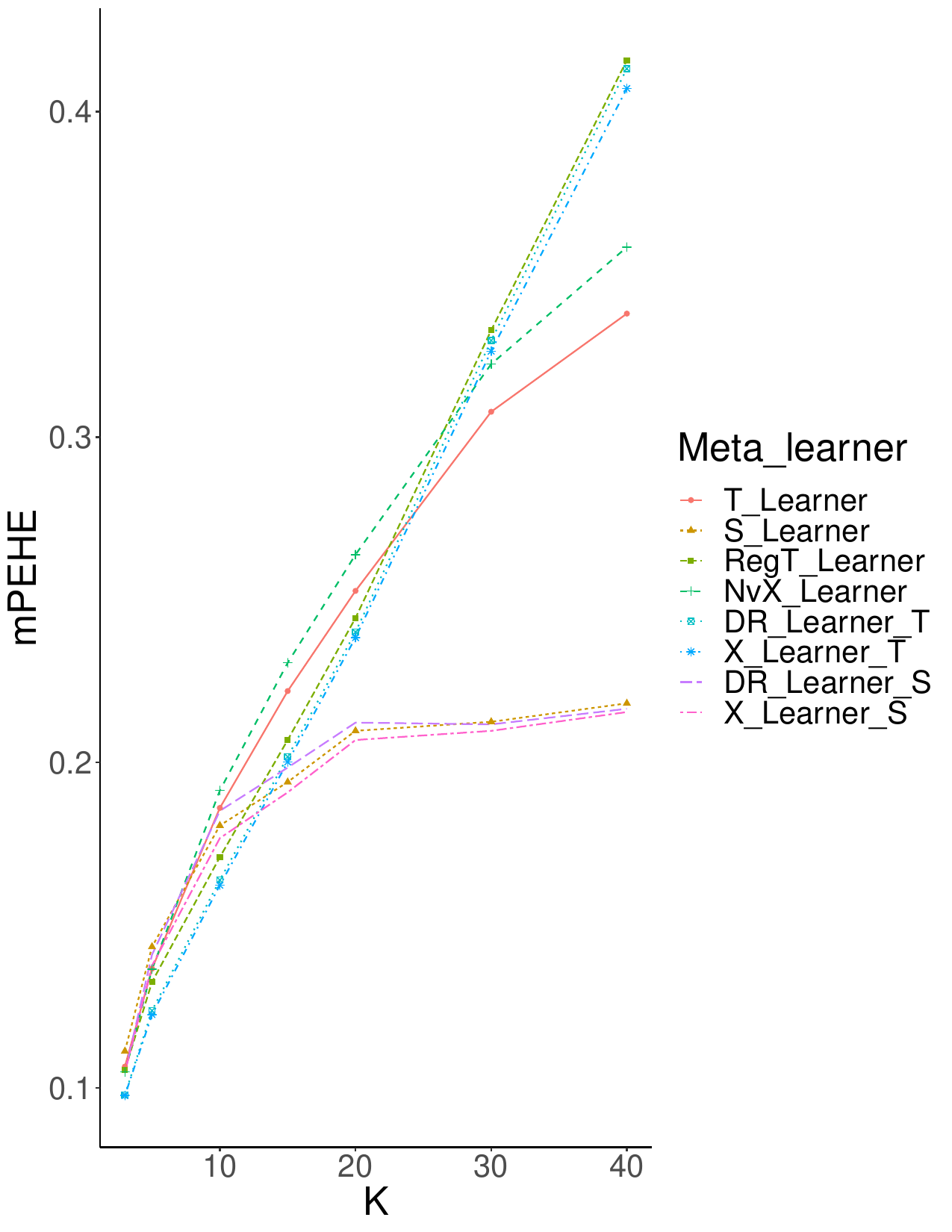}\label{fig:K_noM_T_XGB}
        }
    \caption{Variation of meta-learners' performances as functions of the number of possible treatment values $K$ for the hazard rate function in an observational design setting. \subref{fig:K_All_XGB}: All meta-learners; \subref{fig:K_noM_T_XGB}: Without the M-learner}
    \label{fig:K_asymp_XGB}
\end{figure*}

When we consider the effect of increasing $K$ on the hazard rate function with a linear model (with $p=2$), we notice the expected effect of $K$ on the M-learner: The error bound is increasing with $K$. This means that the behaviour of the M-learner with XGBoost can be explained by the over-fitting of the base-learner.

\begin{figure*}[h!]
    \centering
     \subfigure[]{%
        \includegraphics[width=0.48\linewidth, height=0.75\textwidth]{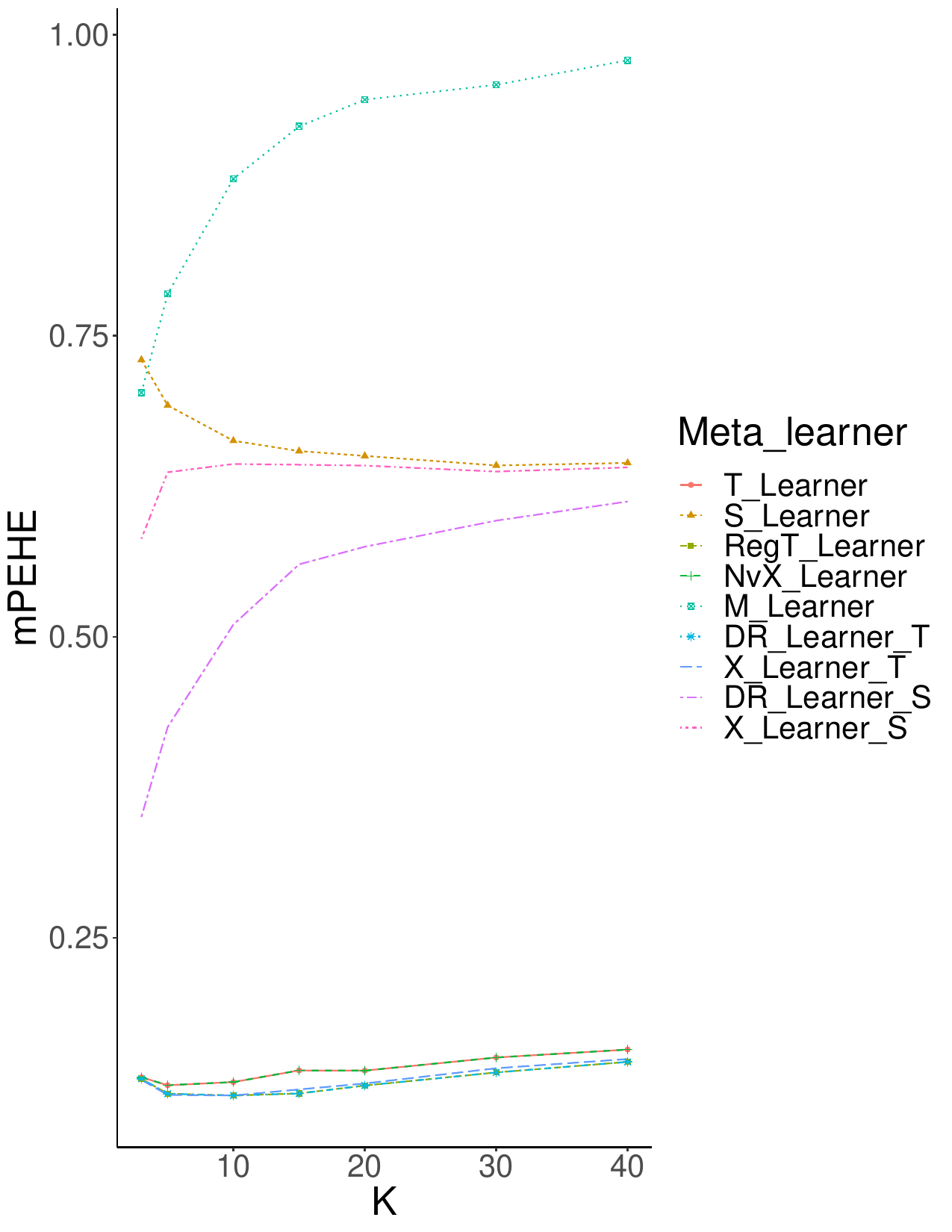}\label{fig:K_All_lm}
        }
     \subfigure[]{
        \includegraphics[width=0.48\linewidth, height=0.75\textwidth]{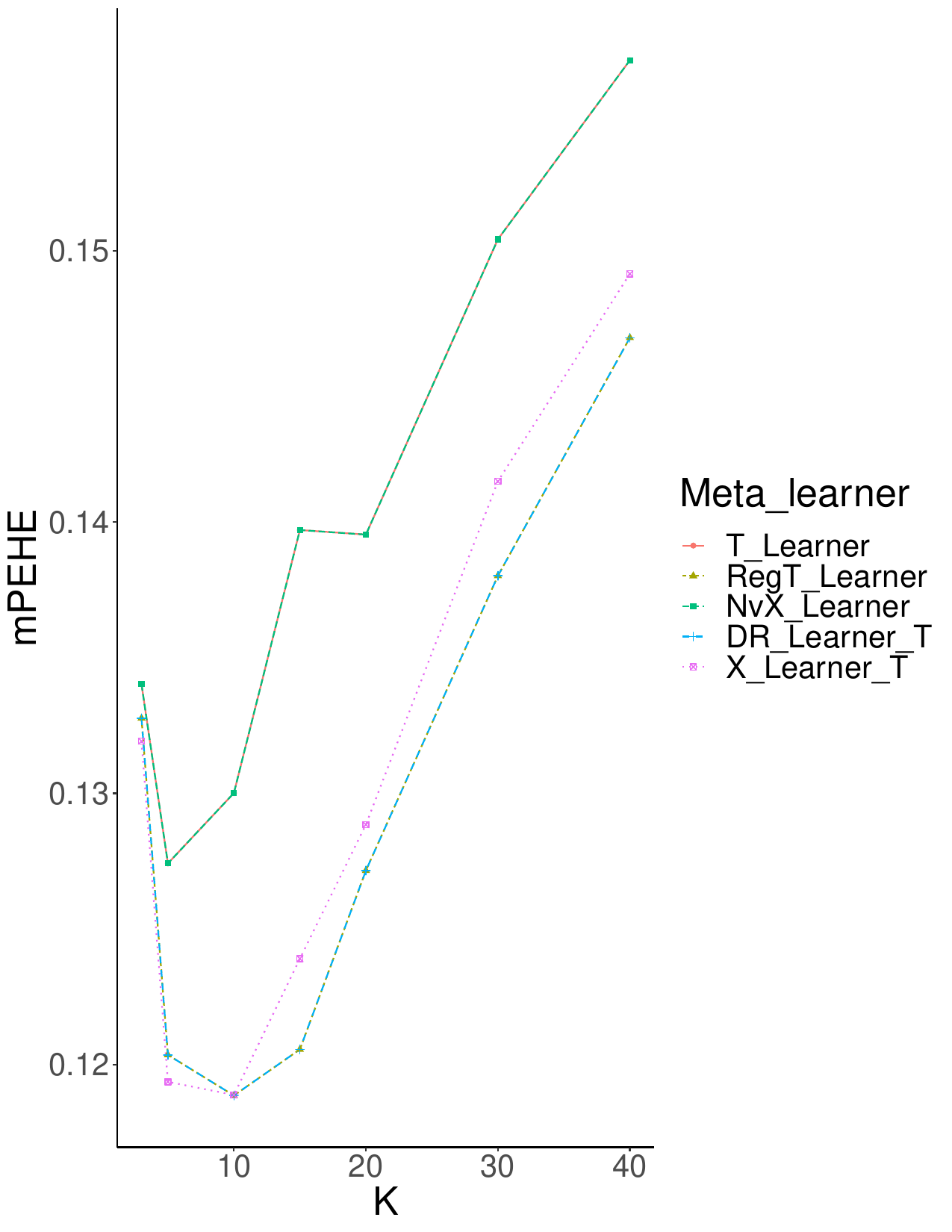}\label{fig:K_noM_T_lm}
        }
    \caption{Variation of meta-learner's performances when the number of possible treatment values $K$ for the hazard rate function in observational design setting with a linear model. \subref{fig:K_All_lm}: All meta-learners; \subref{fig:K_noM_T_lm}: Focus on the T-learning methods.}
    \label{fig:K_asymp_lm}
\end{figure*}

\newpage

\section{Description of the semi-synthetic dataset.}
\label{App:EGS_data}

\vspace{0.1in}

\subsection*{Motivation}

The difficulty in evaluating a causal model's performance in real-world applications motivates the need to create a semi-synthetic dataset. In this subsection, we consider a multistage fracturing Enhanced Geothermal System (EGS). 

Enhanced Geothermal Systems (EGS) are geothermal wells that generate geothermal energy by creating fluid connectivity in low-permeability conductive rocks through hydraulic, thermal, or chemical stimulation. The EGS concept involves extracting heat by constructing a subsurface fracture system to which water can be added via injection wells. Indeed, rocks are permeable due to slight fractures and pore spaces between mineral grains, and the injected water is heated by contact with the rock and returns to the surface through production wells. Moreover, Enhanced geothermal systems (EGS) have a high potential for developing and supplying renewable energy sources that are more efficient and cheaper than traditional hydrocarbon resources.

For energy companies, the goal is to optimize the design of the geothermal well (fracture spacing, Lateral Length etc.) to generate the maximum geothermal energy. However, some economic and operational problems present challenges: On the one hand, if the fractures are too small or too few, rocks will not be exploited sufficiently. On the other hand, if the number of fractures in a given rock is too high, the fractures may cool down faster. We would have a costly design that will not maximize the extracted heat.

We assume that the heat extraction performance of the EGS satisfies the following physical model:
\begin{equation}
\label{eq:Well_model}
    Q_{well} = Q_{fracture} \times {\ell_L}/{d} \times \eta_{d},
\end{equation}
where $Q_{well}$ is the heat extraction performance delivered by the well (output), $Q_{fracture}$ is the \textit{unknown} heat extraction performance from a single fracture that can be generated using a complex seven-parameter model, including reservoir characteristics and fracture design, $\ell_L$ is the Lateral Length of the well, $d$ is the average spacing between two fractures and $\eta_{d}$ is the stage efficiency penalizing the individual contribution when fractures are close to each other. We refer to Figure \ref{fig:EGS_DAG} for a graphical description of the EGS and its inputs/output. 

Finally, the model in (\ref{eq:Well_model}) respects the unconfoundedness assumption \ref{assump:unconfound}, and we can control all its variables in the simulations. We note that, in practice, all inputs are continuous with a given density. However, we discretize these variables in their input space to create a full factorial design.

\subsection*{Description of the data-set}

This section describes the data-generating process of our semi-synthetic dataset simulating the heat delivered by a multistage fracturing EGS. The process involved the creation of a conceptual reservoir model and the modelling of multiple wells' completion scenarios. The output (heat extraction performance) obtained from physics-based simulation experiments was tabulated with inputs in the semi-synthetic dataset.

The input data for the model were fabricated to ease confidentiality and non-disclosure information issues. However, data has been selected from reliable sources such as field observations, journals and books to be within the range of interest. Doing so allowed the building of a plain but representative reservoir model that would provide realistic results of an EGS.

The heat extraction performance from a single fracture (${Q}_{fracture}$) is determined using fracture length, fracture height, fracture width, fracture permeability, reservoir porosity, reservoir permeability and pore pressure. Modelling and simulation work were done using preprocessor and reservoir simulation tools PETREL and ECLIPSE.

The four physical parameters of the fracture were investigated, and the list of values used for each parameter can be observed in Table \ref{tab:Table_Frac}. In the end, $10 \times 10 \times 2 \times 3 = 600$ fracture's simulation cases have been realized.

\begin{table}[h!] 
\caption{Fracture parameters and their range of variation
for simulations.}
\vskip 0.10in
\centering
\label{tab:Table_Frac}
\begin{tabular}{ c  c }
  \toprule
  Variable & Range of variation  \\
  \midrule
    Fracture length (ft) & $[100, 1000]$ by a step of $100$ ft \\ 
    Fracture height (ft) & $[50, 500]$ by a step of 50 ft \\
    Fracture width (in) & $\{0.1, 0.2\}$ \\
    Fracture Permeability (md) & $\{30000, 85000, 19000\}$ \\
  \bottomrule
\end{tabular}
\end{table}

To emulate distinct reservoir schemes, it was decided to vary three main parameters; porosity, permeability and pore pressure. For porosity and permeability, the simulator takes the minimum and maximum values and estimates the physical properties across the reservoir. Three different multipliers were applied to define three (Low, Base and High) scenarios. Concerning pore pressure, three specific values were defined to simulate under-normal, normal (base) and overpressure (high) gradient conditions.
Therefore, $3 \times 3 \times 3= 27$ possible scenarios were defined. Table \ref{tab:Table_reservoir} displays the range of minimum and maximum values for the three reservoir parameters to be varied. 

\begin{figure}[h!]
    \begin{center}
    \centerline{\includegraphics[width=16cm, height=10cm]{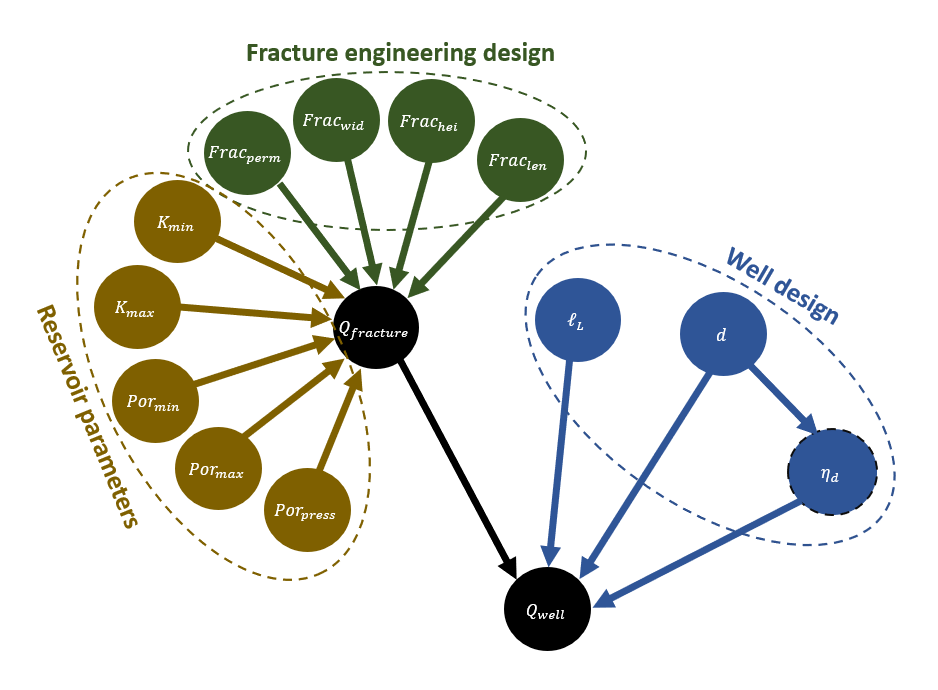}}
    \caption{The Causal DAG associated with the multistage EGS. Nodes in yellowish brown represent the reservoir characteristics, they can only be simulated, but in reality, we cannot intervene in these variables. Nodes in Dark green represent the fracture design. Engineers control them, and intervening in them is possible whenever there is a need to make a new fracture in the well. Nodes in blue represent a well's design and can be chosen arbitrarily by engineers or statisticians. Nodes in black denote the outputs. $ Q_{fracture}$ is only given by the simulator, whereas $Q_{well}$ is given by the physical model in (\ref{eq:Well_model}). Note that this graph contains nine nodes, but both $K_\mathrm{min}$ and $K_\mathrm{max}$ represent the same physical parameter $K$, and the same remark is valid for $Por_\mathrm{min}$ and $Por_\mathrm{max}$.}
    \label{fig:EGS_DAG}
    \end{center}
\end{figure}

\begin{table}[h!] 
\caption{Reservoir parameters and their range of variation for simulations.}
\vskip 0.10in
\label{tab:Table_reservoir}
\centering
\begin{tabular}{ c  c }
    \toprule
    Variable & Range of variation  \\
    \midrule
    $(\text{K}_\mathrm{min}, \text{K}_\mathrm{max})$ (md) & $\{ (0.0054, 0.0157), (0.054, 0.157), (0.109, 0.314)\}$ \\ 
    $(\text{Por}_\mathrm{min}, \text{Por}_\mathrm{max})$ (dec) & $\{ (0.0054, 0.0157), (0.054, 0.157), (0.109, 0.314)\}$ \\
    Pore pressure (psi) & $\{5000, 7000, 9000\}$ \\
  \bottomrule
\end{tabular}
\end{table}

By combining different reservoir scenarios with single fracture simulations, we obtained a single dataset with 16,200 possible cases for a fracture in a reservoir then we simulated the heat extraction performance for each experiment. Simulation's results were tabulated in the dataset "\textit{Single\_Fracture\_Simulation\_Cases\_16200.csv}".

The next step is to define well characteristics (lateral lengths and fracture spacing) to evaluate the heat extraction performance of the well when reservoir and fracture properties are not changed.

\begin{table}[h!] 
\caption{Well parameters and their range of variation.}
\vskip 0.10in
\label{tab:Table_well}
\centering
\begin{tabular}{ c  c }
    \toprule
    Variable & Range of variation \\
    \midrule
    Lateral length (ft) & $[2000, 14000]$ by a step of $1000$ ft \\ 
    Fracture spacing (ft) & $[100, 500]$ by a step of $100$ ft \\
  \bottomrule
\end{tabular}
\end{table}

Regarding the spacing efficiency coefficient, this coefficient was used to model interactions between fractures and penalize the heat extraction performance of a single fracture in the presence of other close fractures, that is, when the spacing between two fractures is small. Indeed, if the fractures are spaced too close, there may not be enough thermal energy in the rock to heat the water, decreasing the heat extraction efficiency. Modelling this efficiency led to the efficiency table "\textit{Fracture\_Efficency.csv}" that describes what would be the well's heat performance behaviour with respect to the fracture spacing selected. Based on this table, one can interpolate the efficiency to draw the curve (see Figure \ref{fig:stage_eff}) and thus obtain the spacing efficiency coefficient for any desired value fracture spacing.

\begin{figure}[h!]
    \vskip 0.2in
    \begin{center}
    \centerline{\includegraphics[width=12cm, height=8cm]{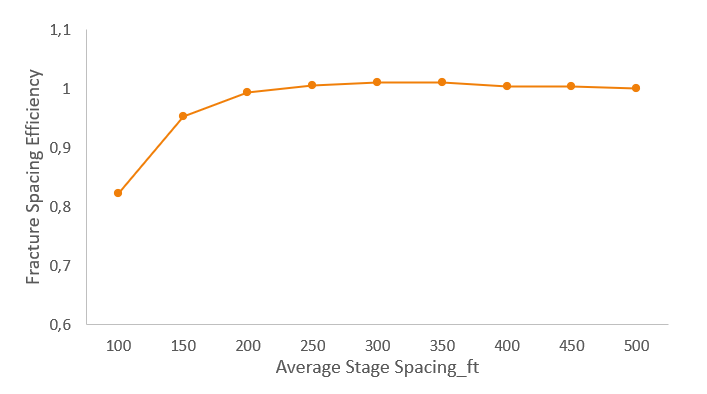}}
    \caption{Cross plot between fracture spacing efficiency and average stage spacing.}
    \label{fig:stage_eff}
    \end{center}
\end{figure}

The final generation of the semi-synthetic dataset "\textit{Main\_Dataset.csv}" was achieved by combining two main tables created using the R programming language. This table allows calculating the heat performance of a well for any lateral length and fracture spacing between 500 ft and 100 ft with the associated spacing efficiency coefficient defined in the efficiency table, following the physical model in (\ref{eq:Well_model}).

The three datasets are available at \url{https://github.com/nacharki/multipleT-MetaLearners}.

Finally, we emphasize that this study's design methodology focused on generating a semi-synthetic dataset using reservoir numerical simulation and creating a new benchmarking dataset for comparing and validating causal inference methods. Indeed, following the last step of forming the final dataset "\textit{Main\_Dataset.csv}", any user can define different distributions (with different values) on lateral lengths in the range $[2000, 14000]$ and fracture spacing in range $[100, 500]$, pick-up the corresponding spacing efficiency coefficients using the curve drawn in Figure \ref{fig:stage_eff} and generate a new semi-synthetic dataset by extrapolating them with "\textit{Single\_Fracture\_Simulation\_Cases\_16200.csv}" dataset. 

\subsection*{The creation of a non-randomized biased dataset.}

The idea of this step was to create a collection of biased data from the main semi-synthetic dataset to emulate observational data found in real-world situations. For example, geothermal wells with larger lateral lengths are likely to have more fractures (expensive wells are located in better geological areas). The opposite is seen for smaller wells that tend to be associated with fewer fractures. This situation creates a discrepancy between what engineers expect from physical models and what they observe in the field data.
The biased data, with 9,992 observations, was generated by following the \textit{preferential selection} strategy from the main dataset. Figure \ref{fig:GTM_vs_Biased} shows the difference between the \textit{real} heat extraction performance of the EGS and the observed heat extraction performance on the field: low (under-estimated) heat performance for small wells and high (over-estimated) heat performance for large wells.

\begin{figure}[h!]
    \vskip 0.2in
    \begin{center}
    \centerline{\includegraphics[width=12cm, height=8cm]{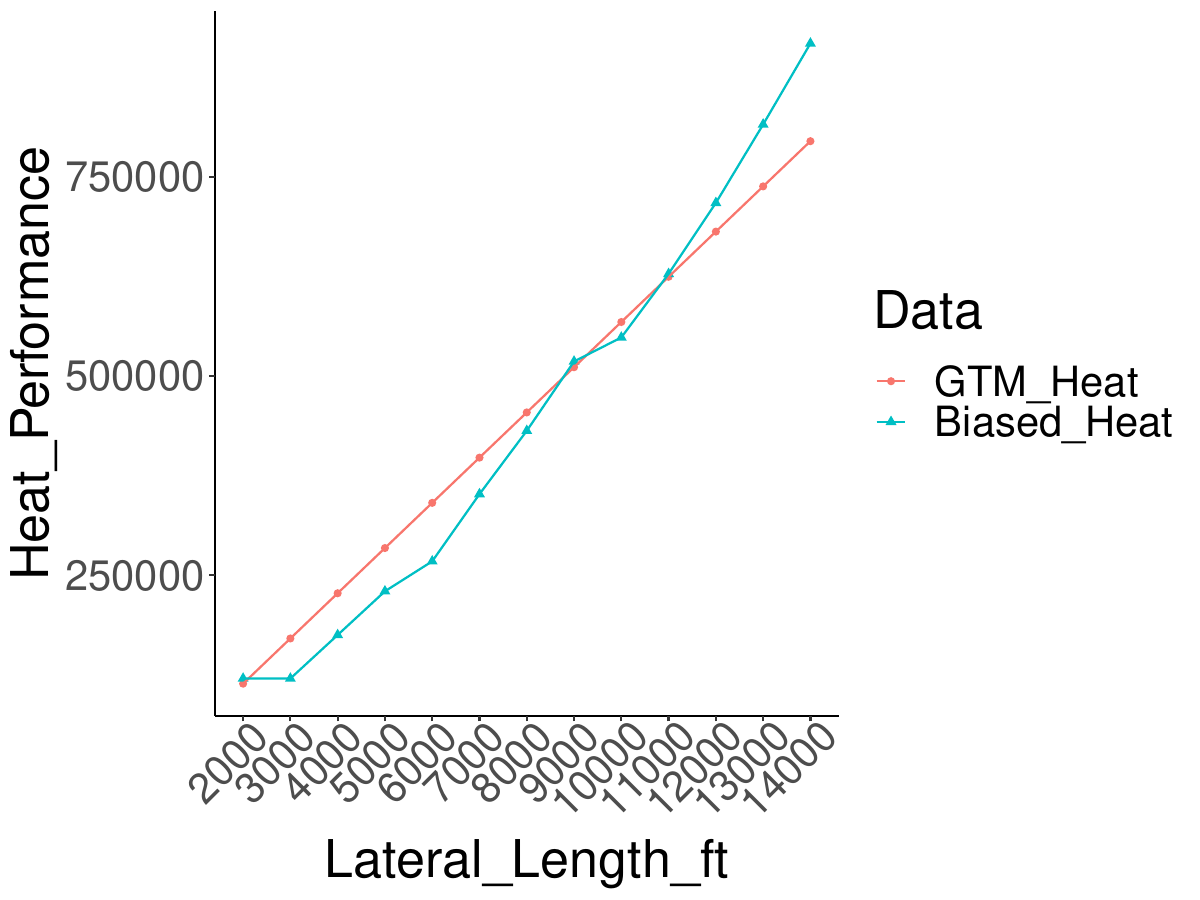}}
    \caption{An illustration of selection bias on the heat performance. Red line: The heat extraction performance on the main dataset (i.e. Ground Truth Model). Blue line: The heat performance on the biased dataset (i.e. observed response).  }
    \label{fig:GTM_vs_Biased}
    \end{center}
\end{figure}

\end{document}